\newif\ifDoubleColumn
\newif\ifIEEETN
\newcommand{\revised}[1]{#1}
\newtheorem{theorem}{Theorem}
\newtheorem{lemma}[theorem]{Lemma}
\newtheorem{definition}{Definition}
\newtheorem{proposition}[theorem]{Proposition}
\theoremstyle{remark}
\newcounter{assumption}
\renewcommand{\theassumption}{A\arabic{assumption}}
\newenvironment{assumption}[1][]{\begin{trivlist}\item[] \refstepcounter{assumption}%
 \textbf{Assumption\ \theassumption\ #1} }{
 \ifvmode\smallskip\fi\end{trivlist}}
\renewcommand{\AA}{{\mathcal{A}}}
\newcommand{\D}{{\mathcal D}}
\newcommand{\XX}{{\mathcal{X}}}
\newcommand{\XA}{\XX\times\AA}
\newcommand{\kfun}{\textsc{k}}
\newcommand{\beq}{\begin{equation}}
\newcommand{\eeq}{\end{equation}}
\newcommand{\beqa}{\begin{eqnarray}}
\newcommand{\eeqa}{\end{eqnarray}}
\newcommand{\beqan}{\begin{eqnarray*}}
\newcommand{\eeqan}{\end{eqnarray*}}
\newcommand{\ben}{\begin{eqnarray*}}
\newcommand{\een}{\end{eqnarray*}}
\newcommand{\norm}[1]{\left\Vert#1\right\Vert}
\newcommand{\smallnorm}[1]{\Vert#1\Vert}
\newcommand{\Real}{\mathbb R}
\newcommand{\Prob}[1]{{\mathbb P}\left\{#1\right\}}
\newcommand{\EE}[1]{{\mathbb E}\left[#1\right]}
\newcommand{\Var}[1]{{\mathrm{Var}}\left[#1\right]}
\newcommand{\one}[1]{{\mathbb I}_{\{#1\}}}
\newcommand{\eps}{\varepsilon}
\newcommand{\ra}{\rightarrow}
\newcommand{\FF}{{\mathcal{F}}}
\newcommand{\GG}{{\mathcal{G}}}
\newcommand{\cN}{{\mathcal{N}}}
\newcommand{\argmin}{\mathop{\textrm{argmin}}}
\newcommand{\argmax}{\mathop{\textrm{argmax}}}
\newcommand{\Qmax}{Q_{\textrm{max}}}
\newcommand{\Rmax}{R_{\textrm{max}}}
\newcommand{\eqdef}{\triangleq}
\newcommand{\actionnum}{{|\AA|}}
\newcommand{\MM}{\mathcal{M}}
\newcommand{\Qpi}{Q^\pi}
\newcommand{\Qhat}{\hat{Q}}
\newcommand{\Dn}{\mathcal{D}_n}
\newcommand{\Qopt}{Q^*}
\newcommand{\Vpi}{V^\pi}
\newcommand{\Vopt}{V^*}
\newcommand{\cset}[2]{\left\{\,#1\,:\,#2\,\right\}}
\newcommand{\gap}{\mathbf{g}}
\newcommand{\PKernel}{\mathcal{P}}
\newcommand{\Argmax}{\mathop{\textrm{Argmax}}}
\newcommand{\ProbWRT}[2]{{\mathbb P}_{#1}\left(#2\right)}
\newcommand{\One}[1]{{\mathbb I}{\{#1\}}}
\newcommand{\pihat}{{\hat{\pi}}}
\newcommand{\piopt}{{\pi^*}}
\newcommand{\QpiPrime}{{Q^{\pi'}}}
\newcommand{\QpiPrimehat}{{\hat{Q}^{\pi'}}}
\newcommand{\gapQPrime}{{\mathbf{g}_\QpiPrime}}
\newcommand{\gapQPrimehat}{{\mathbf{g}_\QpiPrimehat}}
\newcommand{\pioptInPi}{ {\pi^*_\Pi} }
\def\Pr{{\mathbf{P}}}
\newcommand{\Dkx}{\mathcal{D}^{(k)}(x)}
\newcommand{\mnec}{\ensuremath{\eta_{\mathrm{v}}}}
\newcommand{\mnea}{\ensuremath{\eta_{\mathrm{\pi}}}}
\begin{document}
%
\ifIEEETN
\title{Classification-based Approximate Policy Iteration}
\else
\title{Classification-based Approximate Policy Iteration: Experiments and Extended Discussions}
\fi
%
%
%

\author{Amir-massoud Farahmand,
        Doina Precup, \ifDoubleColumn \else \\ \fi
        Andr\'e~M.S. Barreto,
        Mohammad Ghavamzadeh
\thanks{A.M. Farahmand was with the School of Computer Science, McGill University, Montreal, Canada. He is currently with the Robotics Institute, Carnegie Mellon University, Pittsburgh, USA (email: amirmf@andrew.cmu.edu).}
\thanks{D. Precup is with the School of Computer Science, McGill University, Montreal, Canada (email: dprecup@cs.mcgill.ca).}
\thanks{A.M.S. Barreto was with the School of Computer Science, McGill University, Montreal, Canada. He is currently with the National Laboratory for Scientific Computing (LNCC), Petr\'opolis, Brazil (e-mail: amsb@lncc.br).}
\thanks{M.~Ghavamzadeh is with Adobe Research, USA on leave of absence from INRIA Lille, France (email: mohammad.ghavamzadeh@inria.fr).}
%
\ifIEEETN \thanks{Manuscript received on November 18, 2013, and revised and resubmitted on July 1, 2014.}\fi}

\ifIEEETN
\markboth{IEEE Transactions on Automatic Control,~Vol.~XX, No.~XX, XX~XXXX}
{Farahmand \MakeLowercase{\textit{et al.}}: Classification-based Approximate Policy Iteration}
\fi

%



\maketitle

\begin{abstract}
\ifIEEETN
Tackling large approximate dynamic programming or reinforcement learning problems requires methods that can exploit regularities of the problem in hand.
Most current methods are geared towards exploiting the regularities of either the value function or the policy.
We introduce a general classification-based approximate policy iteration (CAPI) framework that can exploit regularities of both.
We establish theoretical guarantees for the sample complexity of CAPI-style algorithms, which allow the policy evaluation step to be performed by a wide variety of algorithms, and can handle nonparametric representations of policies. Our bounds on the estimation error of the performance loss are tighter than existing results.\footnote{The CAPI framework has previously been presented at the European Workshop on Reinforcement Learning (no proceedings)~\citep{FarahmandEWRL2012} and the Multidisciplinary Conference on Reinforcement Learning and Decision Making (extended abstract)~\citep{FarahmandCAPIRLDM2013}. The current version includes the proofs and a significantly more detailed discussion of the results. 
An extended version, including experimental results, is available on arXiv~\citep{FarahmandCAPIExtended2014}. 
}
\else
Tackling large approximate dynamic programming or reinforcement learning problems requires methods that can exploit regularities, or intrinsic structure, of the problem in hand.
Most current methods are geared towards exploiting the regularities of either the value function or the policy.
We introduce a general classification-based approximate policy iteration (CAPI) framework, which encompasses a large class of algorithms that can exploit regularities of both the value function and the policy space, depending on what is advantageous.
This framework has two main components: a generic value function estimator and a classifier that learns a policy based on the estimated value function.
We establish theoretical guarantees for the sample complexity of CAPI-style algorithms, which allow the policy evaluation step to be performed by a wide variety of algorithms (including temporal-difference-style methods), and can handle nonparametric representations of policies. Our bounds on the estimation error of the performance loss are tighter than existing results. 
We also illustrate this approach empirically on several problems, including a large HIV control task.
\footnote{The CAPI framework has previously been presented at the European Workshop on Reinforcement Learning (no proceedings)~\citep{FarahmandEWRL2012} and the Multidisciplinary Conference on Reinforcement Learning and Decision Making (extended abstract)~\citep{FarahmandCAPIRLDM2013}. The current version includes the proofs, a significantly more detailed discussion of the results, and extensive experiments. The theoretical analysis part of this work has been submitted for publication~\citep{FarahmandCAPI2013}. }
\fi
\end{abstract}

\begin{IEEEkeywords}
Approximate Dynamic Programming, Reinforcement Learning, Approximate Policy Iteration, Classification, Finite-Sample Analysis
\end{IEEEkeywords}

%
\IEEEpeerreviewmaketitle

\section{Introduction}
\label{sec:CAPI-Introduction}

\ifIEEETN

\IEEEPARstart{W}{e} consider the problem of finding a near-optimal policy (i.e., controller) for discounted Markov Decision Processes (MDPs) with large state space and finite action space~\citep{SzepesvariBook10} with an unknown model.
For problems with large state spaces (e.g., when the state space is $\Real^d$ with large $d$), 
finding a close-to-optimal policy is difficult (due to the so-called curse of dimensionality) unless one benefits from regularities, or special structure, of the problem in hand.
One group of algorithms developed in reinforcement learning (RL) and approximate dynamic programming (ADP) focuses on exploiting regularities of the \emph{value} function~\citep{FarahmandNIPS08,TaylorParr09,FarahmandACC09,GhavamzadehLazaricMunosHoffmanICML2011,FarahmandVPI2012}, while another group tries to benefit from regularities of the \emph{policy}~\citep{MarbachTsitsiklis2001,Cao2005,GhavamzadehEngel07PG}.
The goal of this paper is to introduce and analyze a class of algorithms, which we call Classification-based Approximate Policy Iteration (CAPI), that can potentially benefit simultaneously from both types of regularities. 

Our approach is inspired by existing classification-based RL algorithms~\citep{LagoudakisParrICML2003,FernYoonGivan2006,LazaricGhavamzadehMunosDPI2010}.
These methods use rollout (i.e., Monte Carlo trajectories) to roughly estimate the action-value function of the current policy at several states. \revised{The estimates define a set of (noisy) greedy actions (positive examples) and non-greedy actions (negative examples), which are then fed to a classifier.}
The classifier ``generalizes'' the greedy action choices over the state space.
The procedure is repeated.

Classification-based methods can be interpreted as variants of Approximate Policy Iteration (API) that use rollouts to estimate the action-value function (policy evaluation) and then \emph{project} the greedy policy obtained at those points onto the predefined space of controllers (policy improvement).

Although classification-based RL methods can benefit from regularities of the policy, the use of rollouts prevents generalization through the value function, which reduces data efficiency. 
This lack of generalization makes rollout-based estimators data-inefficient.
This is a concern in real problems, in which new samples may be expensive, e.g., in adaptive treatment strategies.
Moreover, one cannot easily use rollouts when only access to a batch of data is allowed and a generative model or simulator of the environment is not available.

\revised{
To address the limitation of rollout-based estimators, we propose the CAPI framework. CAPI generalizes the current classification-based algorithms by allowing the use any policy evaluation method including, but not limited to, rollout-based estimators (as in previous work~\citep{LagoudakisParrICML2003,LazaricGhavamzadehMunosDPI2010}), LSTD~\citep{LagoudakisParr03}, the policy evaluation version of Fitted Q-Iteration~\citep{Ernst05}, and their regularized variants~\citep{FarahmandNIPS08,FarahmandACC09}, as well as online methods for policy evaluation such as Temporal Difference learning.
This is a significant generalization of the existing classification-based RL algorithms, which become special cases of CAPI. Our theoretical results indicate that this extension is indeed sound.
CAPI uses a weighted loss instead of the conventional $0/1$-loss of classification, which may lead to surprisingly bad policies~\citep{FarahmandCAPIExtended2014}.
}

The main theoretical contribution of this paper is the finite-sample error analysis of CAPI-style algorithms, which allows \emph{general policy evaluation} algorithms, handles \emph{nonparametric} (in the sense used by e.g.,~\citep{Gyorfi02,Wasserman07}) policy spaces,
and provides a \emph{faster convergence rate for the estimation error} than existing results.
Using nonparametric policies is a significant extension of the work by~\citet{FernYoonGivan2006}, which is limited to finite policy spaces, and of~\citet{LazaricGhavamzadehMunosDPI2010} and~\citet{GabLazGhaSch2011}, which are limited to policy spaces with finite Vapnik-Chervonenkis (VC) dimension.
Our faster convergence rates are due to using a concentration inequality based on the powerful notion of \emph{local Rademacher complexity}~\cite{BartlettBousquetMendelson05}, which is known to lead to fast rates in supervised learning.

We also leverage the notion of \emph{action-gap regularity}~\citep{FarahmandNIPS2011}, which implies that choosing the right action at each state may not require a precise estimate of the action-value function.
When the action-gap regularity of a problem is favourable, the convergence rate of CAPI is faster than the convergence rate of the estimate of the action-value function (and without any such assumption, the convergence rate is the same).

Another theoretical contribution of this work is a new \emph{error propagation} result that shows 
that the errors at later iterations of CAPI play a more important role on the performance of the resulting policy.

\else
\IEEEPARstart{W}{e} consider the problem of finding a near-optimal policy (i.e., controller) for discounted Markov Decision Processes (MDP) with large state space and finite action space~\citep{Bertsekas96,Sutton98,SzepesvariBook10}.
We focus on the scenario where the MDP model is not known and we only have access to a batch of interaction data.
For problems with large state spaces (e.g., when the state space is $\Real^d$ with large $d$), 
finding a close-to-optimal policy is difficult (due to the so-called curse of dimensionality) unless one benefits from regularities, or special structure, of the problem in hand, e.g., smoothness or sparsity of the value function or the optimal policy.
Many successful algorithms developed in reinforcement learning (RL) and approximate dynamic programming (ADP) focus on exploiting regularities of the \emph{value} function, e.g.,~\citet{FarahmandNIPS08,FarahmandACC09,KolterNg09,TaylorParr09,GhavamzadehLazaricMunosHoffmanICML2011,FarahmandVPI2012}.
However, useful structure can also arise in the policy space. For instance, in many control problems, simple policies such as bang-bang or PID controllers can perform quite well if tuned properly.
Direct policy search algorithms and various policy gradient algorithms try to exploit such structure, e.g.,~\citet{BaxterBartlett01,MarbachTsitsiklis2001,Kakade01,Cao2005,GhavamzadehEngel07PG}.

The aforementioned methods exploit only one type of regularity (either value or policy), therefore they do not benefit from all potential regularities of a problem.
The goal of this paper is to introduce a class of algorithms, which we call Classification-based Approximate Policy Iteration (CAPI), that can potentially benefit from the regularities of both value function and policy.

The inspiration for our approach comes from existing classification-based RL algorithms, e.g.,~\citet{LagoudakisParrICML2003,FernYoonGivan2006,LiBulitkoGreiner2007,LazaricGhavamzadehMunosDPI2010}.
These methods use Monte Carlo trajectories to roughly estimate the action-value function of the current policy (i.e., the value of choosing a particular action at the current state and then following the policy) at several states. This approach is called a \emph{rollout}-based estimate by~\citet{TesauroGalperin1996} and is closely related, but not equivalent, to the rollout algorithms of~\citet{Bertsekas2005}.
\revised{In these methods, the rollout estimates at several points in the state space define a set of (noisy) greedy actions (positive examples) as well as non-greedy actions (negative examples), which are then fed to a classifier.}
The classifier ``generalizes'' the greedy action choices over the entire state space.
The procedure is repeated.

Classification-based methods can be interpreted as variants of Approximate Policy Iteration (API) that use rollouts to estimate the action-value function (policy evaluation step) and then \emph{project} the greedy policy obtained at those points onto the predefined space of controllers (policy improvement step).

In many problems, this approach is helpful for three main reasons.
First, good policies are sometimes simpler to represent and learn than good value functions.
Second, even a rough estimate of the value function is often sufficient to separate the best action from the rest, especially when the gap between the value of the greedy actions and the rest is large.
And finally, even if the best action estimates are noisy (due to value function imprecision), one can take advantage of powerful classification methods to smooth out the noise.

\revised{
Rollout-based estimator of the value function, however, does not generalize the value function over the state space, and instead produces estimates at a finite collection of points. This lack of generalization makes rollout-based estimators data-inefficient.
This is a big concern in real problems, in which new samples may be expensive or impossible to generate, e.g., adaptive treatment strategies or user dialogue systems.
Moreover, one cannot easily use rollouts when only access to a batch of data is allowed and a generative model or simulator of the environment is not available.

To address the limitation of rollout-based estimators, we propose the CAPI framework. CAPI generalizes the current classification-based algorithms by allowing the use any policy evaluation method including, but not limited to, rollout-based estimators (as in previous work~\citep{LagoudakisParrICML2003,LazaricGhavamzadehMunosDPI2010}), LSTD~\citep{LagoudakisParr03,LazaricGhavamzadehMunosLSPI2012}, modified Bellman Residual Minimization~\citep{AntosSzepesvariML08}, the policy evaluation version of Fitted Q-Iteration~\citep{Ernst05,Riedmiller05,BusoniuErnstDeSchutterBabuska10,FarahmandVPI2012}, and their regularized variants~\citep{FarahmandNIPS08,GhavamzadehLazaricMunosHoffmanICML2011,FarahmandACC09}, as well as online methods for policy evaluation such as Temporal Difference learning~\cite{Sutton98,TsitsiklisVanRoy97} and GTD~\cite{SuttonMaei09}.
This is a significant generalization of existing classification-based RL algorithms, which become special cases of CAPI. Our theoretical results indicate that this extension is indeed sound.

On a more technical note, the loss function used for the classification step of CAPI is different from the conventional $0/1$-loss of classification, and is weighted according to the difference between the value of the greedy actions and the selected action.
The $0/1$-loss penalizes all mistakes equally and does not consider the relative importance of different regions of the state space, which may lead to surprisingly bad policies (cf. Section~\ref{sec:CAPI-Experiments}). In contrast, the use of weighted loss ensures that the resulting policy closely follows the greedy policy in regions of the state space where the difference between the best action and the rest is considerable (so choosing the wrong action is costly), but pays less attention to regions where all actions are almost the same.
The choice of weighted loss in RL/ADP is not entirely new and has been used in the context of classification-based RL (\citet{LiBulitkoGreiner2007,LazaricGhavamzadehMunosDPI2010,GabLazGhaSch2011,ScherrerGhavamzadehGabillonGeist2012}) and elsewhere (Conservative Policy Iteration approach of~\citet{KakadeLangfordCPI2002} and a variant of Policy Search by Dynamic Programming of~\citet{BagnellKakadeNgSchneider2003}).
}

The main theoretical contribution of this paper is the finite-sample error analysis of CAPI-style algorithms, which allows \emph{general policy evaluation} algorithms, handles \emph{nonparametric}\footnote{\revised{In the sense used by e.g.,~\citet{Gyorfi02,Wasserman07}.}} policy spaces, and provides a \emph{faster convergence rate for the estimation error} than existing results.
Using nonparametric policies is a significant extension of the work by~\citet{FernYoonGivan2006}, which is limited to finite policy spaces, and of~\citet{LazaricGhavamzadehMunosDPI2010} and~\citet{GabLazGhaSch2011}, which are limited to policy spaces with finite Vapnik-Chervonenkis (VC) dimension.
Our faster convergence rates are due to using a concentration inequality based on the powerful notion of \emph{local Rademacher complexity}~\cite{BartlettBousquetMendelson05}, which is known to lead to fast rates in supervised learning.

We also leverage the notion of \emph{action-gap regularity}, recently introduced by~\citet{FarahmandNIPS2011}, which implies that choosing the right action at each state may not require a precise estimate of the action-value function.
When the action-gap regularity of the problem is favourable, the convergence rate of CAPI is faster than the convergence rate of the estimate of the action-value function (and without any assumption on that regularity, the convergence rate is the same).

Another theoretical contribution of this work is a new \emph{error propagation} result that shows that the errors at later iterations of CAPI play a more important role on the performance of the resulting policy. So, if one has finite resources (samples or computational time), it is better to spend effort on the estimation at later iterations (by using a better function approximator, more samples, etc).

We illustrate CAPI's flexibility on some standard toy problems, as well as on a large HIV control domain, which is known to be difficult.
\fi 

\section{Background and Notation}
\label{sec:CAPI-MDP}

\ifIEEETN
We consider a {\em finite-action discounted MDP} $(\XX,\AA,\PKernel, \mathcal{R},\gamma)$, where $\XX$ is a measurable state space, $\AA$ is a finite set of actions, 
$\PKernel: \XA \to \MM(\XX)$ is the transition probability kernel, \revised{$\mathcal{R}: \XA \to \MM(\Real)$} is the reward kernel (with expected reward uniformly bounded by $\Rmax$), and $\gamma\in[0,1)$ is a discount factor.
We use rather standard notations and definitions (see e.g., \citep{FarahmandCAPIExtended2014,SzepesvariBook10}): $\pi: \XX \ra \AA$ is a (deterministic Markov stationary) policy, $\Vpi$ and $\Qpi$ are its value and action-value functions, and $\Vopt$ and $\Qopt$ are the optimal value and action-value functions (bounded by $\Qmax$).
A policy $\pi$ is \emph{greedy} w.r.t. an action-value function $Q$, denoted by 
$
\pi=\hat{\pi}(\cdot;Q),
$
if $\pi(x) = \argmax_{a\in\AA}Q(x,a)$ holds for all $x\in\XX$
(if there exist multiple maximizers, one of them is chosen in an arbitrary deterministic manner).

Our theoretical analysis will rely on the notion of action-gap regularity of an MDP~\citep{FarahmandNIPS2011}, which characterizes the complexity of a control problem.  For simplicity, we define and analyze the two-action case, \revised{but the CAPI framework naturally accommodates MDPs with more actions, as we explain below}.

Consider an MDP with two actions.
For any $Q:\XX\times\AA \rightarrow \Real$, the action-gap function is defined as
$\gap_Q(x) \eqdef | Q(x,1) - Q(x,2) |$ for all $x\in\XX$.
To understand why the action-gap function is informative, suppose that we have an estimate $\Qhat^{\pi}$ of $\Qpi$ and we want to perform policy improvement based on $\Qhat^\pi$.
The greedy policy w.r.t. \revised{$\Qhat^\pi$}, i.e., $\pihat(\cdot;\Qhat^\pi)$, should ideally be close to the greedy policy w.r.t. $\Qpi$, i.e., $\pihat(\cdot;\Qpi)$.
If the action-gap $\gap_{\Qpi}(x)$ is large for some state $x$, the regret of choosing an action different from $\pihat(x;\Qpi)$, roughly speaking, is large; however, confusing the best action with the other one is also less likely. If the action-gap is small, a confusion is more likely to arise, but the regret stemming from the wrong choice will be small.
To characterize how difficult a problem is, we need to summarize the behaviour of the action-gap function over the entire state space.
This is done in the following assumption.
\begin{assumption}[(Action-Gap).]
\label{ass:CAPI-ActionGap}
For a fixed MDP $(\XX, \AA, \PKernel, \mathcal{R}, \gamma)$ with $\actionnum = 2$ and a fixed distribution over states $\nu\in\MM(\XX)$, there exist constants $c_g > 0$ and $\zeta \geq 0$ such that for any $\pi \in \Pi$ and all $\eps > 0$, we have
\ifIEEETN
$	\ProbWRT
		{\nu} 
		{0 < \gap_{\Qpi} (X) \leq \eps}
		 \eqdef
		\int_{\XX} \One{0 < \gap_{\Qpi} (x) \leq \eps} \, \mathrm{d} \nu(x)
	\leq c_g \, \eps^\zeta$.
\else
	\begin{align*}
		\ProbWRT
		{\nu} 
		{0 < \gap_{\Qpi} (X) \leq \eps}
		& \eqdef
		\int_{\XX} \One{0 < \gap_{\Qpi} (x) \leq \eps} \, \mathrm{d} \nu(x)
		\leq c_g \, \eps^\zeta.
	\end{align*}
\fi
\end{assumption}
The value of $\zeta$ controls the distribution of the action-gap $\gap_{\Qpi}(X)$.
A large value of $\zeta$ indicates that the probability of $\Qpi(X,1)$ being very close to $\Qpi(X,2)$ is small.
This implies that the estimate $\hat{Q}^\pi$ can be quite inaccurate in a large subset of the state space (measured according to $\nu$), 
but $\pihat(\cdot;\hat{Q}^{\pi})$ would still be the same as $\pihat(\cdot;\Qpi)$.
Note that any MDP satisfies the inequality when $\zeta = 0$ and $c_g = 1$, so the class of MDPs satisfying this property is not restricted in any way.

Finally, the $L_\infty$-norm on $\XA$ is defined as $\norm{Q}_\infty \eqdef \sup_{(x,a) \in \XA} |Q(x,a)|$. We also use a definition of supremum norm that holds only on a set of points from $\XX$. Let $\Dn = \{X_1, \dotsc, X_n\}$; then, $\norm{Q}_{\infty,\Dn} \eqdef \max_{x \in \Dn, a \in \AA} |Q(x,a)|$.

\else
In this section, we summarize necessary definitions and notation.
For more information, we refer the reader to~\citet{Bertsekas96,Sutton98,SzepesvariBook10}.

\subsection{Markov Decision Processes}
For a space $\Omega$ with $\sigma$-algebra $\sigma_\Omega$, $\MM(\Omega)$ denotes the set of all probability measures over $\sigma_\Omega$.
The space of bounded measurable functions with respect to (w.r.t.) $\sigma_\Omega$ is denoted by $B(\Omega)$ and $B(\Omega,L)$ denotes the subset of $B(\Omega)$ with bound $0 < L < \infty$.

A {\em finite-action discounted MDP} is a 5-tuple $(\XX,\AA,\PKernel, \mathcal{R},\gamma)$, where $\XX$ is a measurable state space, $\AA$ is a finite set of actions, 
$\PKernel: \XA \to \MM(\XX)$ is the transition probability kernel, \revised{$\mathcal{R}: \XA \to \MM(\Real)$} is the reward kernel, and $\gamma\in[0,1)$ is a discount factor.
Assume that the expected reward is uniformly bounded by $\Rmax$.
A measurable mapping $\pi: \XX \ra \AA$ is called a deterministic Markov stationary policy, or just {\em policy} for short. Following a policy $\pi$ means that at each time step, $A_t = \pi(X_t)$.

A policy $\pi$ induces the transition probability kernel $\PKernel^\pi: \XX \to \MM(\XX)$.
For a measurable subset $S \subseteq \XX$, we define
$(\PKernel^\pi)(S|x) \eqdef \int \PKernel(\mathrm{d}y|x,\pi(x)) \one{y \in S}$, in which $\one{\cdot}$ is the indicator function.
The $m$-step transition probability kernels $(\PKernel^\pi)^m: \XX \to \MM(\XX)$ for $m = 2,3,\cdots$ are inductively defined as
$(\PKernel^\pi)^m( S |x) \eqdef \int_{\XX} \PKernel(\mathrm{d}y|x,\pi(x)) (\PKernel^\pi)^{m-1}(S | y)$.

Given a transition probability kernel \revised{$\PKernel': \XX \to \MM(\XX)$}, the right-linear operator $\PKernel'\cdot: B(\XX) \to B(\XX)$ is defined as
$(\PKernel' V)(x) \eqdef \int_{\XX} \PKernel'(\mathrm{d}y|x) V(y)$.
In other words, $(\PKernel' V)(x)$ is the expected value of $V$ w.r.t. the distribution induced by following $\PKernel'$ from state $x$.
For a probability distribution $\rho \in \MM(\XX)$ and a measurable subset $S \subseteq \XX$, 
let the left-linear operators $\cdot \PKernel': \MM(\XX) \to \MM(\XX)$ be
$(\rho \PKernel')(S) = \int \rho(dx) \PKernel'(\mathrm{d}y|x) \one{y \in S}$.
In other words, $(\rho \PKernel')$ is the distribution induced by $\PKernel'$ when the initial distribution is $\rho$.
In this paper, $\PKernel'$ is usually $(\PKernel^\pi)^m: \MM(\XX) \to \MM(\XX)$, for $m=1,2,\dotsc$.

The value function $\Vpi$ and the action-value function $\Qpi$ of a policy $\pi$ are defined as follows: 
Let $(R_t;t\ge 1)$ be the sequence of rewards when the Markov chain is started from state $X_1$ (or state-action $(X_1,A_1)$ for $\Qpi$) drawn from a positive probability distribution over $\XX$ ($\XA$) and the agent follows the policy $\pi$.
Then
$V^\pi(x)  \eqdef  \EE{\sum_{t=1}^\infty\gamma^{t-1} R_t \,\Big|\, X_1=x}$
and
$Q^\pi(x,a) \eqdef  \EE{\sum_{t=1}^\infty\gamma^{t-1} R_t \,\Big|\,X_1=x,A_1=a}$.
The functions $\Vpi$ and $\Qpi$ are uniformly bounded by $\Qmax = \Rmax/(1-\gamma)$, independent of the choice of $\pi$.

The {\em optimal value} and {\em optimal action-value} functions are defined as $V^*(x)  = \sup_\pi V^\pi(x)$ for all $x \in \XX$ and $Q^*(x,a) = \sup_\pi Q^\pi(x,a)$ for all $(x,a) \in \XA$.
A policy $\piopt$ is {\em optimal} if $V^{\piopt} = \Vopt$.
A policy $\pi$ is \emph{greedy} w.r.t. an action-value function $Q$, denoted by 
$
\pi=\hat{\pi}(\cdot;Q),
$
if $\pi(x) = \argmax_{a\in\AA}Q(x,a)$ holds for all $x\in\XX$
(if there exist multiple maximizers, one of them is chosen in an arbitrary deterministic manner).
A greedy policy w.r.t. the optimal action-value function $\Qopt$ is an optimal policy.

\subsection{Action-Gap Characterization of the MDP}

The \emph{action-gap} regularity~\citep{FarahmandNIPS2011} is a recently introduced complexity measure of a control problem, inspired by the low-noise condition in the classification literature~\citep{AudibertTsybakov2007}.
Our theoretical analysis will rely on the notion of action-gap regularity of an MDP~\citep{FarahmandNIPS2011}, which characterizes the complexity of a control problem.  For simplicity, we define and analyze the two-action case, \revised{but the CAPI framework naturally accommodates MDPs with more actions, as we explain below}.

Consider an MDP with two actions.
For any $Q:\XX\times\AA \rightarrow \Real$, the action-gap function is defined as
\[
	\gap_Q(x) \eqdef | Q(x,1) - Q(x,2) | \qquad \text{for all }x\in\XX.
\]
To understand why the action-gap function is informative, suppose that we have an estimate $\Qhat^{\pi}$ of $\Qpi$ and we want to perform policy improvement based on $\Qhat^\pi$.
The greedy policy w.r.t. \revised{$\Qhat^\pi$}, i.e., $\pihat(\cdot;\Qhat^\pi)$, should ideally be close to the greedy policy w.r.t. $\Qpi$, i.e., $\pihat(\cdot;\Qpi)$.
If the action-gap $\gap_{\Qpi}(x)$ is large for some state $x$, the regret of choosing an action different from $\pihat(x;\Qpi)$, roughly speaking, is large; however, confusing the best action with the other one is also less likely. If the action-gap is small, a confusion is more likely to arise, but the regret stemming from the wrong choice will be small.

To characterize how difficult a problem is, we need to summarize the behaviour of the action-gap function over the entire state space.
This is done in the following assumption.
\begin{assumption}[(Action-Gap).]
\label{ass:CAPI-ActionGap}
For a fixed MDP $(\XX, \AA, \PKernel, \mathcal{R}, \gamma)$ with $\actionnum = 2$ and a fixed distribution over states $\nu\in\MM(\XX)$, there exist constants $c_g > 0$ and $\zeta \geq 0$ such that for any $\pi \in \Pi$ and all $\eps > 0$, we have
\ifIEEETN
$	\ProbWRT
		{\nu} 
		{0 < \gap_{\Qpi} (X) \leq \eps}
		 \eqdef
		\int_{\XX} \One{0 < \gap_{\Qpi} (x) \leq \eps} \, \mathrm{d} \nu(x)
	\leq c_g \, \eps^\zeta$.
\else
	\begin{align*}
		\ProbWRT
		{\nu} 
		{0 < \gap_{\Qpi} (X) \leq \eps}
		& \eqdef
		\int_{\XX} \One{0 < \gap_{\Qpi} (x) \leq \eps} \, \mathrm{d} \nu(x)
		\leq c_g \, \eps^\zeta.
	\end{align*}
\fi
\end{assumption}
The value of $\zeta$ controls the distribution of the action-gap $\gap_{\Qpi}(X)$.
A large value of $\zeta$ indicates that the probability of $\Qpi(X,1)$ being very close to $\Qpi(X,2)$ is small.
This implies that the estimate $\hat{Q}^\pi$ can be quite inaccurate in a large subset of the state space (measured according to $\nu$), 
but $\pihat(\cdot;\hat{Q}^{\pi})$ would still be the same as $\pihat(\cdot;\Qpi)$.
Note that any MDP satisfies the inequality when $\zeta = 0$ and $c_g = 1$, so the class of MDPs satisfying this property is not restricted in any way. \revised{MDPs with $\zeta = 0$, however, are quite ``boring'' as it implies that $\gap_{\Qpi}(x) = 0$ and so $\Qpi(x,1) = \Qpi(x,2)$ for all $x \in \XX$ ($\nu$-almost surely). Trying to find policies to control these MDPs is futile after all.}
Also, note that one could characterize the distribution of the action-gap function in other ways too, e.g., upper bounds in a form other than $O(\eps^\zeta$). The current form, however, simplifies the analysis while succinctly showing the effect of the action-gap distribution.

The $L_\infty$-norm on $\XA$ is defined as $\norm{Q}_\infty \eqdef \sup_{(x,a) \in \XA} |Q(x,a)|$. We also use a definition of supremum norm that holds only on a set of points from $\XX$. Let $\Dn = \{X_1, \dotsc, X_n\}$ with $X_i \in \XX$; then, $\norm{Q}_{\infty,\Dn} \eqdef \max_{x \in \Dn, a \in \AA} |Q(x,a)|$.

\fi

\section{CAPI Framework}
\label{sec:CAPI-Algorithm}

\ifIEEETN
CAPI is an approximate policy iteration framework that takes a policy space $\Pi$, a distribution over states $\nu\in\MM(\XX)$, and the number of iterations $K$ as inputs, and returns a policy whose performance should be close to the best policy in $\Pi$ (Figure~\ref{alg:CAPI}).
\text{PolicyEval} can be any algorithm that computes an estimate $\hat{Q}^\pi$ of $Q^\pi$, including all policy evaluation methods mentioned in the Introduction.

\begin{figure}[t]
{\bf Algorithm} CAPI$(\Pi,\nu,K)$
\begin{small}
\begin{algorithmic}
\STATE \textbf{Input:} Policy space $\Pi$, State distribution $\nu$, Number of iterations $K$
\STATE \textbf{Initialize:} Let $\pi_{(0)}\in\Pi$ be an arbitrary policy
\FOR{$k = 0,1,\dotsc,K-1$}
\STATE Construct a dataset $\D_n^{(k)}=\{X_i\}_{i=1}^n,\;X_i\stackrel{\text{i.i.d.}}{\sim}\nu$
\STATE $\hat{Q}^{\pi_{k}} \leftarrow  \text{PolicyEval}(\pi_{k})$
\STATE $\pi_{k+1} \leftarrow \argmin_{\pi\in\Pi}\hat{L}^{\pi_{k}}_n(\pi)$ {\footnotesize (action-gap-weighted classification)}
\ENDFOR
\end{algorithmic}
\end{small}
\caption{CAPI pseudocode}
\label{alg:CAPI}
\end{figure}

\revised{
Exploiting the intuition given by the action-gap phenomenon~\citep{FarahmandNIPS2011}, which entails that when $\gap_{\Qpi}(x)$ is large at some state $x$, the regret of choosing an action different from $\pihat(x;\Qpi)$ is also large,}
the approximate policy improvement step of CAPI at each iteration $k$ is performed by minimizing the following action-gap-weighted empirical loss function in policy space $\Pi$:
\revised{
\begin{align}
\label{eq:CAPI-emp-loss}
\hat{L}^{\pi_{k}}_n(\pi) & \eqdef \int_{\XX} \gap_{\hat{Q}^{\pi_{k}}}(x) \One{\pi(x) \neq \argmax_{a \in \AA} \hat{Q}^{\pi_{k}}(x,a) } \, \mathrm{d} \nu_n \\
\nonumber
	& = 
\sum_{X_i \in \Dn^{(k)} }
\gap_{\hat{Q}^{\pi_{k}}}(X_i) \One{\pi(X_i) \neq \argmax_{a \in \AA} \hat{Q}^{\pi_{k}}(X_i,a) }, 
\end{align}
}
where $\nu_n$ is the empirical distribution induced by the samples in $\Dn^{(k)} =\{X_i\}_{i=1}^n$ with $X_i \sim \nu$, \revised{i.e., $\nu_n  = \frac{1}{n} \sum_{X_i \in \Dn^{(k)} } \delta_{X_i}$ with $\delta_{X_i}$ being a point mass at $X_i$ for $i=1, \dotsc, n$.
This loss function emphasizes states in which the regret of choosing a non-greedy action is large.}
The policy improvement step of CAPI is defined by
\begin{equation}
\label{eq:CAPI-OptimizationProblem}
	\pi_{k+1} \leftarrow \argmin_{\pi\in\Pi}\hat{L}^{\pi_{k}}_n(\pi)
\end{equation}
Policy $\pi_{k+1}$ is the projection of the greedy policy $\pihat(\cdot;\Qhat^{\pi_{k}})$, defined only at points $\Dn^{(k)}$, onto policy space $\Pi$ when the distance measure is weighted according to the estimated action-gap function $\gap_{\Qhat^{\pi_{k}}}$.
This should be contrasted with the conventional classification-based approaches~\citep{LagoudakisParrICML2003}, which use a uniform weight for all states, i.e., they minimize $\int_{\XX} \One{\pi(x) \neq \argmax_{a \in \AA} \hat{Q}^{\pi_{k}}(x,a) }  \mathrm{d} \nu_n$.
Note that the loss~\eqref{eq:CAPI-emp-loss} is also used by~\citep{LazaricGhavamzadehMunosDPI2010,GabLazGhaSch2011}.
In~\citep{FarahmandCAPIExtended2014}, we discuss why uniformly weighted loss might lead to a bad choice of policies and provide some empirical evidence too.

\revised{
The flexibility in the choice of policy space $\Pi$ and \text{PolicyEval} allows benefitting from regularities of both policy and value function.
The policy space can be a parametric function space, which is described by a fixed finite number of parameters, or a nonparametric space, which grows with data~\citep{Gyorfi02,Wasserman07}.
The flexibility in the choice of \text{PolicyEval} enables CAPI to exploit regularities of the value function, such as smoothness, which is impossible with a rollout-based estimator.
The optimal choices for PolicyEval and $\Pi$ are problem-dependent and should ideally be determined by a model selection method~\citep{FarahmandSzepesvariMLJ11}.
}

\revised{The dataset used by PolicyEval to generate $\Qhat^{\pi_k}$, in general, is different from $\Dn^{(k)}$ used in~\eqref{eq:CAPI-emp-loss}.
In practice, however, one might use the same dataset for both. It is also possible to change the sampling distribution $\nu$ at each iteration, e.g., similar to~\citep{RossGordonBagnell2011}.
Reusing the same dataset or changing the sampling distribution is not analyzed here.
}

\revised{
To extend the current loss function to problems with $\actionnum > 2$, one can define the action-gap function as $\gap_{Q}(x,a) \eqdef \max_{a' \in \AA} Q(x,a') - Q(x,a)$.
The empirical loss function would be
%
$
\hat{L}^{\pi_{k}}_n(\pi) \eqdef \int_{\XX} \gap_{\hat{Q}^{\pi_{k}}}(x,\pi(x)) \One{\pi(x) \neq \argmax_{a \in \AA} \hat{Q}^{\pi_{k}}(x,a) } \, \mathrm{d} \nu_n$.
Our theoretical analysis, however, does not cover this case.
}

\revised{
Since the loss function~\eqref{eq:CAPI-emp-loss} is non-convex, solving~\eqref{eq:CAPI-OptimizationProblem} is computationally difficult for some policy spaces. But for local methods such as action-gap-weighted K-Nearest Neighbour or decision trees, one can get simple and computationally efficient rules~\citep{FarahmandCAPIExtended2014}. Another possibility is to relax the non-convex loss with a convex surrogate such as action-gap-weighted hinge or exponential loss.}

\else
CAPI is an approximate policy iteration framework that takes a policy space $\Pi$, a distribution over states $\nu\in\MM(\XX)$, and the number of iterations $K$ as inputs, and returns a policy whose performance should be close to the best policy in $\Pi$. Its outline is presented in Figure~\ref{alg:CAPI}.

\begin{figure}
{\bf Algorithm} CAPI$(\Pi,\nu,K)$
\begin{algorithmic}
\STATE \textbf{Input:} Policy space $\Pi$, State distribution $\nu$, Number of iterations $K$
\STATE \textbf{Initialize:} Let $\pi_{(0)}\in\Pi$ be an arbitrary policy
\FOR{$k = 0,1,\dotsc,K-1$}
\STATE Construct a dataset $\D_n^{(k)}=\{X_i\}_{i=1}^n,\;X_i\stackrel{\text{i.i.d.}}{\sim}\nu$
\STATE $\hat{Q}^{\pi_{k}} \leftarrow  \text{PolicyEval}(\pi_{k})$
\STATE $\pi_{k+1} \leftarrow \argmin_{\pi\in\Pi}\hat{L}^{\pi_{k}}_n(\pi)$ { (action-gap-weighted classification)}
\ENDFOR
\end{algorithmic}
\caption{CAPI pseudocode}
\label{alg:CAPI}
\end{figure}

\text{PolicyEval} can be any algorithm that computes an estimate $\hat{Q}^\pi$ of $Q^\pi$, including: rollout-based estimation~\cite{LagoudakisParrICML2003,LazaricGhavamzadehMunosDPI2010}, LSTD-Q~\cite{LagoudakisParr03,FarahmandNIPS08}, modified Bellman Residual Minimization~\citep{AntosSzepesvariML08}, and Fitted Q-Iteration~\cite{Ernst05,Riedmiller05,FarahmandACC09}, or a combination of rollouts and function approximation~\cite{GabLazGhaSch2011}, \revised{as well as online algorithms such as TD~\cite{TsitsiklisVanRoy97} and GTD~\cite{SuttonMaei09}}.

\revised{
Exploiting the intuition given by the action-gap phenomenon~\citep{FarahmandNIPS2011}, which entails that when $\gap_{\Qpi}(x)$ is large at some state $x$, the regret of choosing an action different from $\pihat(x;\Qpi)$ is also large,}
the approximate policy improvement step of CAPI at each iteration $k$ is performed by minimizing the following action-gap-weighted empirical loss function in policy space $\Pi$:\footnote{\revised{The set of actions maximizing $\Qhat^{\pi_k}(x,\cdot)$ might have more than one element, so it would be more accurate to write $\One{\pi(X_i) \notin \Argmax_{a \in \AA} \hat{Q}^{\pi_{k}}(X_i,a) }$. To keep the notation simple, we suppose that there is only one maximizer.}
}
\revised{
\begin{align}
\label{eq:CAPI-emp-loss}
\hat{L}^{\pi_{k}}_n(\pi) & \eqdef \int_{\XX} \gap_{\hat{Q}^{\pi_{k}}}(x) \One{\pi(x) \neq \argmax_{a \in \AA} \hat{Q}^{\pi_{k}}(x,a) } \, \mathrm{d} \nu_n \\
\nonumber
	& = 
\sum_{X_i \in \Dn^{(k)} }
\gap_{\hat{Q}^{\pi_{k}}}(X_i) \One{\pi(X_i) \neq \argmax_{a \in \AA} \hat{Q}^{\pi_{k}}(X_i,a) }, 
\end{align}
}
where $\nu_n$ is the empirical distribution induced by the samples in $\Dn^{(k)} =\{X_i\}_{i=1}^n$ with $X_i \sim \nu$, \revised{i.e., $\nu_n  = \frac{1}{n} \sum_{X_i \in \Dn^{(k)} } \delta_{X_i}$ where $\delta_{X_i}$ is a point mass at $X_i$ for $i=1, \dotsc, n$.
This loss function emphasizes states in which the regret of choosing a non-greedy action is large.}
The policy improvement step of CAPI is defined by the following optimization problem:
\begin{equation}
\label{eq:CAPI-OptimizationProblem}
	\pi_{k+1} \leftarrow \argmin_{\pi\in\Pi}\hat{L}^{\pi_{k}}_n(\pi)
\end{equation}
Policy $\pi_{k+1}$ is the projection of the greedy policy $\pihat(\cdot;\Qhat^{\pi_{k}})$, defined only at points $\Dn^{(k)}$, onto policy space $\Pi$ when the distance measure is weighted according to the estimated action-gap function $\gap_{\Qhat^{\pi_{k}}}$.
This should be contrasted with the conventional classification-based approaches~\citep{LagoudakisParrICML2003}, which use a uniform weight for all states, i.e., they minimize $\int_{\XX} \One{\pi(x) \neq \argmax_{a \in \AA} \hat{Q}^{\pi_{k}}(x,a) }  \mathrm{d} \nu_n$.
Note that the loss~\eqref{eq:CAPI-emp-loss} is also used by~\citet{LazaricGhavamzadehMunosDPI2010,GabLazGhaSch2011}.

A uniformly weighted loss might lead to a bad choice of policies, as it does not take into account the relative importance of different regions in the state space. This is especially a concern if the greedy policy $\pihat(\cdot;\Qhat^{\pi_{k}})$ does not belong to $\Pi$, so that there are some points in the dataset for which $\One{\pi(x) \neq \argmax_{a \in \AA} \hat{Q}^{\pi_{k}}(x,a) }$ is nonzero. To simplify the discussion, suppose there are only two points $x_1$ and $x_2$. 
The uniformly weighted loss does not differentiate between these two points, regardless of their action-gap. Nonetheless, the regret of not following the greedy policy at a point with a large action-gap is worse.

\revised{
The CAPI framework is flexible in the choice of policy space $\Pi$. The policy space can be a parametric function space, which is described by a fixed finite number of parameters, or a nonparametric space, which grows with data. Examples of latter are spaces defined by local methods (K-Nearest Neighbourhood) and decision trees that grow by data, and reproducing kernel Hilbert spaces. Refer to~\citet{Gyorfi02,Wasserman07} for the detailed discussion of nonparametric estimators in statistics.
}

\revised{
The choice of \text{PolicyEval} allows benefitting from regularities of the value function, such as its smoothness or its sparsity in a certain basis functions.
By the right choice of PolicyEval, which is determined by the function approximation architecture and the estimation method, we can provide a better estimate of $Q^{\pi_k}$ compared to what is achievable by a rollout-based policy evaluation algorithm. Rollout-based estimate does not generalize the estimate of the value function over the state space, so is incapable of benefiting from, e.g., the smoothness of the value function.
The accuracy of policy evaluation estimation, which is used in the approximate policy improvement step defined in~\eqref{eq:CAPI-OptimizationProblem}, affects the overall performance of the algorithm (cf. Theorems~\ref{thm:CAPI-ErrorInEachIteration} and \ref{thm:CAPI-PerformanceLoss}).
We will also see that another important factor in the performance of the algorithm is the choice of policy space $\Pi$. If $\Pi$ matches the regularity of the policy, we achieve better error upper bounds.
PolicyEval and $\Pi$ should ideally be chosen by an automatic model selection algorithm~\cite{FarahmandSzepesvariMLJ11}.
}

\revised{Note that we have not specified what dataset PolicyEval uses to generate $\Qhat^{\pi_k}$.
In general, that dataset is different from $\Dn^{(k)}$ used in~\eqref{eq:CAPI-emp-loss}, though in practice one might use the same dataset for both (except that $\Dn^{(k)}$ as we define here does not have reward and state transition information).
It is also possible to change the sampling distribution at each iteration, e.g., at the $k^\text{th}$ iteration, we generate new samples by following $\pi_k$ and add them to the samples generated in earlier iterations to define $\Dn^{(k)}$~\citep{RossGordonBagnell2011}.
Reusing the same dataset or changing the sampling distribution is not analyzed here, and in our analysis we assume that the dataset used for PolicyEval is independent of $\Dn^{(k)}$ and the same sampling distribution $\nu$ is used in all iterations.
}

\revised{
To extend the current loss function to problems with $\actionnum > 2$, one can define the action-gap function to be the difference between the value of the greedy action and the selected action, i.e., $\gap_{Q}(x,a) \eqdef \max_{a' \in \AA} Q(x,a') - Q(x,a)$. 
 The empirical loss function would be
%
$
\hat{L}^{\pi_{k}}_n(\pi) \eqdef \int_{\XX} \gap_{\hat{Q}^{\pi_{k}}}(x,\pi(x)) \One{\pi(x) \neq \argmax_{a \in \AA} \hat{Q}^{\pi_{k}}(x,a) } \, \mathrm{d} \nu_n$
instead. Our theoretical analysis, however, does not cover this case.
}

\revised{
The computational complexity of solving the minimization problem~\eqref{eq:CAPI-OptimizationProblem} depends on the choice of policy space $\Pi$.
The problem is similar (but not identical) to minimizing the $0/1$-loss function in binary classification \revised{(or multi-class classification for $\actionnum > 2$)}, and since the loss function is non-convex, minimizing it can be difficult in general.
One possible solution is to relax the non-convex loss function with a convex surrogate such as action-gap-weighted hinge or exponential loss.
One may also use local methods such as action-gap-weighted K-Nearest Neighbour or decision tree classification. For these policy spaces, the computational cost is cheap.

As an example of a local method that leads to computationally cheap solutions, suppose that we have a partition $\XX_1, \XX_2, \dotsc$ of the state space, i.e., 
$\bigcup_i \XX_i = \XX$ and $\XX_i \cap \XX_j = \emptyset$ for $i \neq
j$. Let $I: \XX \ra \{1,2, \dotsc \}$ be an index function that returns $i$ if
$x \in \XX_i$. Thus, $\Dkx \eqdef \Dn^{(k)} \cap \XX_{I(x)}$ is the set of
data points from $\Dn^{(k)}$ that are in the same partition as $x$ is.
The policy $\pi_{k+1}(x)$ is
\ifDoubleColumn
\begin{align*}
	 & \pi_{k+1}(x) \leftarrow 
	 \argmin_{a \in \AA}
	 \sum_{X_i \in \Dkx}	 
	 \gap_{\hat{Q}^{\pi_k }}(X_i) \One{a \neq \pihat(X_i;\hat{Q}^{\pi_{k}
})  } 
	 \\ & 
	 \equiv
	 \argmin_{a \in \AA} \sum_{X_i \in \Dkx}
	\hat{Q}^{\pi_{k}}(X_i,\pihat(X_i;\hat{Q}^{\pi_{k}} )  ) -
\hat{Q}^{\pi_{k}}(X_i,a)
	\\ & 
	\equiv
	\argmax_{a \in \AA} \sum_{X_i \in \Dkx} 
	\hat{Q}^{\pi_{k}}(X_i,a), 
\end{align*}
\else
\begin{align*}
	 & \pi_{k+1}(x) \leftarrow 
	 \argmin_{a \in \AA}
	 \sum_{X_i \in \Dkx}	 
	 \gap_{\hat{Q}^{\pi_k }}(X_i) \One{a \neq \pihat(X_i;\hat{Q}^{\pi_{k}
})  } 
	 \\ & 
	 \equiv
	 \argmin_{a \in \AA} \sum_{X_i \in \Dkx}
	\hat{Q}^{\pi_{k}}(X_i,\pihat(X_i;\hat{Q}^{\pi_{k}} )  ) -
\hat{Q}^{\pi_{k}}(X_i,a)
	\equiv
	\argmax_{a \in \AA} \sum_{X_i \in \Dkx} 
	\hat{Q}^{\pi_{k}}(X_i,a), 
\end{align*}
\fi
where we used the fact that 
$\hat{Q}^{\pi_{k}}(X_i,\pihat(X_i;\hat{Q}^{\pi_k}))$ is not a function of
$a$, so it does not influence the minimizer.
\revised{The derivation for $\actionnum > 2$ with the modified action-gap function leads to the same rule $\pi_{k+1}(x) \leftarrow \argmax_{a \in \AA} \sum_{X_i \in \Dkx} \hat{Q}^{\pi_{k}}(X_i,a)$.}

The result is a very simple rule: pick the action that maximizes the action-value
among all the data points in the same partition as $x$.
Note that this is different from choosing the majority over the greedy actions
in the partition, which would be the rule if we neglected the action-gap.
This partition-based policy is what we get for using a tree structure to represent the policy. Similar rules can be obtained for action-gap-weighted K-Nearest Neighbour-based and other local methods.
}
\fi 

\section{Theoretical Analysis}
\label{sec:CAPI-Theory}

In this section we analyze the theoretical properties of CAPI-style algorithms and provide an upper bound on the \emph{performance loss} (or \emph{regret}) of the resulting policy $\pi_{K}$.
The performance loss of a policy $\pi$ is the expected difference between the value of the optimal policy $\piopt$ and the value of $\pi$ when the initial state distribution is $\rho \in \MM(\XX)$, i.e.,
\begin{align*}
	\mathrm{Loss}(\pi;\rho) \eqdef \int_{\XX} \left( \Vopt(x) - V^\pi (x) \right) \mathrm{d} \rho(x).
\end{align*}
The choice of $\rho$ enables the user to specify the relative importance of different states.

The analysis has two main steps. First, in Section~\ref{sec:CAPI-APIError} we study the behaviour of one iteration of the algorithm and provide an error bound on the expected loss 
$L^{\pi_{k}}(\pi_{k+1}) \eqdef \int_{\XX} g_{Q^{\pi_{k}}}(x) \allowbreak \One{\pi_{k+1}(x) \neq \argmax_{a \in \AA} Q^{\pi_{k}}(x,a) } \, \mathrm{d} \nu$,
as a function of the number of samples in $\Dn^{(k)}$, the quality of the estimate $\Qhat^{\pi_{k}}$, the complexity of $\Pi$, and the policy approximation error.
In Section~\ref{sec:CAPI-ErrorPropagation}, we analyze how the loss sequence $\left( L^{\pi_{k}}(\pi_{k+1}) \right)_{k=0}^{K-1}$ affects $\mathrm{Loss}(\pi_{K};\rho)$.

\subsection{Approximate Policy Improvement Error}
\label{sec:CAPI-APIError}
Policy $\pi_{k}$ depends on data used in earlier iterations, but is independent of $\Dn^{(k)}$, so we will work on the probability space conditioned on $\Dn^{(0)},\dotsc,\Dn^{(k-1)}$.
To avoid clutter, we omit the conditional probability symbol and the dependence of the loss function, policy, and dataset on the iteration number.
In the rest of this section, $\pi'$ refers to a $\sigma(\Dn^{(0)},\dotsc,\Dn^{(k-1)})$-measurable policy and is independent of $\Dn$, which denotes a set of $n$ independent and identically distributed (i.i.d.) samples from the distribution $\nu \in \MM(\XX)$. 
We also assume that we have a $\Dn$-independent approximation $\QpiPrimehat$ of the action-value function $\QpiPrime$.

For any $\pi \in \Pi$, we define two pointwise loss functions:
\revised{
\ifIEEETN
$
l^{\pi'}(\pi)(x) = \gapQPrime(x) \One{\pi(x) \neq \argmax_{a \in \AA} \QpiPrime(x,a) }$ and 
$
\hat{l}^{\pi'}(\pi)(x) = \gapQPrimehat(x) \One{\pi(x) \neq \argmax_{a \in \AA} \QpiPrimehat(x,a) }$.
\else
\begin{align*}
l^{\pi'}(\pi)(x) = \gapQPrime(x) \One{\pi(x) \neq \argmax_{a \in \AA} \QpiPrime(x,a) }, \\
\hat{l}^{\pi'}(\pi)(x) = \gapQPrimehat(x) \One{\pi(x) \neq \argmax_{a \in \AA} \QpiPrimehat(x,a) }.
\end{align*}
\fi}
Note that $l^{\pi'}(\pi)$ is defined as a function of $\QpiPrime$, which is not accessible to the algorithm. On the other hand, $\hat{l}^{\pi'}(\pi)$ is defined as a function of $\QpiPrimehat$, which is available to the algorithm. The latter pointwise loss is a distorted version of the former.
\revised{To simplify the notation, we may use $l(\pi)$ and $\hat{l}(\pi)$ to refer to $l^{\pi'}(\pi)$ and $\hat{l}^{\pi'}(\pi)$, respectively.}

For a function $l: \XX \to \Real$, let
\revised{$\Pr_n l = \frac{1}{n}\sum_{i=1}^n l(X_i)$ and $\Pr l = \EE{l(X)}$}, where $X, X_i \stackrel{\text{i.i.d.}}{\sim} \nu$
and $X_i$s are from $\Dn$.
Now we can define the expected loss $L(\pi) = \Pr l(\pi)$ and the empirical loss $L_n(\pi) = \Pr_n l(\pi)$ (both w.r.t. the true action-value function $\QpiPrime$) and the distorted empirical loss $\hat{L}_n(\pi) = \Pr_n \hat{l}(\pi)$ (w.r.t. the estimate $\QpiPrimehat$).
Given $\Dn$ and $\QpiPrimehat$, let
\begin{align}\label{eq:CAPI-Theory-pihatDefinition}
	\pihat_n \leftarrow \argmin_{\pi \in \Pi} \hat{L}_n(\pi),
\end{align}
(cf.~\eqref{eq:CAPI-OptimizationProblem}). Here and in the rest of the paper we make the standard assumption that the minimum in~\eqref{eq:CAPI-Theory-pihatDefinition} exists. \ifIEEETN\else If it does not, one can instead use a function whose empirical loss is arbitrary close to the infimum and carry out the analysis.\footnote{This is aside the numerical error in finding the minimizer or the computational hardness of optimization. In the case of having an optimization error, an extra term should be added to the upper bounds~\citep{BottouBousquet2008}.}\fi

\ifIEEETN
To study the behaviour of $L(\pihat_n$), we need to take care of two main issues.
First we should relate the empirical loss of the minimizer of the distorted empirical loss $\hat{L}_n$, that is $L_n(\pihat_n)$, to the (unavailable) minimum of the empirical loss, $\min_{\pi \in \Pi} L_n(\pi)$ (Lemma~\ref{lem:CAPI-DistortionLemma} in Appendix~\ref{sec:CAPI-Appendix-Proofs}).
We also should relate the expected loss $L(\pihat_n)$ to the empirical loss $L_n(\pihat_n)$. Making this relation requires define a notion of complexity \revised{(or capacity)} of policy space $\Pi$. Among \revised{common choices in the machine learning/statistics literature} (such as VC-dimension, metric entropy, etc., \revised{see e.g.,\ifIEEETN~\citep{Gyorfi02}\else~\citep{DevroyeGyorfiLugosi96,Gyorfi02,Boucheron05} \fi}), we use localized Rademacher complexity since it has favourable properties that often lead to tight upper bounds~\citep{BartlettBousquetMendelson05}.
The use of localized Rademacher complexity to analyze an RL/ADP algorithm is a novel aspect of this work.

\else
We are interested in studying the behaviour of $L(\pihat_n$).
We need to take care of two main issues.
First note that $\pihat_n$ is the minimizer of the distorted empirical loss $\hat{L}_n$ and not $L_n$. This difference causes some error. 
Lemma~\ref{lem:CAPI-DistortionLemma} in Appendix~\ref{sec:CAPI-Appendix-Proofs} relates the empirical loss of $\pihat_n$, that is $L_n(\pihat_n)$, to the (unavailable) minimum of the empirical loss, $\min_{\pi \in \Pi} L_n(\pi)$.

The other issue is to relate the expected loss $L(\pihat_n)$ to the empirical loss $L_n(\pihat_n)$. Making this relation requires defining a notion of complexity \revised{(or capacity)} of policy space $\Pi$. Among \revised{common choices in the machine learning/statistics literature} (such as VC-dimension, metric entropy, etc., \revised{see e.g.,\ifIEEETN~\citep{Gyorfi02} \else~\citep{DevroyeGyorfiLugosi96,Gyorfi02,Boucheron05} \fi for definitions}), we use localized Rademacher complexity~\citep{BartlettBousquetMendelson05} since it has favourable properties that often lead to tight upper bounds. Moreover, as opposed to VC-dimension, it can be used to describe the complexity of nonparametric (infinite dimensional) policy spaces. Another nice property of Rademacher complexity is that it can be estimated empirically, which can be quite useful for model selection. However, we do not discuss its empirical estimation here, as it goes beyond the scope of this paper.
The use of localized Rademacher complexity to analyze an RL/ADP algorithm is a novel aspect of this work.
\fi

\ifIEEETN\else We briefly define Rademacher complexity and refer the reader to~\citet{BartlettBousquetMendelson05,BartlettMendelson02} for more information. \fi
Let $\sigma_1, \dotsc, \sigma_n$ be independent random variables with $\Prob{\sigma_i = 1} = \Prob{\sigma_i = - 1} = 1/2$. For a function space $\GG: \XX \to \Real$, define
$R_n \GG = \sup_{g \in \GG} \frac{1}{n} \sum_{i=1}^n \sigma_i g(X_i)$ \revised{with $X_i \sim \nu$}. The Rademacher complexity (or average) of $\GG$ is $\EE{R_n \GG}$, in which the expectation is w.r.t. both $\sigma$ and $X_i$\ifIEEETN~\citep{BartlettBousquetMendelson05}\fi.
One can interpret the Rademacher complexity as a measure that quantifies the extent that a function from $\GG$ can fit a noise sequence of length $n$\ifIEEETN\else~\citep{BartlettMendelson02}\fi.

In order to benefit from the localized version of Rademacher complexity, we need to define a sub-root function.
A non-negative and non-decreasing function $\Psi: [0,\infty) \to [0,\infty)$ is called sub-root if $r \mapsto \frac{\Psi(r)}{\sqrt{r}}$ is non-increasing for $r > 0$~\cite{BartlettBousquetMendelson05}.
The following theorem is the main result of this subsection.

\begin{theorem}\label{thm:CAPI-ErrorInEachIteration}
Fix a policy $\pi'$ and assume that $\Dn$ consists of $n$ i.i.d. samples drawn from distribution $\nu$ \revised{and $\QpiPrimehat$ is independent of $\Dn$}. Let $\pihat_n$ be defined by~\eqref{eq:CAPI-Theory-pihatDefinition}. Suppose that Assumption~\ref{ass:CAPI-ActionGap} holds with a particular value of $(\zeta,c_g)$.
Let $\Psi$ be a sub-root function with a fixed point of $r^*$ such that for $r \geq r^*$, 
\ifIEEETN {\small \fi
\begin{align}\label{eq:CAPI-ComplexityCondition}
	\Psi(r) \geq 2 \Qmax \EE{R_n \cset{l^{\pi'}(\pi)}{\pi \in \Pi, \Pr [l^{\pi'}(\pi)]^2 \leq r  }}.
\end{align}
\ifIEEETN } \fi
Then there exist $c_1, c_2, c_3 > 0$, which are independent of $n$, $\smallnorm{\QpiPrimehat - \QpiPrime}_{\infty,\Dn}$, and $r^*$, so that for any $0 < \delta < 1$, 
\ifIEEETN
$	L(\pihat_n)
	\leq
	12 \inf_{\pi \in \Pi} L(\pi) + c_1 r^* + c_2 \smallnorm{\QpiPrimehat - \QpiPrime}_{\infty,\Dn}^{1 + \zeta}  + 
	c_3 \frac{\ln(1/\delta)}{n}$,
\else
\ifDoubleColumn
\begin{align*}
	L(\pihat_n)
	\leq
	12 \inf_{\pi \in \Pi} L(\pi) + c_1 r^* & + c_2 \norm{\QpiPrimehat - \QpiPrime}_{\infty,\Dn}^{1 + \zeta}  + \\ & 
	c_3 \frac{\ln(1/\delta)}{n},
\end{align*}
\else
\begin{align*}
	L(\pihat_n)
	\leq
	12 \inf_{\pi \in \Pi} L(\pi) + c_1 r^* & + c_2 \norm{\QpiPrimehat - \QpiPrime}_{\infty,\Dn}^{1 + \zeta}  + 
	c_3 \frac{\ln(1/\delta)}{n},
\end{align*}
\fi 
\fi 
with probability at least $1 - \delta$.
\end{theorem}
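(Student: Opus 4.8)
The plan is to split the randomness into three high-probability events — a fast-rate localized Rademacher bound for the \emph{true} loss class, a concentration for the fixed comparator, and the distortion lemma together with a transfer of the resulting small-action-gap term from the empirical to the sampling distribution — and then chain them. Throughout, the argument is carried out conditionally on $\Dn^{(0)},\dots,\Dn^{(k-1)}$, so $\pi'$ and $\QpiPrimehat$ are fixed; I write $\varepsilon = \smallnorm{\QpiPrimehat - \QpiPrime}_{\infty,\Dn}$, which depends on the sample only through which $n$ points are drawn, and I let $\pi^\star \in \argmin_{\pi\in\Pi} L(\pi)$, whose existence we assume. The structural observation that makes the whole argument run is a \emph{self-bounding} property of the loss class $\LL = \cset{l^{\pi'}(\pi)}{\pi\in\Pi}$: since $\smallnorm{\QpiPrime}_\infty \le \Qmax$, each $l^{\pi'}(\pi)$ takes values in $[0,2\Qmax]$ and $[l^{\pi'}(\pi)(x)]^2 = \gapQPrime(x)^2\,\One{\pi(x)\neq\argmax_a\QpiPrime(x,a)} \le 2\Qmax\, l^{\pi'}(\pi)(x)$, so $\Pr[l^{\pi'}(\pi)]^2 \le 2\Qmax\, L(\pi)$ for every $\pi\in\Pi$. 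This Bernstein-type link between second moment and mean is exactly what turns the localized complexity $\Psi$ into a fast ($O(r^*)$, not $O(\sqrt{r^*})$) rate.

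With this property and hypothesis~\eqref{eq:CAPI-ComplexityCondition} in hand, I would invoke the localized Rademacher complexity bound of~\citet{BartlettBousquetMendelson05} for $\LL$, with sub-root function $\Psi$, fixed point $r^*$, and Bernstein constant $2\Qmax$, in its uniform (not empirical-risk-minimizer) form: for any $K>1$ and $0<\delta<1$, with probability at least $1-\delta/3$, simultaneously for all $\pi\in\Pi$,
\[
	L(\pi) \;\le\; \tfrac{K}{K-1}\,L_n(\pi) + a_1 r^* + a_2\,\tfrac{\ln(1/\delta)}{n},
\]
with absolute $a_1,a_2$; I apply this at $\pi=\pihat_n$ and fix $K$ (say $K=2$). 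Separately, since $\pi^\star$ is \emph{fixed}, Bernstein's inequality with $\Var{l^{\pi'}(\pi^\star)} \le \Pr[l^{\pi'}(\pi^\star)]^2 \le 2\Qmax\, L(\pi^\star)$ and the elementary split $\sqrt{uv}\le\tfrac14 u+v$ gives, with probability at least $1-\delta/3$, $L_n(\pi^\star) \le \tfrac54 L(\pi^\star) + a_3\,\tfrac{\ln(1/\delta)}{n}$.

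What remains is to control $L_n(\pihat_n)$, and here the point is that $\pihat_n$ minimizes the \emph{distorted} loss $\hat L_n$, not $L_n$. By Lemma~\ref{lem:CAPI-DistortionLemma}, $L_n(\pihat_n) \le a_4 \min_{\pi\in\Pi} L_n(\pi) + (\text{distortion}) \le a_4\, L_n(\pi^\star) + (\text{distortion})$, where the distortion is, up to constants, $\varepsilon\cdot\nu_n\big(\set{x}{0<\gapQPrime(x)\le a_5\varepsilon}\big)$. The mechanism behind the lemma is a pointwise comparison of $l^{\pi'}(\pi)$ and $\hat l^{\pi'}(\pi)$ at each $X_i\in\Dn$, using that the estimation error at every $X_i\in\Dn$ is at most $\varepsilon$, split into three regimes: where $\gapQPrime(X_i)>a_5\varepsilon$ the greedy actions of $\QpiPrime$ and $\QpiPrimehat$ coincide and the two losses differ only by a reweighting within $2\varepsilon$ of $\gapQPrime(X_i)$, hence within a constant factor of each other and absorbable into $L_n$; where $0<\gapQPrime(X_i)\le a_5\varepsilon$ both losses are $O(\varepsilon)$, so only the \emph{number} of such points enters; and where $\gapQPrime(X_i)=0$ the true loss is zero for every policy, so such points cannot inflate $L_n(\pihat_n)$. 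Since the threshold $a_5\varepsilon$ is random, I would then transfer $\nu_n$ to $\nu$ uniformly in $t$: the sets $\set{x}{0<\gapQPrime(x)\le t}$, $t\ge0$, are nested, so they form a class of Vapnik--Chervonenkis dimension one, and a relative deviation inequality gives, with probability at least $1-\delta/3$ and for all $t\ge0$, $\nu_n\big(\set{x}{0<\gapQPrime(x)\le t}\big) \le 2\,\ProbWRT{\nu}{0<\gapQPrime(X)\le t} + a_6\,\tfrac{\ln(1/\delta)}{n}$; taking $t=a_5\varepsilon$, applying Assumption~\ref{ass:CAPI-ActionGap} (so $\ProbWRT{\nu}{0<\gapQPrime(X)\le a_5\varepsilon}\le c_g(a_5\varepsilon)^\zeta$) and using $\varepsilon\le 2\Qmax$, the distortion term is at most $a_7\,\varepsilon^{1+\zeta} + a_8\,\tfrac{\ln(1/\delta)}{n}$.

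Chaining the three displays, $L(\pihat_n) \le \tfrac{K}{K-1}L_n(\pihat_n)+\cdots \le \tfrac{K}{K-1}\big(a_4 L_n(\pi^\star) + a_7\varepsilon^{1+\zeta}+\cdots\big)+\cdots \le \tfrac{K}{K-1}\,a_4\,\tfrac54\,L(\pi^\star) + c_1 r^* + c_2\varepsilon^{1+\zeta} + c_3\tfrac{\ln(1/\delta)}{n}$, and a union bound over the three events (each of probability $1-\delta/3$) yields the claim with probability at least $1-\delta$, $L(\pi^\star)=\inf_{\pi\in\Pi}L(\pi)$, and $\varepsilon=\smallnorm{\QpiPrimehat-\QpiPrime}_{\infty,\Dn}$; one chooses $K$ and tracks the constant $a_4$ of Lemma~\ref{lem:CAPI-DistortionLemma} so that $\tfrac{K}{K-1}\,a_4\,\tfrac54\le 12$. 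I expect the main obstacle to be the localized Rademacher step — correctly instantiating the machinery of~\citet{BartlettBousquetMendelson05} for this loss class, matching their second-moment localization with the self-bounding property so that $r^*$ enters linearly, and then keeping track in the final assembly of the several multiplicative constants so that they compose to exactly the advertised factor $12$. The distortion lemma, although an elementary pointwise estimate, is the other delicate point, because it is there — in combination with the transfer step and Assumption~\ref{ass:CAPI-ActionGap} — that the exponent $1+\zeta$, rather than $1$, on $\smallnorm{\QpiPrimehat-\QpiPrime}_{\infty,\Dn}$ is gained, which is precisely what forces the three-regime case split.
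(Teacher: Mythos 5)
Your proposal follows essentially the same route as the paper's proof: the same self-bounding variance condition $\Pr[l^{\pi'}(\pi)]^2\le 2\Qmax L(\pi)$ feeding the localized Rademacher bound of Bartlett--Bousquet--Mendelson, the same distortion lemma reducing $L_n(\pihat_n)$ to $3\min_{\pi\in\Pi}L_n(\pi)$ plus an $\varepsilon^{1+\zeta}$ term obtained from the set $\{x: 0<\gapQPrime(x)\le 4\varepsilon\}$ and Assumption~\ref{ass:CAPI-ActionGap}, the same Bernstein step for the fixed comparator, and the same $2\times 3\times 2=12$ accounting of constants. The only divergence is a welcome refinement: you transfer $\nu_n$ to $\nu$ uniformly over thresholds $t$ (the nested sets have VC-dimension one), whereas the paper applies Bernstein at the single random threshold $4\varepsilon$ as if it were deterministic, so your version handles the data-dependence of $\varepsilon=\smallnorm{\QpiPrimehat-\QpiPrime}_{\infty,\Dn}$ more carefully.
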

\ifIEEETN \else The proof is in Appendix~\ref{sec:CAPI-Appendix-Proofs}.\fi
%
\ifIEEETN
The upper bound has three important terms. 
The $\inf_{\pi \in \Pi} L(\pi)$ term is the \emph{policy approximation error}. For a rich enough policy space, this term can be zero. 

The second important term is the \emph{estimation error} of the classifier, which is mainly determined by the behaviour of the fixed point $r^*$ of~\eqref{eq:CAPI-ComplexityCondition}.
Condition~\eqref{eq:CAPI-ComplexityCondition} implies that the estimation error is not determined by the global complexity of the function space, but by its complexity in the neighbourhood of the minimizer $\argmin_{\pi \in \Pi} L^{\pi'}(\pi)$.
If $\Pi$ is a space with VC-dimension $d$, one can show that $r^*$ behaves as $O(d \log(n)/n)$
(cf. proof of Corollary 3.7 of~\citep{BartlettBousquetMendelson05}).
This rate is considerably faster than the $O(\sqrt{d/n})$ behaviour of the estimation error term in the result of~\citep{LazaricGhavamzadehMunosDPI2010,GabLazGhaSch2011}.
Similar local Rademacher complexity results exist for nonparametric spaces.

The last important term is $\smallnorm{\QpiPrimehat - \QpiPrime}_{\infty,\Dn}^{1 + \zeta}$, whose size depends on 1) the quality of $\QpiPrimehat$ at the points in $\Dn$, and 2) the action-gap regularity of the problem, characterized by $\zeta$.
When $\zeta > 0$, the policy evaluation error $\smallnorm{\QpiPrimehat - \QpiPrime}_{\infty,\Dn}$ is dampened and the rate improves geometrically. 
The analysis of~\citep{LazaricGhavamzadehMunosDPI2010,GabLazGhaSch2011} does not benefit from this regularity.
\revised{Finally note that $\QpiPrimehat$ is often estimated using data, so $\smallnorm{\QpiPrimehat - \QpiPrime}_{\infty,\Dn}$ would be random. As we assumed that $\QpiPrimehat$ is independent of $\Dn$, 
the source of randomness of $\smallnorm{\QpiPrimehat - \QpiPrime}_{\infty,\Dn}$ in the upper bound is different from $\Dn$.
The high probability guarantee of the theorem is on the randomness due to $\Dn$.
}

\else
The upper bound has three important terms. 
The first term is $\inf_{\pi \in \Pi} L(\pi)$, which is the \emph{policy approximation error}. For a rich enough policy space (e.g., a nonparametric one), this term can be zero. The constant multiplier $12$ is by no means optimal and can be chosen arbitrarily close to $1$, at the price of increasing other constants.

The second important term is the \emph{estimation error} of the classifier, which is mainly determined by the behaviour of the fixed point $r^*$ of~\eqref{eq:CAPI-ComplexityCondition}.\footnote{\revised{The choice of a sub-root function $\Psi$ that satisfies~\eqref{eq:CAPI-ComplexityCondition} is discussed in Section~3.1.1 of~\citet{BartlettBousquetMendelson05}. For convex $\Pi$, one might choose $\Psi$ to be the right-hand side of~\eqref{eq:CAPI-ComplexityCondition} and it is guaranteed that we get a sub-root function.
Moreover, the existence and uniqueness of the fixed point of a sub-root function is proven in Lemma 3.2 of~\citep{BartlettBousquetMendelson05}.}}
A couple of interesting observations can be made about condition~\eqref{eq:CAPI-ComplexityCondition}.

First, this condition defines a local notion of complexity. Intuitively,~\eqref{eq:CAPI-ComplexityCondition} states that the estimation error is not determined by the global complexity of function space $\GG_\Pi = \{l^{\pi'}(\pi): \pi \in \Pi \}$, but by its complexity in the neighbourhood of the minimizer $\argmin_{\pi \in \Pi} L^{\pi'}(\pi)$, that is, the Rademacher complexity of $\{l^{\pi'}(\pi): \pi \in \Pi, \Pr [l^{\pi'}(\pi)]^2 \leq r  \}$.

Second, this complexity is related to the complexity of $\Pi$ through the loss function $l^{\pi'}(\pi)$, which is a function of the action-gap $\gapQPrime$.
Hence, it is possible to have a complex policy space but a simple $\GG_\Pi$, e.g., in the extreme case in which the value function is constant everywhere, $\GG_\Pi$ has only a single function.
The question of the interplay between the complexity of policy space $\Pi$, the action-gap function $\gapQPrime$, and the complexity of $\GG_\Pi$ is an interesting future research direction.
Disregarding this subtle aspect of the bound on the estimation error, we note that even a conservative analysis leads to fast rates:
If $\Pi$ is a space with VC-dimension $d$, one can show that $r^*$ behaves as $O(d \log(n)/n)$
\ifIEEETN 
(cf. proof of Corollary 3.7 of~\citet{BartlettBousquetMendelson05})\else
(cf. Proposition~\ref{prop:CAPI-VC2Rad} in Appendix~\ref{sec:CAPI-Appendix-rstar}; also proof of Corollary 3.7 of~\citet{BartlettBousquetMendelson05})\fi.
This rate is considerably faster than the $O(\sqrt{d/n})$ behaviour of the estimation error term in the result of~\citet{LazaricGhavamzadehMunosDPI2010,GabLazGhaSch2011}.

The last important term is $\smallnorm{\QpiPrimehat - \QpiPrime}_{\infty,\Dn}^{1 + \zeta}$, whose size depends on 1) the quality of $\QpiPrimehat$ at the points in $\Dn$, 
and 2) the action-gap regularity of the problem, characterized by $\zeta$.
When $\zeta = 0$ (i.e., the MDP is ``hard'' according to its action-gap regularity), the policy evaluation error $\smallnorm{\QpiPrimehat - \QpiPrime}_{\infty,\Dn}$ is not dampened, but when $\zeta > 0$, the rate improves geometrically.
The analysis of~\citet{LazaricGhavamzadehMunosDPI2010,GabLazGhaSch2011} does not benefit from this regularity.

\revised{The function $\QpiPrimehat$ is often estimated using data, so $\smallnorm{\QpiPrimehat - \QpiPrime}_{\infty,\Dn}$ would be a random quantity. Because we assumed that $\QpiPrimehat$ is independent of $\Dn$, the source of randomness of $\smallnorm{\QpiPrimehat - \QpiPrime}_{\infty,\Dn}$ is different from $\Dn$.
If one provides an $\eps$ that with probability at least $1 - \delta'$ satisfies 
$\eps \geq \smallnorm{\QpiPrimehat - \QpiPrime}_{\infty,\Dn}$, one can then get
$L(\pihat_n) \leq 12 \inf_{\pi \in \Pi} L(\pi) + c_1 r^* + c_2 \eps^{1 + \zeta} + c_3 \frac{\ln(1/\delta)}{n}$ with probability at least $1 - (\delta + \delta')$.}

Currently, our result is only stated when the quality of policy evaluation is quantified by the supremum norm. Extending it to other $L_p$-norms is an interesting research question.
\revised{Also the question of how to balance the policy approximation error, the estimation error, and the policy evaluation error by the choice of $\Pi$ and PolicyEval algorithm is an interesting question that requires developing proper model selection algorithms.}
\fi


\subsection{Performance Loss of CAPI}
\label{sec:CAPI-ErrorPropagation}

\ifIEEETN
Here we state the main result of this paper, which upper bounds $\mathrm{Loss}(\pi_{K};\rho)$ as a function of $L^{\pi_{k}}(\pi_{k+1}$) at iterations $k=0,\dotsc,K-1$ and some other properties of the MDP and policy space $\Pi$.
\else
In this section, we state the main result of this paper, Theorem~\ref{thm:CAPI-PerformanceLoss}, which upper bounds the performance loss $\mathrm{Loss}(\pi_{K};\rho)$ as a function of the expected loss $L^{\pi_{k}}(\pi_{k+1}$) at iterations $k=0,1,\dotsc,K-1$ and some other properties of the MDP and policy space $\Pi$.
\fi
First we introduce two definitions.

\begin{definition}[\revised{Worst-Case} Greedy Policy Error]\label{def:CAPI-InherentApproximationError}
For a policy space $\Pi$, the worst-case greedy policy error is $d(\Pi) = \sup_{\pi' \in \Pi} \inf_{\pi \in \Pi} L^{\pi'}(\pi)$.
\end{definition}
\ifIEEETN
\else
This definition can be understood as follows. Consider a policy $\pi'$ belonging to $\Pi$. It induces an action-value function $\QpiPrime$ and consequently, a greedy policy $\pihat(\cdot;\QpiPrime)$ w.r.t. $\QpiPrime$. This greedy policy may not belong to $\Pi$, so there would be a policy approximation error $\inf_{\pi \in \Pi} L^{\pi'}(\pi)$.
The quantity $d(\Pi)$ is the worst-case of this error over the choice of $\pi'$.
\fi

\ifIEEETN
\else
Due to the dynamical nature of MDPs, the performance loss $\mathrm{Loss}(\pi_{K};\rho)$ depends not only on $\left( L^{\pi_{k}}(\pi_{k+1}) \right)_{k=0}^{K-1}$, but also on the difference between the sampling distribution $\nu$ and the future-state distributions in the form $\rho \PKernel^{\pi_1} \PKernel^{\pi_2} \cdots$.
The analysis relating these quantities is called \emph{error propagation} and has been studied in the context of Approximate Value/Policy Iteration algorithms~\ifIEEETN\citep{Munos07,FarahmandMunosSzepesvari10}\else\citep{Munos03,Munos07,FarahmandMunosSzepesvari10,ScherrerLesner2012}\fi.
It relies on quantities called concentrability coefficients, which we define now.
\fi

%
%
\begin{definition}[Concentrability Coefficient]
\label{def:CAPI-ConcentrabilityCoefficients}
Given $\rho, \nu \in \MM(\XX)$, a policy $\pi$, and two integers $m_1, m_2 \geq 0$, let
$\rho (\PKernel^*)^{m_1} (\PKernel^\pi)^{m_2}$ denote the future-state distribution obtained when the first state is drawn from $\rho$, then the optimal policy $\piopt$ is followed for $m_1$ steps and policy $\pi$ for $m_2$ steps.
Denote the supremum of the Radon-Nikodym derivative of the resulting distribution w.r.t. $\nu$ by
\ifIEEETN
$	c_{\rho,\nu}(m_1;m_2;\pi) \eqdef
	\smallnorm
	{\frac
		{\mathrm{d} ( \rho (\PKernel^*)^{m_1} (\PKernel^\pi)^{m_2} )}
		{\mathrm{d} \nu}
	}_\infty$.
\else
\begin{align*}
	c_{\rho,\nu}(m_1;m_2;\pi) \eqdef
	\norm
	{\frac
		{\mathrm{d} ( \rho (\PKernel^*)^{m_1} (\PKernel^\pi)^{m_2} )}
		{\mathrm{d} \nu}
	}_\infty.
\end{align*}
\fi
If $\rho (\PKernel^*)^{m_1} (\PKernel^\pi)^{m_2}$ is not absolutely continuous w.r.t. $\nu$, then $c(m_1,m_2;\pi) = \infty$.
For an integer $K \geq 1$ and a real $s \in [0,1]$, define
%
\ifIEEETN
$C_{\rho,\nu}(K,s) \eqdef
		\frac{1-\gamma}{2} 
			\sum_{k = 0}^{K-1} \gamma^{(1-s)k} \sum_{m \geq 0} \gamma^m \sup_{\pi' \in \Pi} c_{\rho,\nu}(k,m;\pi')$.
\else
\begin{align*}
	C_{\rho,\nu}(K,s) \eqdef
		\frac{1-\gamma}{2} 
			\sum_{k = 0}^{K-1} \gamma^{(1-s)k} \sum_{m \geq 0} \gamma^m \sup_{\pi' \in \Pi} c_{\rho,\nu}(k,m;\pi').
\end{align*}
\fi
\end{definition}

\ifIEEETN
The intuition behind Definition~\ref{def:CAPI-InherentApproximationError} is discussed by~\citep{FarahmandCAPIExtended2014}. For a discussion of Definition~\ref{def:CAPI-ConcentrabilityCoefficients} and similar concentrability coefficients, refer to~\citep{Munos07,FarahmandMunosSzepesvari10}.
We are now ready to state the main result.
\else
\noindent We are now ready to state the main result of this paper.
\fi

\begin{theorem}\label{thm:CAPI-PerformanceLoss}
Consider the sequence of independent datasets $(\Dn^{(k)})_{k=0}^{K-1}$, each with $n$ i.i.d. samples drawn from $\nu \in \MM(\XX)$.
Let $\pi_{0} \in \Pi$ be a fixed initial policy and $(\pi_{k})_{k=1}^K$ be a sequence of policies obtained by solving~\eqref{eq:CAPI-emp-loss}, using estimate $\hat{Q}^{\pi_{k}}$ of $Q^{\pi_{k}}$. 
Suppose that $\hat{Q}^{\pi_{k}}$ is independent of $\Dn^{(k)}$ and Assumption~\ref{ass:CAPI-ActionGap} holds with a particular value of $(\zeta,c_g)$. Let $r^*$ be the fixed point of a sub-root function $\Psi$ such that for any $\pi' \in \Pi$ and $r \geq r^*$,
\ifIEEETN $ \else \[ \fi
	\Psi(r) \geq 2 \Qmax \EE{R_n \cset{l^{\pi'}(\pi)}{\pi \in \Pi, \Pr [{ l^{\pi'}}(\pi) ]^2 \leq r }}.
\ifIEEETN $ \else \] \fi
%
Then there exist constants $c_1, c_2, c_3 > 0$ such that for any $0 < \delta < 1$, for $\mathcal{E}(s)$ $(0 \leq s \leq 1)$ defined as
\ifIEEETN
\ifDoubleColumn $
\else \[ \fi
	\mathcal{E}(s) \eqdef
	12 d(\Pi) + c_1 r^* +
				c_2 \max_{0 \leq k \leq K-1}
		\left[  \gamma^{(K-k-1) s} \smallnorm{\hat{Q}^{\pi_{k}} - Q^{\pi_{k}} }_{\infty,\Dn^{(k)}}^{1+\zeta} \right] +	c_3 \frac{\ln(K/\delta)}{n},
\ifDoubleColumn $
\else \] \fi
\else
\ifDoubleColumn
\begin{align*}
& \mathcal{E}(s) \eqdef
	12 d(\Pi) + c_1 r^* + {}
				\\
				&	
				c_2 \max_{0 \leq k \leq K-1}
		\left[ \gamma^{(K-k-1) s} \norm{\hat{Q}^{\pi_{k}} - Q^{\pi_{k}} }_{\infty,\Dn^{(k)}}^{1+\zeta} \right] + {}
		c_3 \frac{\ln(K/\delta)}{n},
\end{align*}
\else
\begin{align*}
& \mathcal{E}(s) \eqdef
	12 d(\Pi) + c_1 r^* +
				c_2 \max_{0 \leq k \leq K-1}
		\left[ \gamma^{(K-k-1) s} \norm{\hat{Q}^{\pi_{k}} - Q^{\pi_{k}} }_{\infty,\Dn^{(k)}}^{1+\zeta} \right] + {}
		c_3 \frac{\ln(K/\delta)}{n},
\end{align*}
\fi 
\fi 
we have with probability at least $1 - \delta$,
%
%
\ifIEEETN
$	\mathrm{Loss}(\pi_{K};\rho) \leq 
	\frac{2}{1 - \gamma}
	\left[
		\inf_{s \in [0,1]} C_{\rho,\nu}(K,s) \, \mathcal{E}(s) + \gamma^K \Rmax
	\right]
$.
\else
\begin{align*}
	\mathrm{Loss}(\pi_{K};\rho) \leq 
	\frac{2}{1 - \gamma}
	\Big[
		\inf_{s \in [0,1]} C_{\rho,\nu}(K,s) \, \mathcal{E}(s) + \gamma^K \Rmax
	\Big].
\end{align*}
\fi
\end{theorem}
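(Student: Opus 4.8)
\emph{Proof plan.} The plan is to combine the single-iteration guarantee of Theorem~\ref{thm:CAPI-ErrorInEachIteration} with a deterministic error-propagation argument. First I would apply Theorem~\ref{thm:CAPI-ErrorInEachIteration} at each iteration $k=0,\dots,K-1$ with $\pi'=\pi_{k}$ and estimate $\Qhat^{\pi_{k}}$ --- legitimate because $\pi_{k}$ is $\sigma(\Dn^{(0)},\dots,\Dn^{(k-1)})$-measurable and $\Qhat^{\pi_{k}}$ is independent of $\Dn^{(k)}$, so Theorem~\ref{thm:CAPI-ErrorInEachIteration} may be invoked conditionally on the earlier data --- with confidence $\delta/K$ instead of $\delta$. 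Bounding $\inf_{\pi\in\Pi}L^{\pi_{k}}(\pi)\le d(\Pi)$ (Definition~\ref{def:CAPI-InherentApproximationError}; recall $\pi_{k}\in\Pi$) and taking a union bound over the $K$ iterations, on an event $\mathcal{G}$ with $\Prob{\mathcal{G}}\ge 1-\delta$ we have, simultaneously for every $k$,
\[
	L^{\pi_{k}}(\pi_{k+1})\le 12\,d(\Pi)+c_1 r^*+c_2\,\smallnorm{\Qhat^{\pi_{k}}-Q^{\pi_{k}}}_{\infty,\Dn^{(k)}}^{1+\zeta}+c_3\frac{\ln(K/\delta)}{n},
\]
with $c_1,c_2,c_3$ the constants of Theorem~\ref{thm:CAPI-ErrorInEachIteration}. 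The remainder of the argument is deterministic, carried out on $\mathcal{G}$.

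Next I would establish the error-propagation recursion. Let $T$ be the Bellman optimality operator and $T^{\pi}$ the Bellman operator of $\pi$, both on value functions, and define the nonnegative pointwise ``greedy defect'' $e_k\eqdef TV^{\pi_{k}}-T^{\pi_{k+1}}V^{\pi_{k}}\ge 0$. Since $(TV^{\pi_{k}})(x)=\max_{a\in\AA}Q^{\pi_{k}}(x,a)$ and $(T^{\pi_{k+1}}V^{\pi_{k}})(x)=Q^{\pi_{k}}(x,\pi_{k+1}(x))$, the two-action structure gives $e_k(x)=\gap_{Q^{\pi_{k}}}(x)\,\One{\pi_{k+1}(x)\ne\argmax_{a}Q^{\pi_{k}}(x,a)}$, so $\int_{\XX}e_k\,\mathrm{d}\nu=L^{\pi_{k}}(\pi_{k+1})$. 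Writing $b_k\eqdef\Vopt-V^{\pi_{k}}\ge 0$ and decomposing $\Vopt-V^{\pi_{k+1}}=\big(T^{\piopt}\Vopt-T^{\piopt}V^{\pi_{k}}\big)+\big(T^{\piopt}V^{\pi_{k}}-T^{\pi_{k+1}}V^{\pi_{k}}\big)+\big(T^{\pi_{k+1}}V^{\pi_{k}}-V^{\pi_{k+1}}\big)$, then using $T^{\piopt}V^{\pi_{k}}\le TV^{\pi_{k}}$, yields
\[
	b_{k+1}\le\gamma\,\PKernel^{\piopt}b_k+e_k+\gamma\,\PKernel^{\pi_{k+1}}\big(V^{\pi_{k}}-V^{\pi_{k+1}}\big).
\]
The delicate point is the cross term. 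From $T^{\pi_{k+1}}V^{\pi_{k}}=TV^{\pi_{k}}-e_k\ge V^{\pi_{k}}-e_k$ one obtains $(I-\gamma\PKernel^{\pi_{k+1}})(V^{\pi_{k+1}}-V^{\pi_{k}})\ge-e_k$, hence, by monotonicity of the resolvent $(I-\gamma\PKernel^{\pi_{k+1}})^{-1}=\sum_{m\ge 0}\gamma^m(\PKernel^{\pi_{k+1}})^m$, that $V^{\pi_{k}}-V^{\pi_{k+1}}\le(I-\gamma\PKernel^{\pi_{k+1}})^{-1}e_k$. Substituting and using $I+\gamma\PKernel^{\pi_{k+1}}(I-\gamma\PKernel^{\pi_{k+1}})^{-1}=(I-\gamma\PKernel^{\pi_{k+1}})^{-1}$ collapses the bound to
\[
	b_{k+1}\le\gamma\,\PKernel^{\piopt}b_k+(I-\gamma\PKernel^{\pi_{k+1}})^{-1}e_k.
\]

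Then I would unroll this from $k=K-1$ down to $0$ and integrate against $\rho$:
\[
	\mathrm{Loss}(\pi_{K};\rho)=\int_{\XX}b_K\,\mathrm{d}\rho\le\gamma^{K}\smallnorm{b_0}_\infty+\sum_{k=0}^{K-1}\gamma^{K-1-k}\sum_{m\ge 0}\gamma^m\int_{\XX}e_k\,\mathrm{d}\!\left(\rho\,(\PKernel^{\piopt})^{K-1-k}(\PKernel^{\pi_{k+1}})^m\right).
\]
The first term is at most $\tfrac{2\gamma^{K}\Rmax}{1-\gamma}$ since $\smallnorm{b_0}_\infty\le 2\Vmax=2\Rmax/(1-\gamma)$. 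For the double sum, $\rho\,(\PKernel^{\piopt})^{K-1-k}(\PKernel^{\pi_{k+1}})^m$ is precisely a future-state distribution of the type in Definition~\ref{def:CAPI-ConcentrabilityCoefficients}, so (using $\pi_{k+1}\in\Pi$ and $e_k\ge 0$) $\int_{\XX}e_k\,\mathrm{d}(\cdots)\le\sup_{\pi'\in\Pi}c_{\rho,\nu}(K-1-k,m;\pi')\,\int_{\XX}e_k\,\mathrm{d}\nu=\sup_{\pi'}c_{\rho,\nu}(K-1-k,m;\pi')\,L^{\pi_{k}}(\pi_{k+1})$. Re-indexing $k'=K-1-k$ and splitting $\gamma^{k'}=\gamma^{sk'}\,\gamma^{(1-s)k'}$ for a fixed $s\in[0,1]$, the double sum is at most $\big(\max_{0\le k\le K-1}\gamma^{(K-1-k)s}L^{\pi_{k}}(\pi_{k+1})\big)\cdot\tfrac{2}{1-\gamma}\,C_{\rho,\nu}(K,s)$. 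Inserting the Step~1 bound into the maximum (using $\gamma^{(K-1-k)s}\le 1$ on the terms free of $\Qhat^{\pi_{k}}$) bounds it by $\mathcal{E}(s)$; hence $\mathrm{Loss}(\pi_{K};\rho)\le\tfrac{2}{1-\gamma}\big[C_{\rho,\nu}(K,s)\,\mathcal{E}(s)+\gamma^{K}\Rmax\big]$, and taking the infimum over $s\in[0,1]$ gives the claim.

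The step I expect to be the main obstacle is the second one: arranging the recursion so that its ``transition'' part is the \emph{pure contraction} $\gamma\PKernel^{\piopt}$ rather than the non-contractive operator $(I-\gamma\PKernel^{\pi_{k+1}})^{-1}\gamma\PKernel^{\piopt}$ that a careless handling of the cross term (e.g., via $V^{\pi_{k}}-V^{\pi_{k+1}}\le\Vopt-V^{\pi_{k+1}}$) would produce. With the latter, unrolling would blow the operator up like $(\gamma/(1-\gamma))^{K}$, and both the $\gamma^{K}\Rmax$ term and the precise shape of $C_{\rho,\nu}(K,s)$ would be lost. Bounding $V^{\pi_{k}}-V^{\pi_{k+1}}$ by the one-step-propagated defect $(I-\gamma\PKernel^{\pi_{k+1}})^{-1}e_k$ and folding it into the coefficient of $e_k$ is the device that makes the telescoping produce exactly the stated bound. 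The remaining pieces --- the union bound of Step~1, the $s$-split, and the concentrability bookkeeping --- are routine.
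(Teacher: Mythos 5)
Your proposal is correct and follows essentially the same route as the paper: the paper's proof likewise invokes Theorem~\ref{thm:CAPI-ErrorInEachIteration} at each iteration with confidence $\delta/K$, bounds $\inf_{\pi\in\Pi}L^{\pi_k}(\pi)$ by $d(\Pi)$, and combines this with the pointwise error-propagation inequality $\Vopt - V^{\pi_K} \le \sum_{k=0}^{K-1}\gamma^{K-k-1}(\PKernel^{\piopt})^{K-k-1}\sum_{m\ge0}\gamma^m(\PKernel^{\pi_{k+1}})^m\,l^{\pi_k}(\pi_{k+1}) + (\gamma\PKernel^{\piopt})^K(\Vopt - V^{\pi_0})$ via the same $\rho$-integration, concentrability, and $\gamma^{k'}=\gamma^{sk'}\gamma^{(1-s)k'}$ split. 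The only difference is that the paper cites this inequality from \citet{LazaricGhavamzadehMunosDPI2010}, whereas you rederive it (correctly) from the three-term Bellman decomposition and the resolvent bound on the cross term.
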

\ifIEEETN \else The proof is in Appendix~\ref{sec:CAPI-Appendix-Proofs}.\fi
All discussions after Theorem~\ref{thm:CAPI-ErrorInEachIteration} regarding the policy approximation error, the estimation error, and the role of the action-gap regularity apply here too. Moreover, the new error propagation result used in the proof is an improvement over the previous results~\citep{LazaricGhavamzadehMunosDPI2010,GabLazGhaSch2011}. The result indicates that the error $\smallnorm{\hat{Q}^{\pi_{k}} - Q^{\pi_{k}} }_{\infty,\Dn}$ is weighted proportional to $\gamma^{(K-k-1) s}$, which means that errors at earlier iterations are geometrically discounted.

\ifIEEETN
\else
A practical implication is that if one has finite resources (samples or computation time), it is better to focus on obtaining better estimates of $Q^{\pi_{k}}$ at later iterations. The same advice holds for the classifier: using more samples at later iterations is beneficial (though this is not apparent from the bound, as we fixed $n$ throughout all iterations).
The error propagation result is based on the technique developed by~\citet{FarahmandMunosSzepesvari10}. That work, however, studies the error propagation of Approximate Value/Policy Iteration algorithms and is not 
tailored to CAPI and its loss function.
For a detailed discussion of this type of error propagation result, the reader is referred to~\citep{FarahmandMunosSzepesvari10}.

\revised{
As discussed earlier, $\hat{Q}^{\pi_k}$ is often a random quantity because of the randomness in PolicyEval. Moreover, $\pi_k$ is a function of $(\Dn^{(l)})_{l=1}^{k-1}$ as well as the datasets used in the estimation of $(\hat{Q}^{\pi_l} )_{l=1}^{k-1}$, so it is random too.
Therefore, the upper bound of this theorem is random. Nonetheless, one might provide a high probability upper bound on $\smallnorm{\hat{Q}^{\pi_{k}} - Q^{\pi_{k}} }_{\infty,\Dn^{(k)}}$. Some of these bounds even hold uniformly over all policies, so the randomness of $\pi_k$ would not be an issue, e.g.,~\citet{AntosSzepesvariML08, FarahmandNIPS08, LazaricGhavamzadehMunosLSPI2012} for some $L_p$-norm results.
}
\fi

\ifIEEETN
\else
\section{Experiments}
\label{sec:CAPI-Experiments}


We first present a simple experiment to show that using the action-gap-weighted loss can lead to significantly better performance compared to 1) pure value-based approaches and 2) classification-based API with the $0/1$-loss. We also study the effect of the policy approximation error on the results.
Afterwards, we compare an instantiation of CAPI with a state-of-the-art pure value-based approach on the problem of designing adaptive treatment strategies for HIV-infected patients~\citep{ErnstStanGongalvesWehenkel2006}.
This problem is high-dimensional (the state space is $\Real^6$) and is considered a difficult task.
\revised{
We have also conducted several other experiments showing that CAPI is a flexible framework and that the algorithms derived from it (by choosing the policy evaluation procedure and policy space $\Pi$) can be quite competitive. 
The results are reported in Appendix~\ref{sec:CAPI-Appendix-Experiments(Additional)}.}

\subsection{1D Chain Walk}
\label{sec:CAPI-Experiments-1DChainWalk}

\begin{figure}[t]
\vskip 0.2in
\begin{center}
\centerline{\includegraphics[width = 1 \columnwidth]{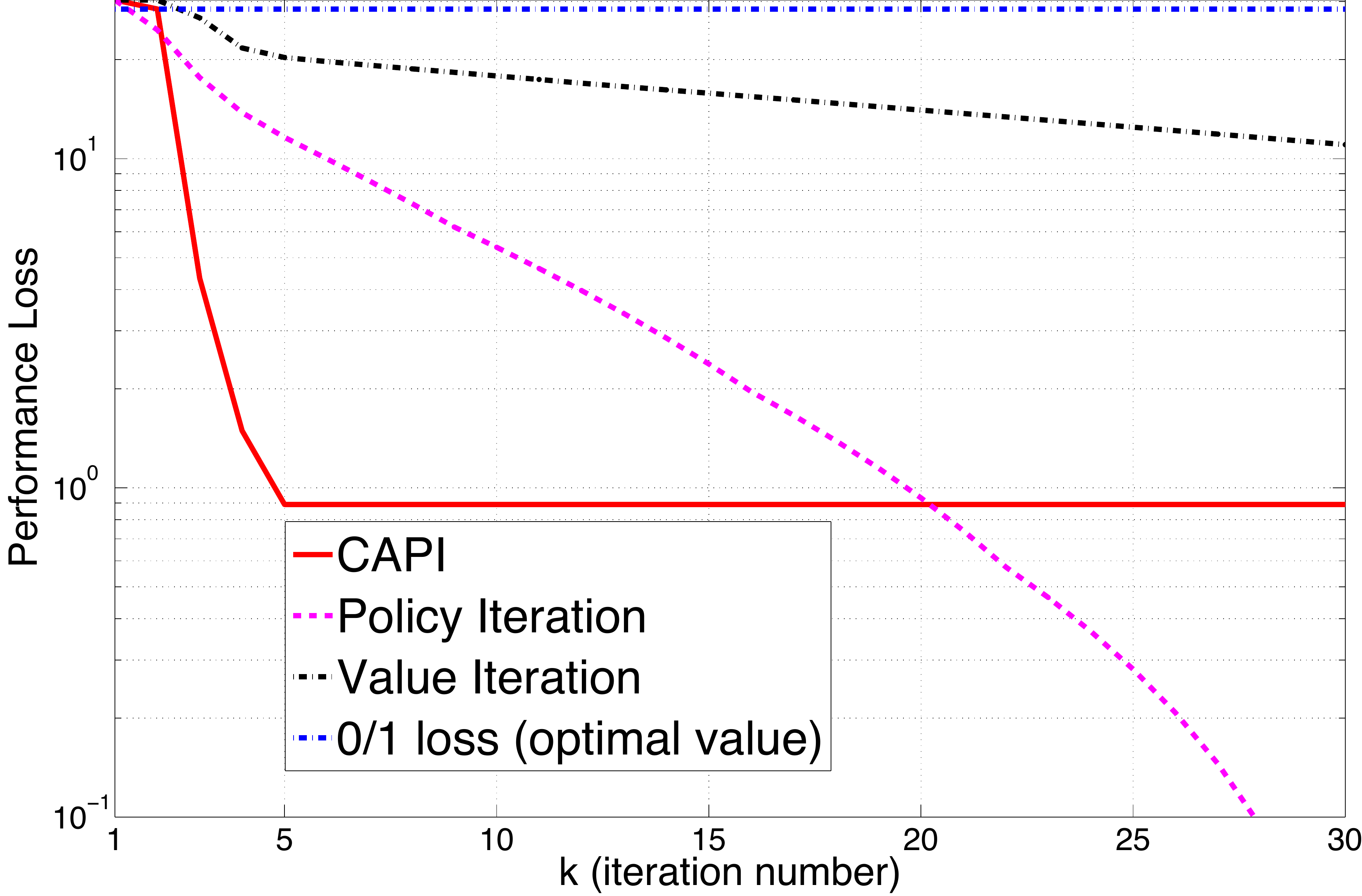}}
\caption{(1D Chain Walk with Policy Approximation Error) Performance loss for CAPI, Value Iteration, Policy Iteration, and the $0/1$-loss given $\Qopt$. 
\vspace*{-0.7cm}
}
\label{fig:CAPI-Loss-VI-PI-CAPI}
\end{center}
\end{figure}

We compare CAPI with Value Iteration (VI), Policy Iteration (PI), and a modified CAPI that uses the $0/1$-loss on a simple 1D chain walk problem (based on the example in Section~9.1 of~\citet{LagoudakisParr03}). The problem has $200$ states, the reward function is zero everywhere except at states $10-15$ (where it is $+1$ for both actions) and $180-190$ (where it is $+0.1$ for both actions), and $\gamma = 0.99$.

Note that the model is known. 
CAPI is run with the distribution $\nu_n$ in the loss function~\eqref{eq:CAPI-emp-loss} as the uniform distribution over states. 
The value of $\hat{Q}^{\pi_{k}}$ at iteration $k$ of CAPI is obtained by running just one iteration of VI-based policy evaluation, i.e., $\hat{Q}^{\pi_{k}} = T^{\pi_{k}} \hat{Q}^{\pi_{k-1}}$, in which $T^{\pi_{k}}$ is the Bellman operator for policy $\pi_{k}$.
This makes the number of times CAPI queries the model similar to that of VI. 
The policy space $\Pi$  is defined as the space of indicator functions of the set of all half-spaces, i.e., 
the set of  policies that choose action $1$ (or $2$) on  $\{1,\dotsc, p \}$ and action $2$ (or $1$) on $\{p+1, \dotsc, 200\}$ for $1 \leq p \leq 200$.
This is a very small subset of all possible policies.
We \emph{intentionally} designed the reward function such that the optimal policy is \emph{not} in $\Pi$, so CAPI will be subjected to policy approximation error. 

Figure~\ref{fig:CAPI-Loss-VI-PI-CAPI} shows that the performance loss of CAPI converges to this policy approximation error, which is the best solution achievable given $\Pi$. The convergence rate is considerably faster than that of VI and PI. 
This speedup is due to the fact that CAPI searches in a much smaller policy space compared to VI or PI.
The comparison of CAPI and VI is especially striking, since both of them use the same number of queries to the model \revised{and are computationally comparable (CAPI is a bit more expensive than VI due to the optimization involved in finding the best policy in the policy space, but since the policy space is rather small in this problem, the difference is not considerable).
The computation time of PI is considerably higher as it evaluates a policy at each iteration.}

We also report the performance loss of a modified CAPI that uses the $0/1$-loss and the \emph{exact} $\Qopt$ (so there will be no estimation error).
The result is quite poor.
To understand this behaviour, note that the minimizer of the $0/1$-loss is a policy that approximates the greedy policy (in this case, the optimal policy) without paying attention to the action-gap function. 
Here, the minimizer of the $0/1$-loss policy is such that it fits the optimal policy, which does not belong to the policy space, in a large region of the state space where the action-gap is small and differs from the optimal policy in a smaller region where the action-gap is large. This selection ignores the relative importance of choosing the wrong action in different regions of the state space and results in poor behaviour (cf. Theorems 3--4 of~\citep{LiBulitkoGreiner2007}).
%

We now study the case in which the optimal policy belongs to policy space $\Pi$.
We use the same 1D chain walk, with the difference that the reward function is zero everywhere except at states $10-15$ (where it is $+1$ for both actions). In the previous experiment, the reward function was also nonzero between $180-190$.
Other than this, the experiment is performed as before.
This change ensures that the optimal policy belongs to policy space $\Pi$.

The result is shown in Figure~\ref{fig:CAPI-Loss-VI-PI-CAPI-WithoutPolicyApprox}.
The performance loss of both CAPI and PI goes to zero very \revised{quickly}.
However, the computation time of CAPI is comparable to VI, and is much cheaper than that of PI.
The performance loss of the $0/1$-loss CAPI with the optimal action-value function is zero in this case, so we do not report it.

\begin{figure}[t]
\vskip 0.2in
\begin{center}
\centerline{\includegraphics[width = 1 \columnwidth]{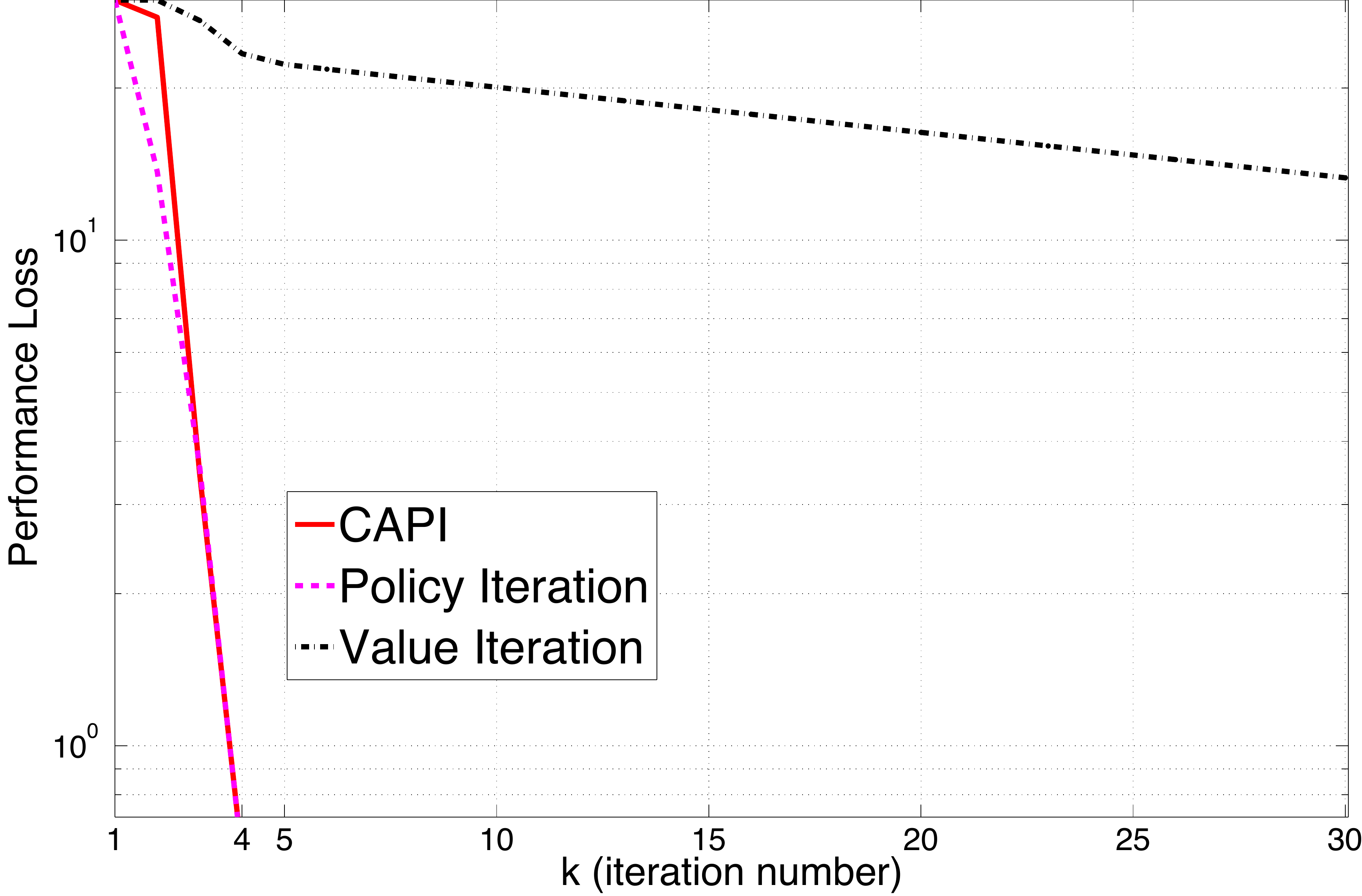}}
\caption{(1D Chain Walk without Policy Approximation Error) Performance loss for CAPI, Value Iteration, and Policy Iteration.
After the $4^\text{th}$ iteration, the performance loss of PI and CAPI is zero.}
\label{fig:CAPI-Loss-VI-PI-CAPI-WithoutPolicyApprox}
\end{center}
\end{figure} 

\subsection{HIV drug schedule}
\label{sec:CAPI-Experiments-HIV}

Most of the anti-HIV drugs currently available fall into one of two categories:
reverse transcriptase inhibitors (RTI) and protease inhibitors (PI). RTI and PI
drugs act differently
on the organism, and typical HIV treatments use drug cocktails containing both
types of medication. 
Despite the success of drug cocktails in maintaining
low viral loads, there are several complications associated with their
long-term use. This has attracted the interest of the scientific community
to the problem of optimizing drug-scheduling strategies. 
Among them, a strategy that has
received a lot of attention recently is structured treatment interruption
(STI), in which patients undergo alternate cycles with and without the drugs.

The scheduling of STI treatments can be seen as a sequential decision problem
in which the actions correspond to the types of cocktail that should
be administered to a patient~\citep{ErnstStanGongalvesWehenkel2006}. 
To simplify the problem formulation, 
it is assumed that RTI and PI drugs are administered at fixed amounts, 
reducing the available actions to the four possible combinations
of drugs. The goal is to minimize the HIV viral load
using a drug amount as small as possible. 

We studied the problem of optimizing STI treatments using a model of the
interaction between the immune system and HIV developed
by~\citet{AdamsBanksKwonTran2004} based on real clinical data. 
All the parameters of the model were set as 
suggested by~\citet{ErnstStanGongalvesWehenkel2006}.
The methodology adopted in the computational experiments, such as the
evaluation of the decision policies and the collection of sample
transitions, also followed the same protocol.

To illustrate the potential benefits of controlling the complexity of the
decision policies, we compare the pure value-based algorithm adopted 
by~\citet{ErnstStanGongalvesWehenkel2006}, Fitted Q-Iteration, with a modified
version in which the space of policies is restricted.
Following the original experiments, we approximated the value function using an
ensemble of $30$ decision trees generated by
\citeauthor{GeurtsErnstWehenkel2006}'s~\citeyear{GeurtsErnstWehenkel2006}
extra-trees algorithm (we refer
to this instantiation of Fitted Q-Iteration as Tree-FQI).
Tree-CAPI works exactly as Tree-FQI, except that instead of using the greedy
policies induced by the current value function approximation, 
it uses a second ensemble of $30$ trees to represent the decision
policies.

Note that in order to build the trees representing decision policies,
the extra-trees algorithm has to be slightly modified to incorporate the estimated action-gap as its loss function. Consider a Tree-CAPI policy based on a single tree. The tree defines a partition $\XX_1, \XX_2, \dotsc$ of the state space. This partitioning of the state space defines a policy as described in Section~\ref{sec:CAPI-Algorithm}.
When we have several trees, as in extra-trees, their policies are combined by voting to obtain the outcome policy.

The complexity of the models built by the extra-trees algorithm can be
controlled by the minimum number of points required to split a node 
during the construction of the trees, \revised{$\eta$}~\cite{GeurtsErnstWehenkel2006}. In general,
the larger this number is, the simpler the resulting models are. Here, we fixed this
parameter for the trees representing the value function at $\mnec = 50$, while
the corresponding parameter for the decision-policy trees, called here \mnea,
was varied  in the set $\{2, 10, 20, 50, 100, 200, 300, 500, 1000, 2000\}$.

\begin{figure}[tb] 
\centering
   \includegraphics[width= 1 \columnwidth]{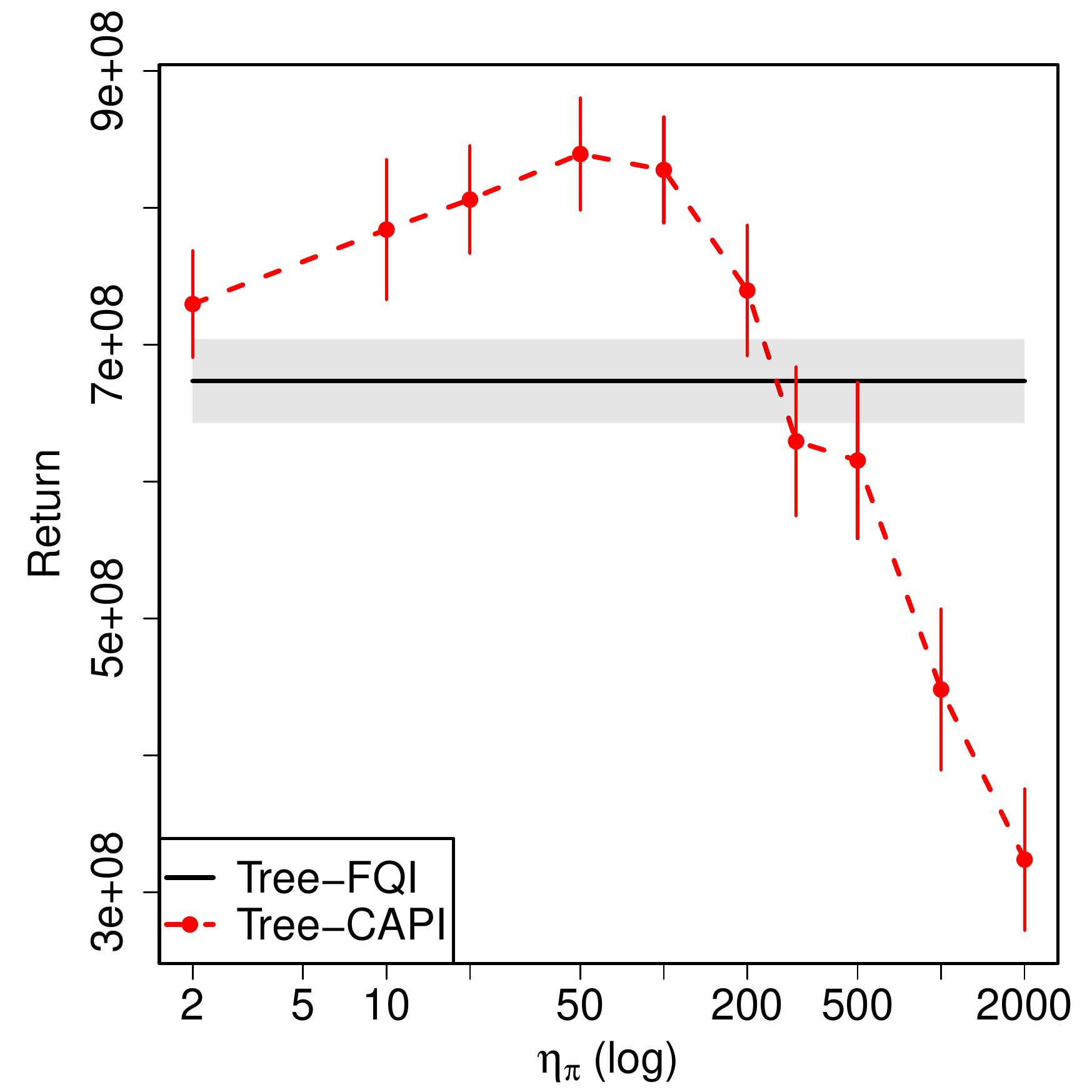}
 \caption{(HIV) Comparing the expected return for Tree-based CAPI vs. Tree-based
Fitted Q-Iteration as a function of parameter $\eta_{\pi}$ describing the complexity of policy space (inversely proportional).
The decision policies (in this case, STI treatments) were evaluated for 
$5,000$ days starting from an ``unhealthy'' state with a high viral
load (see~\cite{ErnstStanGongalvesWehenkel2006}). The error bars and shadowed region
show one standard error over 50 runs.
}
\label{fig:CAPI-HIV-TreeCAPIvsTreeFQI}
\end{figure}

Figure~\ref{fig:CAPI-HIV-TreeCAPIvsTreeFQI} shows the result of this
an experiment. 
As can be seen, restricting the policy
space can have a dramatic impact on the performance of the resulting 
decision policies. When $2 \le \mnea \le 100$, the
policies computed by Tree-CAPI perform better than those computed by Tree-FQI,
leading to increases in the empirical return as high as $24\%$.
On the other hand, an overly restricted policy space precludes the
representation of the intricacies of an efficient STI treatment, resulting in
poor performance.
We expect these results to be representative of a more general trend, in which 
the ``right'' level of complexity of the decision policies is influenced by
factors such as the difficulty of the  problem and
the number of sample transitions available. With CAPI, it is possible to
adjust the complexity of the policy space to a specific context, and the use of
a greedy policy is simply the particular case in which no restrictions
are imposed.

\fi

\ifIEEETN
\else
\section{\revised{Comparison with Other Work}}
\label{sec:CAPI-RelatedWork}

As mentioned already, the most similar approach to CAPI is DPI~\cite{LazaricGhavamzadehMunosDPI2010}, which uses rollouts.
%
DPI-Critic~\cite{GabLazGhaSch2011}, which is also a special case of CAPI, can benefit from value function regularities, but only in the correction term of the rollout estimates, for which DPI-Critic uses a general value function estimator to reduce the truncation bias.
Both of these algorithms are special cases of CAPI with a particular choice for the policy evaluation step.
The theoretical analysis for CAPI compared to that of DPI/DPI-Critic shows a tighter upper bound for the estimation error (e.g., $O(d \log(n) /n)$ for CAPI vs. $O(\sqrt{d/n})$ for DPI/DPI-Critic, for policy spaces of VC-dimension $d$), handles nonparametric policy spaces, and shows that solving MDPs with a favourable action-gap property is easier.
\ifIEEETN 
In some other experiments reported by~\citep{FarahmandCAPIExtended2014}, CAPI used orders of magnitude fewer samples than Tree-DPI to achieve good performance.
\else
In some other experiments reported in Appendix~\ref{sec:CAPI-Appendix-Experiments(Additional)}, CAPI used orders of magnitude fewer samples than Tree-DPI to achieve good performance. 
\fi
This shows the power of generalizing with a good value function estimator.

Another similar approach is the Conservative Policy Iteration (CPI) algorithm introduced by~\citet{KakadeLangfordCPI2002}.
CPI was designed to address the well-known problem that API may not converge, and thus the performance of the resulting policy may oscillate. CPI guarantees that the performance loss of the generated sequence of policies is monotonically decreasing before the algorithm stops (with high probability). DPI and CAPI do not have such a monotonicity guarantee (but note that we still have upper bounds on the performance loss).
Recently~\citet{GhavamzadehLazaric2012} have closely compared CPI and DPI, and concluded 
 that these algorithms are actually quite similar (and as a result, CPI has some similarities with CAPI as well). One difference is in the probability distribution used in the minimization problem~\eqref{eq:CAPI-Theory-pihatDefinition} for DPI.  Also, CPI updates policies conservatively, but DPI does not.
The effect of this conservatism is that CPI requires exponentially more iterations (and samples) than DPI (and, by extension, CAPI) to achieve the same accuracy. Also, the number of required iterations for DPI is a function of $1/(1-\gamma)$, which can be large for $\gamma$ close to $1$.
Note that CPI might converge to a suboptimal policy whose performance is not better than the one achieved by DPI.
For more details, we refer the reader to~\citet{GhavamzadehLazaric2012}.

There is an interesting connection between CAPI and the actor-critic (AC) family of algorithms~\citep{KondaTsitsiklis01,PetersVijayakumarSchaal03,BhatnagarSuttonGhavamzadehLee2009}.
The critic is essentially a policy evaluation algorithm, which is a component of CAPI too. 
On the other hand, in most implementations of AC algorithms, the actor uses a gradient-based approach to update the policy.
If the actor was generated to minimize the loss function~\eqref{eq:CAPI-emp-loss} in a policy space $\Pi$, one would obtain a CAPI-style algorithm.

An example of this relaxed definition of AC is a Fitted Q-Iteration algorithm developed by~\citet{Antos08NIPS} that is tailored to continuous action spaces. The authors mentioned that the resulting method might be called a \emph{fitted actor-critic} algorithm.
That algorithm shares the general form of the actor with CAPI, but is specialized to a particular class of parametric Fitted Q-Iteration-based critics.
Since these two algorithms consider different problems (continuous vs. finite action spaces), a direct comparison of their theoretical guarantees is not possible. 
Acknowledging this major difference, we remark that our analysis allows nonparametric policy spaces, while their analysis is specialized to parametric spaces.
Moreover, their rate appears to be slower than what could be achieved. The reason is that they use global measures of complexity (pseudo-dimension, which is closely related to VC-dimension) and control the supremum of empirical processes as opposed to the more advanced techniques of this paper, i.e., localized Rademacher-based analysis (which is based on the modulus of continuity of the empirical process). On the other hand, they assume that the input data is a $\beta$-mixing process, which is more relaxed than our i.i.d. assumption.


The action-gap function and a similar notion of regularity have also been used by~\citet{DimitrakakisLagoudakis2008} in the context of rollout allocation. Their goal is to decide how to efficiently assign rollouts to states, so that informative samples can be provided to the classifier with as few rollouts as possible. The main idea is that it is easy to quickly choose the greedy action with high confidence when the action-gap function is large and vice versa. However,  they do not consider the fine balance between large action-gap/small probability of error and small action-gap/small regret (they only consider the first aspect).
Also, their suggested algorithm is based on the $0/1$-loss, which as argued before, can lead to bad policies.

\fi

\section{Conclusion and Future Work}
\label{sec:CAPI-Conclusion}

We proposed CAPI, a general family of algorithms that exploits regularities of both the value function and the policy. 
CAPI uses any policy evaluation method, defines an action-gap-weighted loss function, and finds the policy minimizing this loss from a desired policy space.
We provided an error upper bound that is tighter than existing results and applies to general policy evaluation algorithms and nonparametric policy spaces.
\ifIEEETN
\revised{
The experiments reported in~\citep{FarahmandCAPIExtended2014} show that CAPI using a powerful PolicyEval outperforms a rollout-based classification-based algorithm as well as a state-of-the-art purely value-based approach.
}
\else

Our experiments showed that when a powerful policy evaluation method such as Fitted Q-Iteration is used in CAPI, the resulting policy outperforms a purely value-based approach in terms of the quality of the solution and the sample efficiency.
Moreover, our experiments in the appendix showed that CAPI outperforms a rollout-based classification-based RL algorithm.

CAPI might be computationally more expensive than a pure value-based approach. Its computational cost mainly depends on how the optimization problem~\eqref{eq:CAPI-emp-loss} is solved.
For example, the computational cost of learning a policy by Tree-CAPI is almost twice the cost of Tree-FQI, while the cost of finding the action is almost the same.
We envision CAPI as being especially useful in the batch setting and for problems in which acquiring samples is expensive (such as in medical treatment design or mining optimization applications), so sample efficiency is more important than the computational complexity.
\fi

%
\ifIEEETN
Analyzing CAPI with a convex surrogate loss is an interesting question, as is extending CAPI to continuous action spaces.
The sampling distribution $\nu$ can have a big effect on the performance; how to choose it is an open question. 
\else

We showed how to efficiently solve the optimization problem~\eqref{eq:CAPI-emp-loss} for policy spaces induced by local methods such as Decision Trees or KNN. Extending this to other reasonably general classes of policy spaces (e.g., policies that are defined by the sign of linear combination of basis functions) is an open question.
The use of surrogate losses is likely to be a reasonable answer, but the theoretical properties of that approach should be investigated, \revised{possibly similar to what \citet{BartlettJordanMcAuliffe2006} do in the context of classification.}
Additionally, an interesting research direction is to extend and analyze CAPI for continuous action spaces.

The sampling distribution $\nu$ can have a big effect on the performance; how to choose it well is an open question. One could even change the sampling distribution at each iteration, to actively obtain more informative samples.

Finally, CAPI-style algorithms could be very useful in problems where the policy space needs to be restricted, by imposing constraints (e.g., torques not exceeding maximum values). Studying such applications would be interesting.
\fi

\ifIEEETN
	\appendix[Proofs]


\ifIEEETN

\label{sec:CAPI-Appendix-Proofs}

\begin{lemma}[Loss Distortion Lemma]\label{lem:CAPI-DistortionLemma}
Fix a policy $\pi'$. Suppose that $\QpiPrimehat$ is an approximation of the action-value function $\QpiPrime$. Given the dataset $\Dn$, let $\pihat_n$ be defined as~\eqref{eq:CAPI-Theory-pihatDefinition} and define $\pi^*_n \leftarrow \argmin_{\pi \in \Pi} L_n(\pi)$.
Let Assumption~\ref{ass:CAPI-ActionGap} hold. There exist finite $c_1, c_2 > 0$, which depend only on $\zeta$, $c_g$, and $\Qmax$, such that for any $z > 0$, we have
$L_n(\hat{\pi}_n)
	\leq
	3 L_n(\pi^*_n) + c_1 \smallnorm{\QpiPrimehat - \QpiPrime}_{\infty,\Dn}^{1+\zeta} + c_2 \frac{z}{n}$,
with probability at least $1 - e^{-z}$.
\end{lemma}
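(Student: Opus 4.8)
The plan is to compare the two empirical losses \emph{point by point on the data} $\Dn$ and then chain three inequalities, the constant $3$ emerging as $2\cdot\tfrac32$ from two one‑sided comparisons. Write $\eta\eqdef\smallnorm{\QpiPrimehat-\QpiPrime}_{\infty,\Dn}=\max_{x\in\Dn,a\in\AA}|\QpiPrimehat(x,a)-\QpiPrime(x,a)|$; we may assume $\eta\le 2\Qmax$ (truncate $\QpiPrimehat$ to $[-\Qmax,\Qmax]$, which only shrinks $\eta$; and if $\eta>2\Qmax$ the bound is trivial since $L_n(\hat\pi_n)\le 2\Qmax$). Two elementary facts follow from the definition of $\eta$: for every $x\in\Dn$, (i) $|\gapQPrimehat(x)-\gapQPrime(x)|\le 2\eta$, and (ii) if $\gapQPrime(x)>2\eta$ then $\argmax_a\QpiPrimehat(x,a)=\argmax_a\QpiPrime(x,a)$. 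Partition $\Dn$ into $G=\{x:\gapQPrime(x)\ge 4\eta\}$, $S=\{x:0<\gapQPrime(x)<4\eta\}$, $Z=\{x:\gapQPrime(x)=0\}$. On $G$, by (ii) the loss functions $l^{\pi'}(\pi)$ and $\hat l^{\pi'}(\pi)$ carry the \emph{same} indicator, and by (i) $\tfrac12\gapQPrime(x)\le\gapQPrime(x)-2\eta\le\gapQPrimehat(x)\le\gapQPrime(x)+2\eta\le\tfrac32\gapQPrime(x)$; hence for every $\pi\in\Pi$, $l^{\pi'}(\pi)(x)\le 2\,\hat l^{\pi'}(\pi)(x)$ and $\hat l^{\pi'}(\pi)(x)\le\tfrac32\,l^{\pi'}(\pi)(x)$. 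On $S$, both $l^{\pi'}(\pi)(x)$ and $\hat l^{\pi'}(\pi)(x)$ are at most $\gapQPrime(x)+2\eta<6\eta$. On $Z$, $l^{\pi'}(\pi)(x)=0$ while $\hat l^{\pi'}(\pi)(x)\le\gapQPrimehat(x)\le 2\eta$.

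Next, using these pointwise bounds together with $\hat l^{\pi'}(\pi)\ge 0$, the defining property $\hat\pi_n=\argmin_{\pi\in\Pi}\hat L_n(\pi)$, and $|S|,|Z|$ for the cardinalities of $S,Z$:
\begin{align*}
L_n(\hat\pi_n)
&=\tfrac1n\sum_{x\in G}l^{\pi'}(\hat\pi_n)(x)+\tfrac1n\sum_{x\in S}l^{\pi'}(\hat\pi_n)(x)
\le 2\hat L_n(\hat\pi_n)+\tfrac{6\eta|S|}{n}
\le 2\hat L_n(\pi^*_n)+\tfrac{6\eta|S|}{n}\\
&\le \tfrac3n\sum_{x\in G}l^{\pi'}(\pi^*_n)(x)+\tfrac{12\eta|S|}{n}+\tfrac{4\eta|Z|}{n}+\tfrac{6\eta|S|}{n}
\le 3\,L_n(\pi^*_n)+\tfrac{18\eta|S|}{n}+\tfrac{4\eta|Z|}{n},
\end{align*}
where in the third step we split $\hat L_n(\pi^*_n)$ over $G$ (use $\hat l^{\pi'}(\pi^*_n)(x)\le\tfrac32 l^{\pi'}(\pi^*_n)(x)$), $S$ and $Z$. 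Thus $L_n(\hat\pi_n)\le 3L_n(\pi^*_n)+\tfrac{18\eta|S|}{n}+\tfrac{4\eta|Z|}{n}$, with the $3=2\cdot\tfrac32$ appearing exactly as claimed.

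It remains to absorb the residual terms. Here $|S|/n=\tfrac1n\sum_{i=1}^n\One{0<\gapQPrime(X_i)<4\eta}$, and Assumption~\ref{ass:CAPI-ActionGap} gives, for any \emph{fixed} threshold $t$, $\mathbb E\big[\tfrac1n\sum_i\One{0<\gapQPrime(X_i)<t}\big]=\int\One{0<\gapQPrime(x)<t}\,\mathrm d\nu(x)\le c_g t^\zeta$; a Bernstein/multiplicative‑Chernoff bound controls the empirical count by $2c_g t^\zeta+O(z/n)$ for fixed $t$. Since $\eta$ itself depends on $\Dn$, one applies this along a dyadic grid $t\in\{2\Qmax\cdot 2^{-j}:0\le j\le\lceil\log_2 n\rceil\}$ with a union bound, evaluating at the grid point just above $4\eta$, so that with probability at least $1-e^{-z}$, $\tfrac{|S|}{n}\le 2c_g(8\eta)^\zeta+O\big((z+\log n)/n\big)$; multiplying by $\eta$, using $\eta\le 2\Qmax$ and $ab\le\tfrac12(a^2+b^2)$ on the cross term, gives $\tfrac{18\eta|S|}{n}\le c_1\eta^{1+\zeta}+c_2\tfrac{z}{n}$ with $c_1,c_2$ depending only on $(\zeta,c_g,\Qmax)$. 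Finally, $l^{\pi'}(\pi)\equiv 0$ on $Z$ for every $\pi$, so those data points never enter $L_n(\hat\pi_n)$ or $L_n(\pi^*_n)$; running the whole argument on $\{x:\gapQPrime(x)>0\}$ — equivalently, using the arbitrary tie‑breaking in the definition of $\pi^*_n$ to pick an $L_n$‑minimizer that also agrees with $\pihat(\cdot;\QpiPrimehat)$ on $Z$ — removes $\tfrac{4\eta|Z|}{n}$, and in any case that term is $\le c_1\eta$ when $\zeta=0$.

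The routine parts are Steps 1–2: once one dichotomizes by the size of the action‑gap, the pointwise comparison and the bookkeeping that yields the constant $3$ are mechanical. The genuinely delicate step is the concentration in Step 3: because the ``radius'' $\eta$ of the small‑gap set $S$ is a function of $\Dn$, the indicators $\One{0<\gapQPrime(X_i)<4\eta}$ are not independent, so Bernstein cannot be applied to $|S|$ directly; the peeling over dyadic scales of $\eta$ (or, alternatively, a ratio/relative‑deviation inequality for the nested family $\{\gapQPrime\le t\}_{t\ge 0}$, whose VC dimension is $1$) is what makes the argument work and what fixes the dependence of $c_1,c_2$ on $(\zeta,c_g,\Qmax)$. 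A secondary point requiring attention is the degenerate set $\{\gapQPrime=0\}$, invisible to the true losses but carrying distorted loss of order $\eta$, which is neutralized as just described.
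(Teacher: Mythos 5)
Your proof is essentially the paper's own argument: the same threshold $4\eta$ on the true action gap, the same two pointwise facts ($|\gapQPrimehat-\gapQPrime|\le 2\eta$, and agreement of the argmaxes once the gap exceeds $2\eta$), the same chain through the $\hat L_n$-minimizer that produces $3=2\cdot\tfrac32$, and the same Bernstein-plus-Assumption~\ref{ass:CAPI-ActionGap} step converting the empirical mass of the small-gap set into $c_g(4\eta)^{\zeta}+O(z/n)$. The paper merely routes the comparison through an auxiliary loss $\tilde L_n(\pi)=\int_\XX \gapQPrime(x)\One{\pi(x)\neq\argmax_{a\in\AA}\QpiPrimehat(x,a)}\,\mathrm d\nu_n$ instead of doing it pointwise; that is bookkeeping, not a different idea. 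Two of your refinements deserve comment. First, you are right that $\eta$ is data-dependent, so Bernstein cannot be applied to $\nu_n(A_\eta)$ as if $A_\eta$ were a fixed set; the paper does apply it that way, and your dyadic peeling (at the price of a $\log n$ inside the $z/n$ term) is the honest repair. Second, the zero-gap set $Z$: the paper absorbs it into $A_\eps^c$, where its ratio bound $(\gapQPrimehat-\gapQPrime)/\gapQPrime\le\tfrac12$ is vacuous, so the $O(\eta\,\nu_n(Z))$ contribution to $\hat L_n(\pi^*_n)$ is unaccounted for there as well. Be aware, though, that your own fixes do not fully close this term when $\zeta>0$ and $\nu(Z)>0$: restricting the argument to $\{\gapQPrime>0\}$ forfeits the minimizer property of $\hat\pi_n$ (which minimizes the \emph{full} $\hat L_n$), and re-tie-breaking $\pi^*_n$ on $Z$ is not available since $\Pi$ is a constrained class, so $4\eta|Z|/n\le 4\eta$ is only of the required order $\eta^{1+\zeta}$ when $\zeta=0$ or $\nu(Z)=0$. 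This is a blemish shared with the paper's proof rather than a defect of your route relative to it.
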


In the proofs, $c_1, c_2, \dotsc$ are constants whose values may change from line to line -- unless specified otherwise.
\begin{proof}[Proof of Lemma~\ref{lem:CAPI-DistortionLemma}]
Let $\eps = \smallnorm{\QpiPrimehat - \QpiPrime}_{\infty,\Dn}$ and define the  set
$A_\eps = \{x:0 < \gapQPrime(x) \leq 4 \eps \}$.
Denote $p = \ProbWRT{\nu}{X \in A_\eps}$.
For any $z > 0$, Bernstein inequality Theorem 6.12 of~\citep{SteinwartChritmann2008}) shows that
$\ProbWRT{\nu_n}{X \in A_\eps} - \ProbWRT{\nu}{X \in A_\eps} \leq 
\sqrt{\frac{2 p(1-p) z}{n} } + \frac{2z}{3n}$ with probability at least $1 - e^{-z}$.
By the arithmetic mean--geometric mean inequality
\revised{$\sqrt{[p(1-p)] \frac{2z}{n} } \leq \frac{p(1-p)}{2} + \frac{2z}{2n} \leq \frac{p}{2} + \frac{z}{n}$, so we get}
\begin{align}\label{eq:CAPI-DistortionLemma-Proof-Pnun2Pnu}
	\ProbWRT{\nu_n}{X \in A_\eps} \leq \frac{3}{2} \ProbWRT{\nu}{X \in A_\eps} + \frac{5z}{3n}
\end{align}
with probability at least $1 - e^{-z}$. From now on, we focus on the event that this inequality holds.

Define the new auxiliary loss $\tilde{L}_n(\pi) = \int_{\XX} \gapQPrime(x) \One{\pi(x) \neq \argmax_{a \in \AA} \revised{\QpiPrimehat(x,a)}} \mathrm{d} \nu_n$. Notice that unlike $\hat{L}_n(\pi)$, it uses the weighting function $\gapQPrime$ (instead of $\gapQPrimehat$).
%
In the following, for any  $\pi$, we first relate $L_n(\pi)$ to $\tilde{L}_n(\pi)$, and then relate $\tilde{L}_n(\pi)$ to $\hat{L}_n(\pi)$.

\noindent
\textbf{Upper bounding $|L_n(\pi) - \tilde{L}_n(\pi)|$.}
For any $\pi$, 
$ | L_n(\pi) - \tilde{L}_n(\pi)  | =
	 |
		\int_\XX
			\gapQPrime(x)	 	[
								\One{\pi(x) \neq \argmax_{a \in \AA} \QpiPrime(x,a)} -
								\One{\pi(x) \neq \argmax_{a \in \AA} \QpiPrimehat(x,a)}
							] \mathrm{d} \nu_n  |
	\leq \int_\XX
		\gapQPrime(x) \cdot   \One{\argmax_{a \in \AA} \QpiPrime(x,a) \neq \argmax_{a \in \AA} \QpiPrimehat(x,a)}
		\mathrm{d} \nu_n
	 = \int_{A_\eps} \gapQPrime(x) \One{\argmax_{a \in \AA} \QpiPrime(x,a) \neq \argmax_{a \in \AA} \QpiPrimehat(x,a)} \cdot
		\mathrm{d} \nu_n  + 
	\int_{A^c_\eps} \gapQPrime(x) \times \\ \One{\argmax_{a \in \AA} \QpiPrime(x,a) \neq \argmax_{a \in \AA} \QpiPrimehat(x,a)} 
		\mathrm{d} \nu_n$.

Whenever $|\QpiPrimehat(x,a) - \QpiPrime(x,a)| < \frac{1}{2} \gapQPrime(x)$ (for $x \in \Dn$ and $a \in \{1,2\}$), the maximizer action is the same. So on the set $A_\eps^c$, where $\gapQPrime(x) > 4 \eps \geq 4 |\QpiPrimehat(x,a) - \QpiPrime(x,a)|$,
the value of $\One{\argmax_{a \in \AA} \QpiPrime(x,a) \neq \argmax_{a \in \AA} \QpiPrimehat(x,a)}$ is always zero. Thus for any $z > 0$, we have
\ifDoubleColumn
\begin{align}\label{eq:CAPI-DistortionLemma-Proof-L2Ltilde1}
\nonumber
	& \left| L_n(\pi) - \tilde{L}_n(\pi) \right | \leq
	(4 \eps) \ProbWRT{\nu_n}{X \in A_\eps} \leq
	\\ &
	\nonumber
	4 \eps 	\left[
				\frac{3}{2} \ProbWRT{\nu}{X \in A_\eps} + \frac{5z}{3n}
			\right]
	\leq
	\\
	&
	\nonumber
	6 \times 2^{2\zeta} \norm{\QpiPrimehat - \QpiPrime}_{\infty,\Dn}^{1+\zeta} + 
	\frac{20}{3} \norm{\QpiPrimehat - \QpiPrime}_{\infty,\Dn} \frac{z}{n} \leq
	\\ &
	c_1(\zeta) \norm{\QpiPrimehat - \QpiPrime}_{\infty,\Dn}^{1+\zeta} + c_2(\Qmax) \frac{z}{n}.
\end{align}
\else
{\small
\begin{align}\label{eq:CAPI-DistortionLemma-Proof-L2Ltilde1}
\nonumber
	& \left| L_n(\pi) - \tilde{L}_n(\pi) \right | \leq
	(4 \eps) \ProbWRT{\nu_n}{X \in A_\eps} \leq
	4 \eps 	\left[
				\frac{3}{2} \ProbWRT{\nu}{X \in A_\eps} + \frac{5z}{3n}
			\right]
	\leq
	\\
	&
	6 \times 2^{2\zeta} \norm{\QpiPrimehat - \QpiPrime}_{\infty,\Dn}^{1+\zeta} + 
	\frac{20}{3} \norm{\QpiPrimehat - \QpiPrime}_{\infty,\Dn} \frac{z}{n} \leq
	c_1(\zeta) \norm{\QpiPrimehat - \QpiPrime}_{\infty,\Dn}^{1+\zeta} + c_2(\Qmax) \frac{z}{n}.
\end{align}
}
\fi
Here we used~\eqref{eq:CAPI-DistortionLemma-Proof-Pnun2Pnu} in the second inequality, Assumption~\ref{ass:CAPI-ActionGap} in the third inequality, and $\smallnorm{\QpiPrimehat - \QpiPrime}_{\infty,\Dn} \leq 2 \Qmax$ in the last one.

\noindent
\textbf{Relation of $\hat{L}_n(\pi)$ to $\tilde{L}_n(\pi)$.}
First note that $|\gapQPrimehat(x) - \gapQPrime(x)| \leq 2 \eps$ (for all $x \in \Dn$).
We also have
$	\max_{x \in A_\eps^c \cap \Dn} \frac{\gapQPrimehat(x) - \gapQPrime(x) }{\gapQPrime(x)} \leq \frac{2\eps}{4 \eps} = \frac{1}{2}$ and
$\max_{x \in A_\eps^c \cap \Dn} \frac{\gapQPrime(x) - \gapQPrimehat(x)}{\gapQPrimehat(x)} \leq \frac{2\eps}{2 \eps} = 1$.
Thus,
\ifDoubleColumn
{\small
\begin{align*}
	& 	\hat{L}_n(\pi) - \tilde{L}_n(\pi) = \\
	& 
	\int_{A_\eps} 
			( \gapQPrimehat(x) - \gapQPrime(x) )
			\One{\pi(x) \neq \argmax_{a \in \AA} \QpiPrimehat(x,a)} \mathrm{d} \nu_n + {}
	\\ &
	\int_{A_\eps^c} 
			\frac{\gapQPrimehat(x) - \gapQPrime(x)}{\gapQPrime(x)}
			\gapQPrime(x) 
			\One{\pi(x) \neq \argmax_{a \in \AA} \QpiPrimehat(x,a)} \mathrm{d} \nu_n
	\\ 
	& {} \leq  {}
	(2 \eps) \ProbWRT{\nu_n}{X \in A_\eps} + \frac{1}{2} \tilde{L}_n(\pi).
\end{align*}
}
\else
\begin{align*}
	 \hat{L}_n(\pi) - \tilde{L}_n(\pi) & = 
	 \int_{A_\eps} 
			( \gapQPrimehat(x) - \gapQPrime(x) )
			\One{\pi(x) \neq \argmax_{a \in \AA} \QpiPrimehat(x,a)} \mathrm{d} \nu_n + {}
	\\ &
	\quad \int_{A_\eps^c} 
			\frac{\gapQPrimehat(x) - \gapQPrime(x)}{\gapQPrime(x)}
			\gapQPrime(x) 
			\One{\pi(x) \neq \argmax_{a \in \AA} \QpiPrimehat(x,a)} \mathrm{d} \nu_n
	\\ 
	 & \leq
	(2 \eps) \ProbWRT{\nu_n}{X \in A_\eps} + \frac{1}{2} \tilde{L}_n(\pi).
\end{align*}
\fi
After re-arranging, we get
\begin{align}\label{eq:CAPI-DistortionLemma-Proof-Lhat2Ltilde1}
	\hat{L}_n(\pi) \leq \frac{3}{2} \tilde{L}_n(\pi) + 2 \eps \, \ProbWRT{\nu_n}{X \in A_\eps}.
\end{align}
Likewise, by writing $\gapQPrime(x) - \gapQPrimehat(x)$ as $\frac{\gapQPrime(x) - \gapQPrimehat(x) }{\gapQPrimehat(x)} \gapQPrimehat(x)$ and doing a similar decomposition of the state space into $A_\eps$ and $A_\eps^c$, we get
\begin{align}\label{eq:CAPI-DistortionLemma-Proof-Lhat2Ltilde2}
	\hat{L}_n(\pi) \geq \frac{1}{2} \tilde{L}_n(\pi) - \eps \ProbWRT{\nu_n}{X \in A_\eps}.
\end{align}

We use the optimizer property of $\hat{\pi}_n$ (which implies that $\hat{L}_n(\hat{\pi}_n) \leq \hat{L}_n(\pi^*_n)$), apply~\eqref{eq:CAPI-DistortionLemma-Proof-Lhat2Ltilde1}, and finally use inequalities~\eqref{eq:CAPI-DistortionLemma-Proof-L2Ltilde1} and~\eqref{eq:CAPI-DistortionLemma-Proof-Pnun2Pnu} to get
$
	\hat{L}_n(\hat{\pi}_n) \leq 
	\hat{L}_n(\pi^*_n) \leq
	\frac{3}{2} \tilde{L}_n(\pi^*_n) + (2 \eps) \ProbWRT{\nu_n}{X \in A_\eps} 
	 \leq 
	\frac{3}{2} [ L_n(\pi^*_n) + c_1 \smallnorm{\QpiPrimehat - \QpiPrime}_{\infty,\Dn}^{1+\zeta} + c_2 \frac{z}{n} ] +	(2 \eps) [ 
				\frac{3}{2} \ProbWRT{\nu}{X \in A_\eps} + \frac{5}{3} \frac{z}{n}
			]$.
From~\eqref{eq:CAPI-DistortionLemma-Proof-Lhat2Ltilde2} and by applying~\eqref{eq:CAPI-DistortionLemma-Proof-L2Ltilde1}, we also have
$	\hat{L}_n(\hat{\pi}_n) \geq \frac{1}{2} \tilde{L}_n(\hat{\pi}_n) - \eps \ProbWRT{\nu_n}{X \in A_\eps}
	 \geq
	\frac{1}{2}	[
				L_n(\pihat_n) - c_1 \smallnorm{\QpiPrimehat - \QpiPrime}_{\infty,\Dn}^{1+\zeta} - c_2 \frac{z}{n} ] -
	\eps [  \frac{3}{2} \ProbWRT{\nu}{X \in A_\eps} + \frac{5z}{3n} ]
$.
These two inequalities imply that
$L_n(\pihat_n) \leq 3 L_n(\pi^*_n) + c_1 \smallnorm{\QpiPrimehat - \QpiPrime}_{\infty,\Dn}^{1+\zeta} + c_2 \frac{z}{n}$ in the event that~\eqref{eq:CAPI-DistortionLemma-Proof-Pnun2Pnu} holds, which has probability at least $1 - e^{-z}$.
\end{proof}


\begin{proof}[Proof of Theorem~\ref{thm:CAPI-ErrorInEachIteration}] 
We use Theorem 3.3 by~\citep{BartlettBousquetMendelson05}.
For function $l(\pi)(x) = \gapQPrime(x) \One{\pi(x) \neq \argmax_{a \in \AA} \QpiPrime(x,a)}$, we have
$
	\Var{l(\pi)(X)} \leq \EE{|\gapQPrime(X) \One{\pi(X) \neq \argmax_{a \in \AA} \QpiPrime(X,a)}|^2 }
	\leq 
2\Qmax \EE{l(\pi)(X)}$,
so the variance condition of that theorem is satisfied. If we have a function $\Psi$ as defined  in~\eqref{eq:CAPI-ComplexityCondition}, the theorem states that there exist $c_1, c_2> 0$ such that for any $z > 0$ and any $\pi \in \Pi$ (including $\pihat_n \in \Pi$),
\begin{align}\label{eq:CAPI-ErrorInEachIteration-Proof-GeneralConcentrationBound}
	L(\pi) = \Pr l(\pi) \leq 2P_n l(\pi) + c_1 r^* + c_2 \frac{z}{n},
\end{align}
with probability at least $1 - e^{-z}$ ($c_1$ can be chosen as \revised{$704/\Qmax$} and $c_2$ can be chosen as $126 \, \Qmax$). 

Let $\pioptInPi  \leftarrow \argmin_{\pi \in \Pi} L(\pi)$ be the minimizer of the expected loss in policy space $\Pi$.
Consider~\eqref{eq:CAPI-ErrorInEachIteration-Proof-GeneralConcentrationBound} with the choice of $\pi = \pihat_n$, and add and subtract $6P_n l(\pioptInPi)$ and $6 \Pr l(\pioptInPi)$ and then use Lemma~\ref{lem:CAPI-DistortionLemma}. With probability at least $1 - 2 e^{-z}$, we get
$
	L(\pihat_n) \leq 
				2 \Pr_n l(\pihat_n) - 6 \left[ \Pr_n l(\pioptInPi) - \Pr_n l(\pioptInPi) \right]
				 - 6 \left[ \Pr l(\pioptInPi) - \Pr l(\pioptInPi) \right] + c_1 r^* + c_2 \frac{z}{n}
			\leq
			6 \left[ \Pr_n l(\pi^*_n) -  \Pr_n l(\pioptInPi) \right] + 6 \left[ \Pr_n l(\pioptInPi) - \Pr l(\pioptInPi) \right]
			+ 6 \Pr l(\pioptInPi)  + c_1 r^* + c_2 \smallnorm{\QpiPrimehat - \QpiPrime}_{\infty,\Dn}^{1+\zeta} +  c_3 \frac{z}{n} 
			\leq
			6 \left[ \Pr_n l(\pioptInPi) - \Pr l(\pioptInPi) \right] + 6 \Pr l(\pioptInPi) + c_1 r^* 
			+ c_2 \smallnorm{\QpiPrimehat - \QpiPrime}_{\infty,\Dn}^{1+\zeta} + c_3 \frac{z}{n}
			$,
where in the last inequality we used the minimizing property of $\pi^*_n$, i.e., $\Pr_n l(\pi^*_n) - \Pr_n l(\pioptInPi) \leq 0$.
\revised{Here $c_2$ can be chosen as $36 \times 2^{2\zeta}$.}

To upper bound $\Pr_n l(\pioptInPi) - \Pr l(\pioptInPi)$, we apply Bernstein inequality 
(Theorem 6.12 of~\citep{SteinwartChritmann2008}) to get that for any $ z > 0$,
$\Pr_n l(\pioptInPi) - \Pr l(\pioptInPi) \leq \sqrt{ \frac{2 \Var{l(\pioptInPi)} z}{ n} } + \frac{4 \Qmax z}{3 n}$, with probability at least $1 - e^{-z}$.
Since $\Var{l(\pioptInPi)} \leq 2 \Qmax \Pr l(\pioptInPi)$ (as shown above), by the application of arithmetic mean--geometric mean inequality we obtain 
$\Pr_n l(\pioptInPi) - \Pr l(\pioptInPi) \leq \Pr l(\pioptInPi) + \frac{7 \Qmax z}{3 n}$ with the same probability.
This and the inequality in the previous paragraph result in
$L(\pihat_n) \leq 12 \Pr l(\pioptInPi) + c_1 r^* + c_2 \smallnorm{\QpiPrimehat - \QpiPrime}_{\infty,\Dn}^{1+\zeta} + c_3 \frac{z}{n}$,
with probability at least $1 - 3 e^{-z}$ as desired.
\end{proof}


\begin{proof}[Proof of Theorem~\ref{thm:CAPI-PerformanceLoss}]
It is shown by~\citep{LazaricGhavamzadehMunosDPI2010} that
$
\Vopt - V^{\pi_{K}} \leq 
	\sum_{k=0}^{K-1} \gamma^{K-k-1} (\PKernel^\piopt)^{K-k-1}
		 \sum_{m \geq 0} \allowbreak \gamma^m (\PKernel^{\pi_{k+1}})^m l^{\pi_{k}}(\pi_{k+1})
	+ (\gamma \PKernel^\piopt)^K (\Vopt - V^{\pi_{0}}).
$
We apply $\rho$ to both sides and use the definition of $c_{\rho,\nu}(m_1;m_2;\pi)$ to get  
$\rho (\Vopt - V^{\pi_{K}}) \leq 
\sum_{k=0}^{K-1} \gamma^{K-k-1} \sum_{m \geq 0} \gamma^m c_{\rho,\nu}(K-k-1,m;\pi_{k+1}) \, \nu l^{\pi_{k}}(\pi_{k+1}) 
	+ \gamma^K (2 \Qmax)$.
Recall that $\nu l^{\pi_{k}}(\pi_{k+1}) = L^{\pi_{k}}(\pi_{k+1})$. We decompose $\gamma$ to $\gamma^s \gamma^{(1-s)}$ (for $0 \leq s \leq 1$) and separate terms involving the concentrability coefficients and those related to $L^{\pi_{k}}(\pi_{k+1})$.
We then have for any $0 \leq s \leq 1$,
$\rho (\Vopt - V^{\pi_{K}}) \leq 
\max_{0 \leq k \leq K-1} \left \{ \gamma^{s(K-k-1)} L^{\pi_{k}}(\pi_{k+1}) \right \} \times \sum_{k'=0}^{K-1} \gamma^{(1-s)k'} \sum_{m \geq 0} \gamma^m \sup_{\pi' \in \Pi} c_{\rho,\nu} (k',m;\pi') + \gamma^K (2 \Qmax)$.
Taking the infimum w.r.t. $s$ and using the definition of $C_{\rho,\nu}(K)$, we get that
$\mathrm{Loss}(\pi_{K};\rho) = \rho (\Vopt - V^{\pi_{K}}) \leq
	\frac{2}{1-\gamma} 
		[
		\inf_{s \in [0,1]} C_{\rho,\nu}(K,s) 
		\max_{0 \leq k \leq K-1}	[\gamma^{s(K-k-1)} L^{\pi_{k}}(\pi_{k+1}) ] 
			 + \gamma^K \Rmax
		].
$

Fix $ 0 < \delta < 1$.
For each iteration $k = 0,\dotsc, K-1$, by invoking Theorem~\ref{thm:CAPI-ErrorInEachIteration} with the confidence parameter $\delta / K$, we get
$L^{\pi_{k}}(\pi_{k+1})
	\leq
	12 \inf_{\pi \in \Pi} L^{\pi_{k}} (\pi) + c_1 r^* + c_2 \smallnorm{\hat{Q}^{\pi_{k}} - Q^{\pi_{k}} }_{\infty,\Dn^{k}}^{1 + \zeta}
	 +  c_3 \frac{\ln(K/\delta)}{n}$, which holds with probability at least $1 - \delta/K$.
Since $\inf_{\pi \in \Pi} L^{\pi_{k}} (\pi) \leq d(\Pi)$, the previous set of inequalities alongside the upper bound on $\mathrm{Loss}(\pi_{K};\rho)$ imply the desired result.
\end{proof}

\else 





\section{Concentration Inequalities}
\label{sec:CAPI-Appendix-ConcentrationInequalities}
For the convenience of the reader, we quote Bernstein inequality and Theorem 3.3 of~\citet{BartlettBousquetMendelson05}.

\begin{lemma}[Bernstein inequality -- Theorem 6.12 of~\citet{SteinwartChritmann2008}]\label{lem:CAPI-BernsteinInequality}
Let $(\Omega,\mathcal{A},P)$ be a probability space, $B > 0$, and $\sigma > 0$ be real numbers, and $n \geq 1$ be an integer. Furthermore, let $X_1, \dotsc, X_n: \Omega \to \Real$ be independent random variables satisfying $\EE{X_i} = 0$, $\norm{X_i}_\infty \leq B$, and $\EE{X_i^2} \leq \sigma^2$ for all $i=1,2, \dotsc, n$. Then we have
$\Prob{\frac{1}{n} \sum_{i=1}^n X_i \geq \sqrt{\frac{2 \sigma^2 z}{n}} + \frac{2 B z}{3 n} }
	\leq e^{-z}$ $(z > 0)$.
\end{lemma}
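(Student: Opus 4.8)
The plan is to use the classical Cramér–Chernoff exponential-moment method. Writing $S_n = \sum_{i=1}^n X_i$, for any $\lambda > 0$ Markov's inequality applied to $e^{\lambda S_n}$ together with independence gives
\[
	\Prob{S_n \geq t} \leq e^{-\lambda t} \prod_{i=1}^n \EE{e^{\lambda X_i}}.
\]
So everything reduces to controlling a single moment generating function $\EE{e^{\lambda X_i}}$ and then optimizing over $\lambda$.

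First I would bound the individual MGF using the three hypotheses $\EE{X_i}=0$, $\norm{X_i}_\infty \leq B$, and $\EE{X_i^2}\leq\sigma^2$. Expanding the exponential and using $\EE{X_i}=0$ makes the linear term vanish, and for $k\geq 2$ the moment bound $\EE{X_i^k} \leq B^{k-2}\EE{X_i^2} \leq B^{k-2}\sigma^2$ (since $|X_i|^{k-2}\leq B^{k-2}$ pointwise) yields, for $\lambda>0$,
\[
	\EE{e^{\lambda X_i}} \leq 1 + \frac{\sigma^2}{B^2}\bigl(e^{\lambda B} - 1 - \lambda B\bigr) \leq \exp\!\left(\frac{\sigma^2}{B^2}\bigl(e^{\lambda B} - 1 - \lambda B\bigr)\right),
\]
using $1+x\leq e^x$ in the last step. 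Multiplying over $i$ and inserting into the Chernoff bound gives the Bennett-type estimate $\Prob{S_n\geq t} \leq \exp\bigl(-\lambda t + \tfrac{n\sigma^2}{B^2}(e^{\lambda B}-1-\lambda B)\bigr)$. To reshape this into a Bernstein form I would apply the elementary inequality $e^{u}-1-u \leq \frac{u^2/2}{1-u/3}$ valid for $0\leq u<3$ (which follows from $k!\geq 2\cdot 3^{k-2}$) with $u=\lambda B$, producing $\Prob{S_n\geq t}\leq \exp\bigl(-\lambda t + \tfrac{n\lambda^2\sigma^2/2}{1-\lambda B/3}\bigr)$ for $0<\lambda<3/B$, and optimizing over $\lambda$ gives the familiar exponent
\[
	\Prob{S_n\geq t} \leq \exp\!\left(-\frac{t^2}{2\bigl(n\sigma^2 + Bt/3\bigr)}\right).
\]

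The last step, which I expect to be the only genuinely delicate part, is inverting this bound to match the exact deviation level in the statement. Setting the right-hand side equal to $e^{-z}$ amounts to solving the quadratic $t^2 = 2z\bigl(n\sigma^2 + Bt/3\bigr)$ for $t$; its positive root is $t = \tfrac{Bz}{3} + \sqrt{\tfrac{B^2z^2}{9} + 2n\sigma^2 z}$. Applying subadditivity of the square root, $\sqrt{a+b}\leq\sqrt a+\sqrt b$, gives $t \leq \tfrac{2Bz}{3} + \sqrt{2n\sigma^2 z}$, and dividing by $n$ shows that the event $\tfrac{1}{n}S_n \geq \sqrt{\tfrac{2\sigma^2 z}{n}} + \tfrac{2Bz}{3n}$ is contained in $\{S_n\geq t\}$, hence has probability at most $e^{-z}$. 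The care needed here is purely algebraic: one must check that the square-root relaxation runs in the conservative direction so the final event stays inside the tail controlled by the Bennett/Bernstein exponent. Since this is a quoted standard result, I would present the MGF bound and the quadratic inversion concisely and refer the reader to a standard reference (e.g., \citet{SteinwartChritmann2008}) for the routine verifications.
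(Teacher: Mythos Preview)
Your argument is correct: the Cram\'er--Chernoff method with the moment bound $\EE{|X_i|^k}\leq B^{k-2}\sigma^2$, the elementary estimate $e^u-1-u\leq \tfrac{u^2/2}{1-u/3}$, and the quadratic inversion via $\sqrt{a+b}\leq\sqrt a+\sqrt b$ all go through exactly as you describe. However, the paper does not prove this lemma at all---it is stated in an appendix solely ``for the convenience of the reader'' as a direct quotation of Theorem~6.12 of \citet{SteinwartChritmann2008}, with no accompanying argument. So while your proof is sound and standard, in the paper's context this result is treated as background to be cited rather than re-derived; a one-line reference would match the paper's intent.
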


\begin{theorem}[Theorem 3.3 of~\citet{BartlettBousquetMendelson05} -- First Part]\label{thm:ConcentrationWithLocalRademacher}
Let $\FF$ be a class of functions with ranges in $[a,b]$ and assume that there are some functional $T: \FF \to \Real^+$ and some constant $B$ such that for every $f \in \FF$, $\Var{f} \leq T(f) \leq B \Pr f$. Let $\Psi$ be a sub-root function and let $r^*$ be the fixed point of $\Psi$.
Assume that for any $r \geq r^*$,  $\Psi$ satisfies
$\Psi(r) \geq B \EE{ R_n \{ f \in \FF: T(f) \leq r \} }$.
Then with $c_1 = 704$ and $c_2 = 26$, for any $K > 1$ and every $x > 0$, with probability at least $1 - e^{-x}$, for any $f \in \FF$, we have
\[
	\Pr f \leq \frac{K}{K-1} \Pr_n f + \frac{c_1 K}{B} r^* + \frac{ x(11(b-a) + c_2 B K)}{n}.
\]
\end{theorem}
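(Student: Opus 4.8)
The plan is to reproduce the localized-complexity argument of \citet{BartlettBousquetMendelson05}, whose three ingredients are a concentration inequality for the supremum of an empirical process (Talagrand/Bousquet), the symmetrization inequality, and the defining property of the sub-root fixed point. The central object is the localized deviation
\[
	V_r \eqdef \sup_{f \in \FF,\, T(f) \le r} \left( \Pr f - \Pr_n f \right),
\]
and the goal is first to control $V_r$ for each fixed radius $r \ge r^*$, and then to convert this into a statement uniform over all $f \in \FF$ by a peeling argument.

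First I would fix $r \ge r^*$ and apply Bousquet's version of Talagrand's inequality to the class $\{f : T(f) \le r\}$. Since $\Var{f} \le T(f) \le r$ on this class and the functions take values in $[a,b]$, this yields, with probability at least $1 - e^{-x}$,
\[
	V_r \le \EE{V_r} + \sqrt{\frac{2 r x}{n}} + \frac{(b-a)x}{3n}.
\]
Next I would bound $\EE{V_r}$ by symmetrization, $\EE{V_r} \le 2 \EE{R_n \{ f : T(f) \le r \} }$, and invoke the hypothesis $\Psi(r) \ge B\, \EE{R_n\{f : T(f)\le r\}}$ together with the sub-root property. Because $r^*$ is the fixed point and $r \mapsto \Psi(r)/\sqrt r$ is non-increasing, for $r \ge r^*$ one has $\Psi(r) \le \sqrt{r r^*}$, hence $\EE{R_n\{f: T(f) \le r\}} \le \sqrt{r r^*}/B$. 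Combining these, for every fixed $r \ge r^*$,
\[
	V_r \le \frac{2\sqrt{r r^*}}{B} + \sqrt{\frac{2 r x}{n}} + \frac{(b-a)x}{3n}
\]
with high probability.

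The last step is the peeling (stratification) over the value of $T(f)$, followed by solving a quadratic. I would cover $\FF$ by the slices $\{f : T(f) \le r^*\}$ and $\{f : 2^{j-1} r^* < T(f) \le 2^j r^*\}$ for $j \ge 1$, apply the fixed-radius bound on each slice with a confidence budget of $x$ plus a logarithmic correction distributed across slices by a union bound (the geometric decay in $j$ makes the total tail summable, contributing only constants). On the slice containing a given $f$ one has $r \le 2 T(f) \le 2 B \Pr f$, so substituting $r \asymp B\,\Pr f$ into the displayed bound gives an inequality of the form
\[
	\Pr f - \Pr_n f \le c\,\sqrt{\frac{r^* \Pr f}{B}} + \sqrt{\frac{2 B \Pr f\, x}{n}} + c'\frac{(b-a)x}{n}.
\]
Applying $2\sqrt{uv} \le t\,u + v/t$ to each of the two square-root terms, with $t$ chosen so that the combined coefficient of $\Pr f$ on the right equals $1/K$, lets me move a $\tfrac{K-1}{K}\Pr f$ term to the left and divide, producing exactly the factor $\frac{K}{K-1}$ in front of $\Pr_n f$ together with the additive terms $\frac{c_1 K}{B} r^*$ and $\frac{x(11(b-a) + c_2 B K)}{n}$.

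The main obstacle is bookkeeping rather than conceptual: obtaining the precise constants $c_1 = 704$ and $c_2 = 26$ requires the sharp form of Talagrand's/Bousquet's inequality with its exact numerical constants, and a careful accounting of the union-bound cost across the peeling slices, so that the logarithmic factors generated by peeling are absorbed into $x/n$ without inflating the leading terms. Conceptually, the decisive point is the sub-root property: it is what allows a single fixed point $r^*$ to control the localized Rademacher complexity simultaneously at every scale $r \ge r^*$, which is exactly what makes the peeling telescope into a clean self-bounding inequality for $\Pr f$.
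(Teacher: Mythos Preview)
The paper does not prove this theorem at all: it is quoted verbatim from \citet{BartlettBousquetMendelson05} in an appendix titled ``Concentration Inequalities'' purely for the reader's convenience, and is then invoked as a black box in the proof of Theorem~\ref{thm:CAPI-ErrorInEachIteration}. So there is no ``paper's own proof'' to compare against.

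That said, your sketch is a faithful outline of the original argument in \citet{BartlettBousquetMendelson05}: Bousquet's version of Talagrand's inequality on the localized class, symmetrization to pass to Rademacher averages, the sub-root fixed-point inequality $\Psi(r) \le \sqrt{r r^*}$ for $r \ge r^*$ to control the complexity at every scale, a geometric peeling over $T(f)$, and finally the AM--GM splitting of the square-root terms with a free parameter that produces the $K/(K-1)$ factor. Your own caveat is the right one: the specific constants $c_1 = 704$ and $c_2 = 26$ come from the exact numerical form of Bousquet's inequality and from how the union bound over peeling slices is absorbed, and reproducing them requires tracking those constants line by line rather than writing $c,c'$.
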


\section{Proof of the Main Result}
\label{sec:CAPI-Appendix-Proofs}

Recall that the goal is to provide an upper bound for the loss $L(\pihat_n)$, where $\pihat_n$ is the minimizer of the distorted empirical loss function $\hat{L}_n$ obtained at each iteration (cf.~\eqref{eq:CAPI-Theory-pihatDefinition}).
The loss function $\hat{L}_n$, however, is defined based on the estimate $\QpiPrimehat$ instead of the true action-value function $\QpiPrime$. This causes some errors. 
So in Lemma~\ref{lem:CAPI-DistortionLemma}, we quantify how well $\pihat_n$ minimizes the loss function $L_n$ (defined based on $\QpiPrime$ and unavailable to the algorithm).
Having this result, we prove Theorem~\ref{thm:CAPI-ErrorInEachIteration}, which upper bounds the true loss $L(\pihat_n)$.

\begin{lemma}[Loss Distortion Lemma]\label{lem:CAPI-DistortionLemma}
Fix a policy $\pi'$. Suppose that $\QpiPrimehat$ is an approximation of the action-value function $\QpiPrime$. Given the dataset $\Dn$, let $\pihat_n$ be defined as~\eqref{eq:CAPI-Theory-pihatDefinition} and define $\pi^*_n \leftarrow \argmin_{\pi \in \Pi} L_n(\pi)$.
Let Assumption~\ref{ass:CAPI-ActionGap} hold. There exist finite $c_1, c_2 > 0$, which depend only on $\zeta$, $c_g$, and $\Qmax$, such that for any $z > 0$, we have
\begin{align*}
	L_n(\hat{\pi}_n)
	\leq
	3 L_n(\pi^*_n) + c_1 \norm{\QpiPrimehat - \QpiPrime}_{\infty,\Dn}^{1+\zeta} + c_2 \frac{z}{n},
\end{align*}
with probability at least $1 - e^{-z}$.
\end{lemma}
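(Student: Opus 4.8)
The plan is to insert an auxiliary loss between $L_n$ (built from the true gap weight $\gapQPrime$ and the true greedy action $\argmax_a\QpiPrime(\cdot,a)$) and $\hat L_n$ (built from the estimated gap weight $\gapQPrimehat$ and the estimated greedy action $\argmax_a\QpiPrimehat(\cdot,a)$), and to move between them on a state-space decomposition driven by the action gap. Concretely, write $\eps\eqdef\smallnorm{\QpiPrimehat-\QpiPrime}_{\infty,\Dn}$, set $\tilde L_n(\pi)\eqdef\int_\XX\gapQPrime(x)\One{\pi(x)\neq\argmax_{a\in\AA}\QpiPrimehat(x,a)}\,\mathrm d\nu_n$ (true weight, estimated action), and define the ``confusion set'' $A_\eps\eqdef\{x:0<\gapQPrime(x)\le 4\eps\}$. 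The elementary engine of the proof is that outside $A_\eps$ the estimate is accurate enough on each action ($\eps<\tfrac14\gapQPrime(x)<\tfrac12\gapQPrime(x)$) to preserve the greedy choice, so $L_n$ and $\tilde L_n$ agree off $A_\eps$; the threshold $4\eps$ (rather than $2\eps$) is chosen so that on $A_\eps^c$ one also has $\gapQPrimehat(x)\ge\gapQPrime(x)-2\eps>2\eps$, which keeps the relative weight perturbation bounded.

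First I would control the empirical mass of $A_\eps$: applying Bernstein's inequality (Lemma~\ref{lem:CAPI-BernsteinInequality}) to the bounded variable $\One{X\in A_\eps}$ and absorbing the variance term by the arithmetic mean--geometric mean inequality gives $\ProbWRT{\nu_n}{X\in A_\eps}\le\tfrac32\ProbWRT{\nu}{X\in A_\eps}+\tfrac{5z}{3n}$ with probability at least $1-e^{-z}$; the rest of the argument is conducted on this single event. Using Assumption~\ref{ass:CAPI-ActionGap} ($\ProbWRT{\nu}{X\in A_\eps}\le c_g(4\eps)^\zeta$) together with $\eps\le 2\Qmax$, any term of the form $\eps\,\ProbWRT{\nu_n}{X\in A_\eps}$ that appears is at most a constant (depending only on $\zeta$, $c_g$, $\Qmax$) times $\eps^{1+\zeta}+z/n$.

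Next I would record two pairwise comparisons, uniform over $\pi\in\Pi$. For $L_n$ versus $\tilde L_n$: the two integrands differ only where the greedy actions disagree, which by the accuracy observation forces $x\in A_\eps$, so $|L_n(\pi)-\tilde L_n(\pi)|\le 4\eps\,\ProbWRT{\nu_n}{X\in A_\eps}$. For $\hat L_n$ versus $\tilde L_n$: on $A_\eps$ the weights differ by at most $2\eps$, while on $A_\eps^c$ the ratios $\tfrac{\gapQPrimehat-\gapQPrime}{\gapQPrime}\le\tfrac12$ and $\tfrac{\gapQPrime-\gapQPrimehat}{\gapQPrimehat}\le 1$ yield the two-sided bound $\tfrac12\tilde L_n(\pi)-\eps\,\ProbWRT{\nu_n}{X\in A_\eps}\le\hat L_n(\pi)\le\tfrac32\tilde L_n(\pi)+2\eps\,\ProbWRT{\nu_n}{X\in A_\eps}$.

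Finally I would chain these through the minimizing property $\hat L_n(\pihat_n)\le\hat L_n(\pi^*_n)$: bounding the right-hand side upward via $\tilde L_n(\pi^*_n)$ and then $L_n(\pi^*_n)$ gives $\hat L_n(\pihat_n)\le\tfrac32 L_n(\pi^*_n)+c_1\eps^{1+\zeta}+c_2 z/n$, and bounding $\hat L_n(\pihat_n)$ downward via $\tilde L_n(\pihat_n)$ and then $L_n(\pihat_n)$ gives $\hat L_n(\pihat_n)\ge\tfrac12 L_n(\pihat_n)-c_1'\eps^{1+\zeta}-c_2' z/n$; comparing the two produces $L_n(\pihat_n)\le 3 L_n(\pi^*_n)+c_1''\eps^{1+\zeta}+c_2'' z/n$, the constant $3$ being $\tfrac32\cdot 2$. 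I expect the main difficulty to be not any individual inequality but the bookkeeping: carrying a single high-probability event through all the steps and making sure every stray $\eps\,\ProbWRT{\nu_n}{X\in A_\eps}$ factor is uniformly converted into an $\eps^{1+\zeta}$-type term so that the final constants depend only on $\zeta$, $c_g$, and $\Qmax$ as claimed.
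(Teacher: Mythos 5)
Your proposal is correct and follows essentially the same route as the paper's own proof: the same auxiliary loss $\tilde L_n$ (true gap weight, estimated greedy action), the same confusion set $A_\eps$ with threshold $4\eps$, the same Bernstein-plus-AM--GM bound $\ProbWRT{\nu_n}{X\in A_\eps}\le\tfrac32\ProbWRT{\nu}{X\in A_\eps}+\tfrac{5z}{3n}$, the same ratio bounds $\tfrac12$ and $1$ on $A_\eps^c$, and the same two-sided chaining through $\hat L_n(\pihat_n)\le\hat L_n(\pi^*_n)$ yielding the factor $3=\tfrac32\cdot 2$. The conversion of $\eps\,\ProbWRT{\nu_n}{X\in A_\eps}$ into $c_1\eps^{1+\zeta}+c_2 z/n$ via Assumption~\ref{ass:CAPI-ActionGap} and $\eps\le 2\Qmax$ is also exactly the paper's bookkeeping.
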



This lemma states that the empirical loss $L_n$ of policy $\pihat_n$, which is the minimizer of the distorted empirical loss $\hat{L}_n$, is still close to the true minimizer of $L_n$, which uses the true, but unavailable, action-value value function $\QpiPrime$.
The difference mainly depends on the error of estimating $\QpiPrime$ by $\QpiPrimehat$. When the problem has a favourable action-gap regularity (i.e., $\zeta > 0$), the error is exponentially dampened by a factor of $1 + \zeta$.
This lemma currently uses the supremum norm of the policy evaluation error, but one might also extend it to an $L_p$-norm result similar to what was done by~\citet{FarahmandNIPS2011}. Note that even though the proof technique here has some similarities with the proofs by \citep{FarahmandNIPS2011}, they are considerably different.

In the proofs, $c_1, c_2, \dotsc$ are constants whose values may change from line to line -- unless specified otherwise.
\begin{proof}[Proof of Lemma~\ref{lem:CAPI-DistortionLemma}]
Let $\eps = \smallnorm{\QpiPrimehat - \QpiPrime}_{\infty,\Dn}$ and define the  set
$A_\eps = \{x:0 < \gapQPrime(x) \leq 4 \eps \}$.
Denote $p = \ProbWRT{\nu}{X \in A_\eps}$.
For any $z > 0$, Bernstein inequality (Lemma~\ref{lem:CAPI-BernsteinInequality} in Appendix~\ref{sec:CAPI-Appendix-ConcentrationInequalities}) shows that
$\ProbWRT{\nu_n}{X \in A_\eps} - \ProbWRT{\nu}{X \in A_\eps} \leq 
\sqrt{\frac{2 p(1-p) z}{n} } + \frac{2z}{3n}$ with probability at least $1 - e^{-z}$.
By the arithmetic mean--geometric mean inequality
\revised{$\sqrt{[p(1-p)] \frac{2z}{n} } \leq \frac{p(1-p)}{2} + \frac{2z}{2n} \leq \frac{p}{2} + \frac{z}{n}$, so we get}
\begin{align}\label{eq:CAPI-DistortionLemma-Proof-Pnun2Pnu}
	\ProbWRT{\nu_n}{X \in A_\eps} \leq \frac{3}{2} \ProbWRT{\nu}{X \in A_\eps} + \frac{5z}{3n}
\end{align}
with probability at least $1 - e^{-z}$. From now on, we focus on the event that this inequality holds.

Define the new auxiliary loss $\tilde{L}_n(\pi) = \int_{\XX} \gapQPrime(x) \One{\pi(x) \neq \argmax_{a \in \AA} \revised{\QpiPrimehat(x,a)}} \mathrm{d} \nu_n$. Notice that unlike $\hat{L}_n(\pi)$, it uses the weighting function $\gapQPrime$ (instead of $\gapQPrimehat$).
%
In the following, for any  $\pi$, we first relate $L_n(\pi)$ to $\tilde{L}_n(\pi)$, and then relate $\tilde{L}_n(\pi)$ to $\hat{L}_n(\pi)$.

\noindent
\textbf{Upper bounding $|L_n(\pi) - \tilde{L}_n(\pi)|$.}
For any $\pi$, 
\ifDoubleColumn
\begin{align*}
	& \left| L_n(\pi) - \tilde{L}_n(\pi) \right | = \\
	& 
	\Bigg |
		\int_\XX
			\gapQPrime(x)	 	\Big[
								\One{\pi(x) \neq \argmax_{a \in \AA} \QpiPrime(x,a)} - {} \\
							& \qquad \qquad \qquad
								\One{\pi(x) \neq \argmax_{a \in \AA} \QpiPrimehat(x,a)}
							\Big] \mathrm{d} \nu_n \Bigg |  \\
	&
	\leq \int_\XX
		\gapQPrime(x)  \One{\argmax_{a \in \AA} \QpiPrime(x,a) \neq \argmax_{a \in \AA} \QpiPrimehat(x,a)}
		\mathrm{d} \nu_n \\
	&
	 = \int_{A_\eps} \gapQPrime(x) \One{\argmax_{a \in \AA} \QpiPrime(x,a) \neq \argmax_{a \in \AA} \QpiPrimehat(x,a)}
		\mathrm{d} \nu_n 
	\\
	& {} + 
	\int_{A^c_\eps} \gapQPrime(x)  \One{\argmax_{a \in \AA} \QpiPrime(x,a) \neq \argmax_{a \in \AA} \QpiPrimehat(x,a)}
		\mathrm{d} \nu_n.	
\end{align*}
\else
\begin{align*}
	& \left| L_n(\pi) - \tilde{L}_n(\pi) \right | = \\
	& 
	\Bigg |
		\int_\XX
			\gapQPrime(x)	 	\Big[
								\One{\pi(x) \neq \argmax_{a \in \AA} \QpiPrime(x,a)} - 
								\One{\pi(x) \neq \argmax_{a \in \AA} \QpiPrimehat(x,a)}
							\Big] \mathrm{d} \nu_n \Bigg |  \leq \\
	&
	\int_\XX
		\gapQPrime(x)  \One{\argmax_{a \in \AA} \QpiPrime(x,a) \neq \argmax_{a \in \AA} \QpiPrimehat(x,a)}
		\mathrm{d} \nu_n = \\
	&
	 \int_{A_\eps} \gapQPrime(x) \One{\argmax_{a \in \AA} \QpiPrime(x,a) \neq \argmax_{a \in \AA} \QpiPrimehat(x,a)}
		\mathrm{d} \nu_n + {}
	\\
	& 
	\int_{A^c_\eps} \gapQPrime(x)  \One{\argmax_{a \in \AA} \QpiPrime(x,a) \neq \argmax_{a \in \AA} \QpiPrimehat(x,a)}
		\mathrm{d} \nu_n.	
\end{align*}
\fi 

Whenever $|\QpiPrimehat(x,a) - \QpiPrime(x,a)| < \frac{1}{2} \gapQPrime(x)$ (for $x \in \Dn$ and $a \in \{1,2\}$), the maximizer action is the same. So on the set $A_\eps^c$, where $\gapQPrime(x) > 4 \eps \geq 4 |\QpiPrimehat(x,a) - \QpiPrime(x,a)|$,
the value of $\One{\argmax_{a \in \AA} \QpiPrime(x,a) \neq \argmax_{a \in \AA} \QpiPrimehat(x,a)}$ is always zero. Thus for any $z > 0$, we have
\ifDoubleColumn
\begin{align}\label{eq:CAPI-DistortionLemma-Proof-L2Ltilde1}
\nonumber
	& \left| L_n(\pi) - \tilde{L}_n(\pi) \right | \leq
	(4 \eps) \ProbWRT{\nu_n}{X \in A_\eps} \leq
	\\ &
	\nonumber
	4 \eps 	\left[
				\frac{3}{2} \ProbWRT{\nu}{X \in A_\eps} + \frac{5z}{3n}
			\right]
	\leq
	\\
	&
	\nonumber
	6 \times 2^{2\zeta} \norm{\QpiPrimehat - \QpiPrime}_{\infty,\Dn}^{1+\zeta} + 
	\frac{20}{3} \norm{\QpiPrimehat - \QpiPrime}_{\infty,\Dn} \frac{z}{n}
	\\ &
	 \leq
	c_1(\zeta) \norm{\QpiPrimehat - \QpiPrime}_{\infty,\Dn}^{1+\zeta} + c_2(\Qmax) \frac{z}{n}.
\end{align}
\else
\begin{align}\label{eq:CAPI-DistortionLemma-Proof-L2Ltilde1}
\nonumber
	\left| L_n(\pi) - \tilde{L}_n(\pi) \right | & \leq
	(4 \eps) \ProbWRT{\nu_n}{X \in A_\eps} \leq
	4 \eps 	\left[
				\frac{3}{2} \ProbWRT{\nu}{X \in A_\eps} + \frac{5z}{3n}
			\right]
	\\
	& \leq
	\nonumber
	6 \times 2^{2\zeta} \norm{\QpiPrimehat - \QpiPrime}_{\infty,\Dn}^{1+\zeta} + 
	\frac{20}{3} \norm{\QpiPrimehat - \QpiPrime}_{\infty,\Dn} \frac{z}{n}
	\\ &
	 \leq
	c_1(\zeta) \norm{\QpiPrimehat - \QpiPrime}_{\infty,\Dn}^{1+\zeta} + c_2(\Qmax) \frac{z}{n}.
\end{align}
\fi
Here we used~\eqref{eq:CAPI-DistortionLemma-Proof-Pnun2Pnu} in the second inequality, Assumption~\ref{ass:CAPI-ActionGap} in the third inequality, and $\smallnorm{\QpiPrimehat - \QpiPrime}_{\infty,\Dn} \leq 2 \Qmax$ in the last one.

\noindent
\textbf{Relation of $\hat{L}_n(\pi)$ to $\tilde{L}_n(\pi)$.}
First note that $|\gapQPrimehat(x) - \gapQPrime(x)| \leq 2 \eps$ (for all $x \in \Dn$).
We also have
\begin{align*}
	\max_{x \in A_\eps^c \cap \Dn} \frac{\gapQPrimehat(x) - \gapQPrime(x) }{\gapQPrime(x)} \leq \frac{2\eps}{4 \eps} = \frac{1}{2}, \\
	\max_{x \in A_\eps^c \cap \Dn} \frac{\gapQPrime(x) - \gapQPrimehat(x)}{\gapQPrimehat(x)} \leq \frac{2\eps}{2 \eps} = 1.
\end{align*}
Thus,
\ifDoubleColumn
{\small
\begin{align*}
	& 	\hat{L}_n(\pi) - \tilde{L}_n(\pi) = \\
	&
	\int_{A_\eps} 
			( \gapQPrimehat(x) - \gapQPrime(x) )
			\One{\pi(x) \neq \argmax_{a \in \AA} \QpiPrimehat(x,a)} \mathrm{d} \nu_n + {}
	\\ &
	\int_{A_\eps^c} 
			\frac{\gapQPrimehat(x) - \gapQPrime(x)}{\gapQPrime(x)}
			\gapQPrime(x) 
			\One{\pi(x) \neq \argmax_{a \in \AA} \QpiPrimehat(x,a)} \mathrm{d} \nu_n
	\\ 
	& {} \leq  {}
	(2 \eps) \ProbWRT{\nu_n}{X \in A_\eps} + \frac{1}{2} \tilde{L}_n(\pi).
\end{align*}
}
\else
\begin{align*}
	 \hat{L}_n(\pi) - \tilde{L}_n(\pi) & = 
	 \int_{A_\eps} 
			( \gapQPrimehat(x) - \gapQPrime(x) )
			\One{\pi(x) \neq \argmax_{a \in \AA} \QpiPrimehat(x,a)} \mathrm{d} \nu_n + {}
	\\ &
	\quad \int_{A_\eps^c} 
			\frac{\gapQPrimehat(x) - \gapQPrime(x)}{\gapQPrime(x)}
			\gapQPrime(x) 
			\One{\pi(x) \neq \argmax_{a \in \AA} \QpiPrimehat(x,a)} \mathrm{d} \nu_n
	\\ 
	 & \leq
	(2 \eps) \ProbWRT{\nu_n}{X \in A_\eps} + \frac{1}{2} \tilde{L}_n(\pi).
\end{align*}
\fi
After re-arranging, we get
\begin{align}\label{eq:CAPI-DistortionLemma-Proof-Lhat2Ltilde1}
	\hat{L}_n(\pi) \leq \frac{3}{2} \tilde{L}_n(\pi) + 2 \eps \, \ProbWRT{\nu_n}{X \in A_\eps}.
\end{align}
Likewise, by writing $\gapQPrime(x) - \gapQPrimehat(x)$ as $\frac{\gapQPrime(x) - \gapQPrimehat(x) }{\gapQPrimehat(x)} \gapQPrimehat(x)$ and doing a similar decomposition of the state space into $A_\eps$ and $A_\eps^c$, we get
\begin{align}\label{eq:CAPI-DistortionLemma-Proof-Lhat2Ltilde2}
	\hat{L}_n(\pi) \geq \frac{1}{2} \tilde{L}_n(\pi) - \eps \ProbWRT{\nu_n}{X \in A_\eps}.
\end{align}

We use the optimizer property of $\hat{\pi}_n$ (which implies that $\hat{L}_n(\hat{\pi}_n) \leq \hat{L}_n(\pi^*_n)$), apply~\eqref{eq:CAPI-DistortionLemma-Proof-Lhat2Ltilde1}, and finally use inequalities~\eqref{eq:CAPI-DistortionLemma-Proof-L2Ltilde1} and~\eqref{eq:CAPI-DistortionLemma-Proof-Pnun2Pnu} to get
\ifDoubleColumn
\begin{align*}
	\hat{L}_n(\hat{\pi}_n) & \leq 
	\hat{L}_n(\pi^*_n) \leq
	\frac{3}{2} \tilde{L}_n(\pi^*_n) + (2 \eps) \ProbWRT{\nu_n}{X \in A_\eps} 
	\\ & \leq {}
	\frac{3}{2} \left[ L_n(\pi^*_n) + c_1 \norm{\QpiPrimehat - \QpiPrime}_{\infty,\Dn}^{1+\zeta} + c_2 \frac{z}{n} \right] + {}
	\\ &
	\quad \,
	(2 \eps) \left[ 
				\frac{3}{2} \ProbWRT{\nu}{X \in A_\eps} + \frac{5}{3} \frac{z}{n}
			\right].
\end{align*}
\else
\begin{align*}
	\hat{L}_n(\hat{\pi}_n) & \leq 
	\hat{L}_n(\pi^*_n) \leq
	\frac{3}{2} \tilde{L}_n(\pi^*_n) + (2 \eps) \ProbWRT{\nu_n}{X \in A_\eps} 
	\\ & \leq {}
	\frac{3}{2} \left[ L_n(\pi^*_n) + c_1 \norm{\QpiPrimehat - \QpiPrime}_{\infty,\Dn}^{1+\zeta} + c_2 \frac{z}{n} \right] + {}
	(2 \eps) \left[ 
				\frac{3}{2} \ProbWRT{\nu}{X \in A_\eps} + \frac{5}{3} \frac{z}{n}
			\right].
\end{align*}
\fi
From~\eqref{eq:CAPI-DistortionLemma-Proof-Lhat2Ltilde2} and by applying~\eqref{eq:CAPI-DistortionLemma-Proof-L2Ltilde1}, we also have
\ifDoubleColumn
\begin{align*}
	\hat{L}_n(\hat{\pi}_n) & \geq \frac{1}{2} \tilde{L}_n(\hat{\pi}_n) - \eps \, \ProbWRT{\nu_n}{X \in A_\eps}\\
	& \geq
	\frac{1}{2}	\left[
				L_n(\pihat_n) - c_1 \norm{\QpiPrimehat - \QpiPrime}_{\infty,\Dn}^{1+\zeta} - c_2 \frac{z}{n} 			\right] - {}
	\\ &
	\quad \; \;
	\eps \left[  \frac{3}{2} \ProbWRT{\nu}{X \in A_\eps} + \frac{5z}{3n} \right].
\end{align*}
\else
\begin{align*}
	\hat{L}_n(\hat{\pi}_n) & \geq \frac{1}{2} \tilde{L}_n(\hat{\pi}_n) - \eps \, \ProbWRT{\nu_n}{X \in A_\eps}\\
	& \geq
	\frac{1}{2}	\left[
				L_n(\pihat_n) - c_1 \norm{\QpiPrimehat - \QpiPrime}_{\infty,\Dn}^{1+\zeta} - c_2 \frac{z}{n} 			\right] - {}
	\eps \left[  \frac{3}{2} \ProbWRT{\nu}{X \in A_\eps} + \frac{5z}{3n} \right].
\end{align*}
\fi 
These two inequalities imply that
$L_n(\pihat_n) \leq 3 L_n(\pi^*_n) + c_1 \smallnorm{\QpiPrimehat - \QpiPrime}_{\infty,\Dn}^{1+\zeta} + c_2 \frac{z}{n}$ in the event that~\eqref{eq:CAPI-DistortionLemma-Proof-Pnun2Pnu} holds, which has probability at least $1 - e^{-z}$.
\end{proof}


\begin{proof}[Proof of Theorem~\ref{thm:CAPI-ErrorInEachIteration}] 
We use Theorem 3.3 by~\citet{BartlettBousquetMendelson05} (quoted as Theorem~\ref{thm:ConcentrationWithLocalRademacher} in Appendix~\ref{sec:CAPI-Appendix-ConcentrationInequalities}).
For function $l(\pi)(x) = \gapQPrime(x)  \One{\pi(x) \neq \argmax_{a \in \AA} \QpiPrime(x,a)}$, we have
\ifDoubleColumn
\begin{align*}
	\Var{l(\pi)(X)} & \leq \EE{\left| \gapQPrime(X) \One{\pi(X) \neq \argmax_{a \in \AA} \QpiPrime(X,a)} \right|^2 }
	\\
	& \leq 
2\Qmax \EE{l(\pi)(X)},
\end{align*}
\else
\begin{align*}
	\Var{l(\pi)(X)} \leq \EE{\left| \gapQPrime(X) \One{\pi(X) \neq \argmax_{a \in \AA} \QpiPrime(X,a)} \right|^2 }
\leq 
2\Qmax \EE{l(\pi)(X)},
\end{align*}
\fi
so the variance condition of that theorem is satisfied. If we have a function $\Psi$ as defined  in~\eqref{eq:CAPI-ComplexityCondition}, the theorem states that there exist $c_1, c_2> 0$ such that for any $z > 0$ and any $\pi \in \Pi$ (including $\pihat_n \in \Pi$),
\begin{align}\label{eq:CAPI-ErrorInEachIteration-Proof-GeneralConcentrationBound}
	L(\pi) = \Pr l(\pi) \leq 2P_n l(\pi) + c_1 r^* + c_2 \frac{z}{n},
\end{align}
with probability at least $1 - e^{-z}$ ($c_1$ can be chosen as \revised{$704/\Qmax$} and $c_2$ can be chosen as $126 \, \Qmax$). 

Let $\pioptInPi  \leftarrow \argmin_{\pi \in \Pi} L(\pi)$ be the minimizer of the expected loss in policy space $\Pi$.
Consider~\eqref{eq:CAPI-ErrorInEachIteration-Proof-GeneralConcentrationBound} with the choice of $\pi = \pihat_n$, and add and subtract $6P_n l(\pioptInPi)$ and $6 \Pr l(\pioptInPi)$ and then use Lemma~\ref{lem:CAPI-DistortionLemma}. With probability at least $1 - 2 e^{-z}$, we get
\ifDoubleColumn
\begin{align}
\nonumber
	L(\pihat_n) \leq {} &
				2 \Pr_n l(\pihat_n) - 6 \left[ \Pr_n l(\pioptInPi) - \Pr_n l(\pioptInPi) \right]
				\\ \nonumber &
				{} - 6 \left[ \Pr l(\pioptInPi) - \Pr l(\pioptInPi) \right] + c_1 r^* + c_2 \frac{z}{n}
	\\ \nonumber
			\leq {} &
			6 \left[ \Pr_n l(\pi^*_n) -  \Pr_n l(\pioptInPi) \right] + 6 \left[ \Pr_n l(\pioptInPi) - \Pr l(\pioptInPi) \right]
	\\ \nonumber &
			+ 6 \Pr l(\pioptInPi)  + c_1 r^* + c_2 \norm{\QpiPrimehat - \QpiPrime}_{\infty,\Dn}^{1+\zeta} +  c_3 \frac{z}{n} 
			\\ 
			\nonumber
			\leq {} &
			6 \left[ \Pr_n l(\pioptInPi) - \Pr l(\pioptInPi) \right] + 6 \Pr l(\pioptInPi) + c_1 r^* 
			 \\ &
			{} + c_2 \norm{\QpiPrimehat - \QpiPrime}_{\infty,\Dn}^{1+\zeta} + c_3 \frac{z}{n},\label{eq:CAPI-ErrorInEachIteration-Proof-FirstBoundOnLpihatn}
\end{align}
\else
\begin{align}
\nonumber
	& L(\pihat_n) \leq
				2 \Pr_n l(\pihat_n) - 6 \left[ \Pr_n l(\pioptInPi) - \Pr_n l(\pioptInPi) \right]
				{} - 6 \left[ \Pr l(\pioptInPi) - \Pr l(\pioptInPi) \right] + c_1 r^* + c_2 \frac{z}{n}
				\leq
	\\ \nonumber
			&
			6 \left[ \Pr_n l(\pi^*_n) -  \Pr_n l(\pioptInPi) \right] + 6 \left[ \Pr_n l(\pioptInPi) - \Pr l(\pioptInPi) \right]
			+ 6 \Pr l(\pioptInPi)  + c_1 r^* + c_2 \norm{\QpiPrimehat - \QpiPrime}_{\infty,\Dn}^{1+\zeta} +  c_3 \frac{z}{n} \leq
			\\  & 
			6 \left[ \Pr_n l(\pioptInPi) - \Pr l(\pioptInPi) \right] + 6 \Pr l(\pioptInPi) + c_1 r^* 
			{} + c_2 \norm{\QpiPrimehat - \QpiPrime}_{\infty,\Dn}^{1+\zeta} + c_3 \frac{z}{n},\label{eq:CAPI-ErrorInEachIteration-Proof-FirstBoundOnLpihatn}
\end{align}
\fi 
where in the last inequality we used the minimizing property of $\pi^*_n$, i.e., $\Pr_n l(\pi^*_n) - \Pr_n l(\pioptInPi) \leq 0$.
\revised{Here $c_2$ can be chosen as $36 \times 2^{2\zeta}$.}

To upper bound $\Pr_n l(\pioptInPi) - \Pr l(\pioptInPi)$, we apply Bernstein inequality (Lemma~\ref{lem:CAPI-BernsteinInequality} in Appendix~\ref{sec:CAPI-Appendix-ConcentrationInequalities}) to get that for any $ z > 0$,
$\Pr_n l(\pioptInPi) - \Pr l(\pioptInPi) \leq \sqrt{ \frac{2 \Var{l(\pioptInPi)} z}{ n} } + \frac{4 \Qmax z}{3 n}$, with probability at least $1 - e^{-z}$.
Since $\Var{l(\pioptInPi)} \leq 2 \Qmax \Pr l(\pioptInPi)$ (as shown above), by the application of arithmetic mean--geometric mean inequality we obtain 
$\Pr_n l(\pioptInPi) - \Pr l(\pioptInPi) \leq \Pr l(\pioptInPi) + \frac{7 \Qmax z}{3 n}$ with the same probability.
This and~\eqref{eq:CAPI-ErrorInEachIteration-Proof-FirstBoundOnLpihatn} result in
\begin{align*}
	L(\pihat_n) \leq 12 \Pr l(\pioptInPi) + c_1 r^* + c_2 \norm{\QpiPrimehat - \QpiPrime}_{\infty,\Dn}^{1+\zeta} + c_3 \frac{z}{n},
\end{align*}
with probability at least $1 - 3 e^{-z}$ as desired.
\end{proof}


\begin{proof}[Proof of Theorem~\ref{thm:CAPI-PerformanceLoss}]
It is shown by~\citep{LazaricGhavamzadehMunosDPI2010} that
\ifDoubleColumn
\begin{align*}
	& \Vopt - V^{\pi_{K}} \leq 
	\\ &
	\sum_{k=0}^{K-1} \gamma^{K-k-1} (\PKernel^\piopt)^{K-k-1}
		\sum_{m \geq 0} \gamma^m (\PKernel^{\pi_{k+1}})^m l^{\pi_{k}}(\pi_{k+1})
		\\ &
	+ (\gamma \PKernel^\piopt)^K (\Vopt - V^{\pi_{0}}).
\end{align*}
\else
\begin{align*}
	& \Vopt - V^{\pi_{K}} \leq 
	\sum_{k=0}^{K-1} \gamma^{K-k-1} (\PKernel^\piopt)^{K-k-1}
		\sum_{m \geq 0} \gamma^m (\PKernel^{\pi_{k+1}})^m l^{\pi_{k}}(\pi_{k+1})
	+ (\gamma \PKernel^\piopt)^K (\Vopt - V^{\pi_{0}}).
\end{align*}
\fi 
We apply $\rho$ to both sides and use the definition of $c_{\rho,\nu}(m_1;m_2;\pi)$ to get  
\ifDoubleColumn
\begin{align*}
	& \rho (\Vopt - V^{\pi_{K}}) \leq \\
& 
\sum_{k=0}^{K-1} \gamma^{K-k-1} \sum_{m \geq 0} \gamma^m c_{\rho,\nu}(K-k-1,m;\pi_{k+1}) \, \nu l^{\pi_{k}}(\pi_{k+1}) \\
	&
	  {} + \gamma^K (2 \Qmax).
\end{align*}
\else
\begin{align*}
	& \rho (\Vopt - V^{\pi_{K}}) \leq 
\sum_{k=0}^{K-1} \gamma^{K-k-1} \sum_{m \geq 0} \gamma^m c_{\rho,\nu}(K-k-1,m;\pi_{k+1}) \, \nu l^{\pi_{k}}(\pi_{k+1})
	  {} + \gamma^K (2 \Qmax).
\end{align*}
\fi 
Recall that $\nu l^{\pi_{k}}(\pi_{k+1}) = L^{\pi_{k}}(\pi_{k+1})$. We decompose $\gamma$ to $\gamma^s \gamma^{(1-s)}$ (for $0 \leq s \leq 1$) and separate terms involving the concentrability coefficients and those related to $L^{\pi_{k}}(\pi_{k+1})$.
We then have for any $0 \leq s \leq 1$,
%
\ifDoubleColumn
\begin{align*}
& \rho (\Vopt - V^{\pi_{K}}) \leq 
\\ &
\max_{0 \leq k \leq K-1} \left [ \gamma^{s(K-k-1)} L^{\pi_{k}}(\pi_{k+1}) \right] \times
\\ & 
\, \sum_{k'=0}^{K-1} \gamma^{(1-s)k'} \sum_{m \geq 0} \gamma^m \sup_{\pi' \in \Pi} c_{\rho,\nu} (k',m;\pi') + \gamma^K (2 \Qmax).
\end{align*}
\else
\begin{align*}
& \rho (\Vopt - V^{\pi_{K}}) \leq 
\max_{0 \leq k \leq K-1} \left [ \gamma^{s(K-k-1)} L^{\pi_{k}}(\pi_{k+1}) \right] 
\, \sum_{k'=0}^{K-1} \gamma^{(1-s)k'} \sum_{m \geq 0} \gamma^m \sup_{\pi' \in \Pi} c_{\rho,\nu} (k',m;\pi') + \gamma^K (2 \Qmax).
\end{align*}
\fi 

Taking the infimum w.r.t. $s$ and using the definition of $C_{\rho,\nu}(K)$, we get that for $\mathrm{Loss}(\pi_{K};\rho) = \rho (\Vopt - V^{\pi_{K}})$,
\ifDoubleColumn
\begin{align}\label{thm:CAPI-PerformanceLoss-Proof-ErrorPropagation}
\nonumber
	& \mathrm{Loss}(\pi_{K};\rho) \leq 	
	\\ & \nonumber
	\frac{2}{1-\gamma} 
		\Big[
		\inf_{s \in [0,1]} C_{\rho,\nu}(K,s)
		\max_{0 \leq k \leq K-1}	[\gamma^{s(K-k-1)} L^{\pi_{k}}(\pi_{k+1}) ] 
		\\ & \qquad \quad
			 {} + \gamma^K \Rmax
		\Big].
\end{align}
\else
\begin{align}\label{thm:CAPI-PerformanceLoss-Proof-ErrorPropagation}
	& \mathrm{Loss}(\pi_{K};\rho) \leq 	
	\frac{2}{1-\gamma} 
		\Bigg[
		\inf_{s \in [0,1]} C_{\rho,\nu}(K,s)
		\max_{0 \leq k \leq K-1}	[\gamma^{s(K-k-1)} L^{\pi_{k}}(\pi_{k+1}) ] 
			 {} + \gamma^K \Rmax
		\Bigg].
\end{align}
\fi 

Fix $ 0 < \delta < 1$.
For each iteration $k = 0,\dotsc, K-1$, by invoking Theorem~\ref{thm:CAPI-ErrorInEachIteration} with the confidence parameter $\delta / K$, we get
$L^{\pi_{k}}(\pi_{k+1})
	\leq
	12 \inf_{\pi \in \Pi} L^{\pi_{k}} (\pi) + c_1 r^* + c_2 \smallnorm{\hat{Q}^{\pi_{k}} - Q^{\pi_{k}} }_{\infty,\Dn^{k}}^{1 + \zeta}
	 +  c_3 \frac{\ln(K/\delta)}{n}$, which holds with probability at least $1 - \delta/K$.
Since $\inf_{\pi \in \Pi} L^{\pi_{k}} (\pi) \leq d(\Pi)$, the previous set of inequalities alongside~\eqref{thm:CAPI-PerformanceLoss-Proof-ErrorPropagation} imply the desired result.
\end{proof}

\revised{
We would like to remark that to extend the analysis to $\actionnum > 2$, Lemma~\ref{lem:CAPI-DistortionLemma} is the main result that should be modified. The proofs of Theorems~\ref{thm:CAPI-ErrorInEachIteration} and~\ref{thm:CAPI-PerformanceLoss} remain intact.
}

\section{Upper bound on $r^*$}
\label{sec:CAPI-Appendix-rstar}
The following proposition provides a distribution-free upper bound for $r^*$ \eqref{eq:CAPI-ComplexityCondition} for policy space $\Pi$ with a finite VC-dimension.
We closely follow the proof of Corollary 3.7 of~\citet{BartlettBousquetMendelson05}.
\begin{proposition}\label{prop:CAPI-VC2Rad}
Suppose that policy space $\Pi$ has a finite VC-dimension $d$. There exists constant $c > 0$, which is independent of $n$ and $d$, such that for $n \geq d$, complexity condition~\eqref{eq:CAPI-ComplexityCondition} is satisfied with $r^* \leq \frac{c d \log(n/d)}{n}$.
\end{proposition}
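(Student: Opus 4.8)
The plan is to follow the proof of Corollary~3.7 of~\citet{BartlettBousquetMendelson05}, the only new ingredient being that the loss functions carry the fixed bounded weight $\gapQPrime$ rather than being plain indicators. First I would set $h_\pi(x) = \One{\pi(x) \neq \pihat(x;\QpiPrime)}$, so that $l^{\pi'}(\pi) = \gapQPrime \cdot h_\pi$ with $0 \leq \gapQPrime \leq 2\Qmax$ pointwise, and observe that $\HH := \{h_\pi : \pi \in \Pi\}$ is obtained from $\Pi$ by taking the symmetric difference with the fixed set $\{\,x : \pihat(x;\QpiPrime) = 2\,\}$; such a relabelling does not increase the VC-dimension, so $\mathrm{VC}(\HH) \leq \mathrm{VC}(\Pi) = d$. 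In particular $\GG_\Pi := \{l^{\pi'}(\pi) : \pi \in \Pi\}$ is uniformly bounded by $2\Qmax$.

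Next I would control the $L_2$-covering numbers of $\GG_\Pi$. By Haussler's bound there is a universal constant $C$ with $N(\delta,\HH,L_2(Q)) \leq C\,d\,(1/\delta)^{2d}$ for every probability measure $Q$ on $\XX$ and every $0 < \delta \leq 1$. Since $\|\gapQPrime h - \gapQPrime h'\|_{L_2(Q)} \leq 2\Qmax\,\|h - h'\|_{L_2(Q)}$, an $\tfrac{\epsilon}{2\Qmax}$-cover of $\HH$ yields an $\epsilon$-cover of $\GG_\Pi$, whence $N(\epsilon,\GG_\Pi,L_2(Q)) \leq C\,d\,(2\Qmax/\epsilon)^{2d}$ for $0 < \epsilon \leq 2\Qmax$. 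Thus $\GG_\Pi$ is a VC-type class with envelope $2\Qmax$ and uniform entropy $\log N(\epsilon,\GG_\Pi,L_2(Q)) \leq c\,d\,\log(\Qmax/\epsilon)$.

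With this in hand, the standard localized chaining bound for VC-type classes (Dudley's entropy integral together with the empirical/population comparison of localized Rademacher averages used in the proof of Corollary~3.7 of~\citet{BartlettBousquetMendelson05}, passing to the star-hull if needed, which affects only constants) yields, for every $r \geq 1/n$,
\begin{align*}
	2\Qmax\,\EE{R_n \cset{l^{\pi'}(\pi)}{\pi \in \Pi,\ \Pr[l^{\pi'}(\pi)]^2 \leq r}}
	& \leq
	c\,\Qmax\!\left(\sqrt{\frac{d\,r\,\log(\Qmax^2 n)}{n}} + \frac{d\,\Qmax\,\log(\Qmax^2 n)}{n}\right) =: \Psi(r),
\end{align*}
where I have bounded $\log(\Qmax/\sqrt r)$ by $\log(\Qmax\sqrt n) \leq \tfrac12\log(\Qmax^2 n)$ (the bound being of interest only for $r \geq r^* \gtrsim 1/n$); the function $\Psi$ is non-negative, non-decreasing, and $r \mapsto \Psi(r)/\sqrt r$ is non-increasing, hence sub-root. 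Solving $\Psi(r^*) = r^*$ — the square-root term dominates once $r \gtrsim \Qmax^2 d\log(\Qmax^2 n)/n$ — gives $r^* \leq c'\,\Qmax^2\,d\log(\Qmax^2 n)/n$, so condition~\eqref{eq:CAPI-ComplexityCondition} holds with this $r^*$.

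To sharpen $\log n$ to $\log(n/d)$ I would re-run the chaining step with the Dudley integral truncated at the localization radius $\sqrt{r^*}$ rather than at the envelope: the relevant logarithmic factor is then $\log(\Qmax/\sqrt{r^*})$ evaluated at $r^* \asymp d\log(n/d)/n$, which for $n \geq d$ is at most $c\log(n/d)$; substituting back produces a sub-root majorant whose fixed point obeys $r^* \leq c\,d\log(n/d)/n$, with $c$ depending only on $\Qmax$. The only point requiring care is this localized chaining estimate — relating the population constraint $\Pr[l^{\pi'}(\pi)]^2 \leq r$ to an empirical one and extracting the fast $d/n$ rate with the correct logarithmic factor — but this is precisely the machinery behind Corollary~3.7 of~\citet{BartlettBousquetMendelson05}, and the fixed weight $\gapQPrime$ disturbs none of it, since it merely rescales the envelope and the covering-number constants.
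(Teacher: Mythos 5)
Your proposal is correct and follows essentially the same route as the paper: both reduce the localized Rademacher complexity to a Dudley entropy integral (truncated at the localization radius, after the population-to-empirical comparison of Corollary 2.2 / 3.7 of Bartlett--Bousquet--Mendelson), transfer covering numbers from $\Pi$ to the loss class via the $2\Qmax$-Lipschitz rescaling by the bounded weight $\gapQPrime$, invoke the VC entropy bound $\log \cN(\eps,\Pi,L_2(\Pr_n)) \lesssim d\log(1/\eps)$, and solve the resulting fixed-point equation to get $r^* \leq c\,d\log(n/d)/n$. Your explicit check that the relabelled indicator class $\HH$ has VC-dimension at most $d$ is a small point the paper leaves implicit, but the argument is otherwise the same.
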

\begin{proof}
The proof has three main steps: 1) the Rademacher complexity of  $\cset{l(\pi)}{\pi \in \Pi, \Pr l^2(\pi) \leq r}$ (the RHS of~\eqref{eq:CAPI-ComplexityCondition}) is upper bounded by the integral of the metric entropy (i.e., logarithm of the covering number),  2) the metric entropy of that set is related to the metric entropy of $\Pi$, and  3) the metric entropy of $\Pi$ is related to its VC-dimension.

\noindent 1) Define the sub-root function
\ifDoubleColumn $ \else \[ \fi
	\Psi(r) = 20 \Qmax \EE{ R_n \cset{l(\pi)}{ \pi \in \Pi, \Pr l^2(\pi) \leq r}  }
+ 44 \Qmax^2 \frac{\log n}{n}.
\ifDoubleColumn $ \else \] \fi
If $r \geq \Psi(r)$, Corollary 2.2 of~\citet{BartlettBousquetMendelson05} indicates that with probability at least $1 - 1/n$, we have
$\cset{l(\pi)}{\pi \in \Pi, \Pr l^2(\pi) \leq r} \subseteq
\cset{l(\pi)}{\pi \in \Pi, \Pr_n l^2(\pi) \leq 2r}$. 
Thus, we can upper bound the Rademacher complexity of $\cset{l(\pi)}{\pi \in \Pi, \Pr l^2(\pi) \leq r}$ as 
\ifDoubleColumn $ \else \[ \fi
\EE{ R_n \cset{l(\pi)}{\pi \in \Pi, \Pr l^2(\pi) \leq r} }
\leq
\EE{R_n \cset{l(\pi)}{\pi \in \Pi, \Pr_n l^2(\pi) \leq 2r }} + \frac{2\Qmax}{n}.
\ifDoubleColumn $ \else \] \fi
The fixed-point equation $r^* = \Psi(r^*)$ satisfies
\ifDoubleColumn
\begin{align}\label{eq:CAPI-VC2Rad-FixedPointCondition}
\nonumber
r^* \leq & 20 \Qmax \EE{R_n \cset{l(\pi)}{\pi \in \Pi, \Pr_n l^2(\pi) \leq 2r^* }} +
	\\ &
	 \frac{2 \Qmax + 44 \Qmax^2 \log n}{n}.
\end{align}
\else
\begin{align}\label{eq:CAPI-VC2Rad-FixedPointCondition}
r^* \leq & 20 \Qmax \EE{R_n \cset{l(\pi)}{\pi \in \Pi, \Pr_n l^2(\pi) \leq 2r^* }} +
	 \frac{2 \Qmax + 44 \Qmax^2 \log n}{n}.
\end{align}
\fi
Using the metric entropy integral upper bound for the Rademacher complexity (Theorem A.7 of \citet{BartlettBousquetMendelson05}; originally from~\citet{Dudley1999}), we get that there exists a constant $C > 0$ such that
\ifDoubleColumn
\begin{align*}
& \EE{R_n \cset{l(\pi)}{\pi \in \Pi, \Pr_n l^2(\pi) \leq 2r^* }}
\leq
\\
& 
\frac{C}{\sqrt{n} }
\EE{	\int_{0}^{\sqrt{2 r^*} } 
	\sqrt{ \revised{\log} \cN(\eps, \cset{l(\pi)}{\pi \in \Pi }, L_2(\Pr_n) ) } \; \mathrm{d} \eps
	}.
\end{align*}
\else
\begin{align*}
& \EE{R_n \cset{l(\pi)}{\pi \in \Pi, \Pr_n l^2(\pi) \leq 2r^* }}
\leq
\frac{C}{\sqrt{n} }
\EE{	\int_{0}^{\sqrt{2 r^*} } 
	\sqrt{ \revised{\log} \cN(\eps, \cset{l(\pi)}{\pi \in \Pi }, L_2(\Pr_n) ) } \; \mathrm{d} \eps
	}.
\end{align*}
\fi
Here $\cN(\eps, \cset{l(\pi)}{\pi \in \Pi }, L_2(\Pr_n) )$ is an $\eps$-covering number of the set $\cset{l(\pi)}{\pi \in \Pi }$ w.r.t. the $L_2(\Pr_n)$-norm (cf. Chapter 9 of~\citet{Gyorfi02}).

\noindent 2) Since for any $\pi_1$ and $\pi_2$, we have $| l(x;\pi_1) - l(x;\pi_2) |^2 \leq (2 \Qmax)^2 | \One{\pi_1(x) \neq \pi_2(x)}|^2$, a $u$-covering for $\Pi$ w.r.t. $L_2(\Pr_n)$ induces a $2\Qmax u$-covering for $\cset{l(\pi)}{\pi \in \Pi }$. Therefore,
$\cN(\eps, \cset{l(\pi)}{\pi \in \Pi }, L_2(\Pr_n) ) \leq 
\cN(\frac{\eps}{2 \Qmax}, \Pi, L_2(\Pr_n) )$.

\noindent 3) For a class $\Pi$ with finite VC-dimension $d$, its metric entropy can be upper bounded by
$\log \cN(\eps,\Pi,L_2(\Pr_n) ) \leq c d \log(1/\eps)$ for some constant $c > 0$.
Thus, 
\ifDoubleColumn
\begin{align*}
& \EE{R_n \cset{l(\pi)}{\pi \in \Pi, \Pr_n l^2(\pi) \leq 2r^* }} \leq 
\\ &
\sqrt{\frac{c d r^* \log(1/r^*)}{n} }
\leq
\sqrt{c \left( \frac{d^2}{n^2} + \frac{d r^* \log(n/ed)}{n} \right) }.
\end{align*}
\else
\begin{align*}
& \EE{R_n \cset{l(\pi)}{\pi \in \Pi, \Pr_n l^2(\pi) \leq 2r^* }} \leq 
\sqrt{\frac{c d r^* \log(1/r^*)}{n} }
\leq
\sqrt{c \left( \frac{d^2}{n^2} + \frac{d r^* \log(n/ed)}{n} \right) }.
\end{align*}
\fi
Using this upper bound and solving for $r^*$ in~\eqref{eq:CAPI-VC2Rad-FixedPointCondition} one get that for $n \geq d$, we have $r^* \leq \frac{c d \log(n/d)}{n}$.
\end{proof}
\fi

\else
	\appendices
	

\section{Additional Experiments}
\label{sec:CAPI-Appendix-Experiments(Additional)}

We present some additional experiments to show that CAPI is a quite flexible framework and that the algorithms derived from it (by choosing PolicyEval and policy space $\Pi$) can be quite competitive.

The benchmark domains that we consider are Mountain-Car (2-dimensional state space) and Pole Balancing (4-dimensional state space) -- both standard benchmarks in the reinforcement learning community.
As a showcase of CAPI's flexibility in the choice of the policy evaluation method and policy space $\Pi$, we use different algorithms for each domain.\footnote{The source codes of our experiments will be available online.}

\subsection{Mountain-Car}
\label{sec:CAPI-Experiments-MountainCar}
\begin{figure*}[tb]
\centering
  \subfigure[Steps to goal]{ 
   \includegraphics[width=0.45 \linewidth]{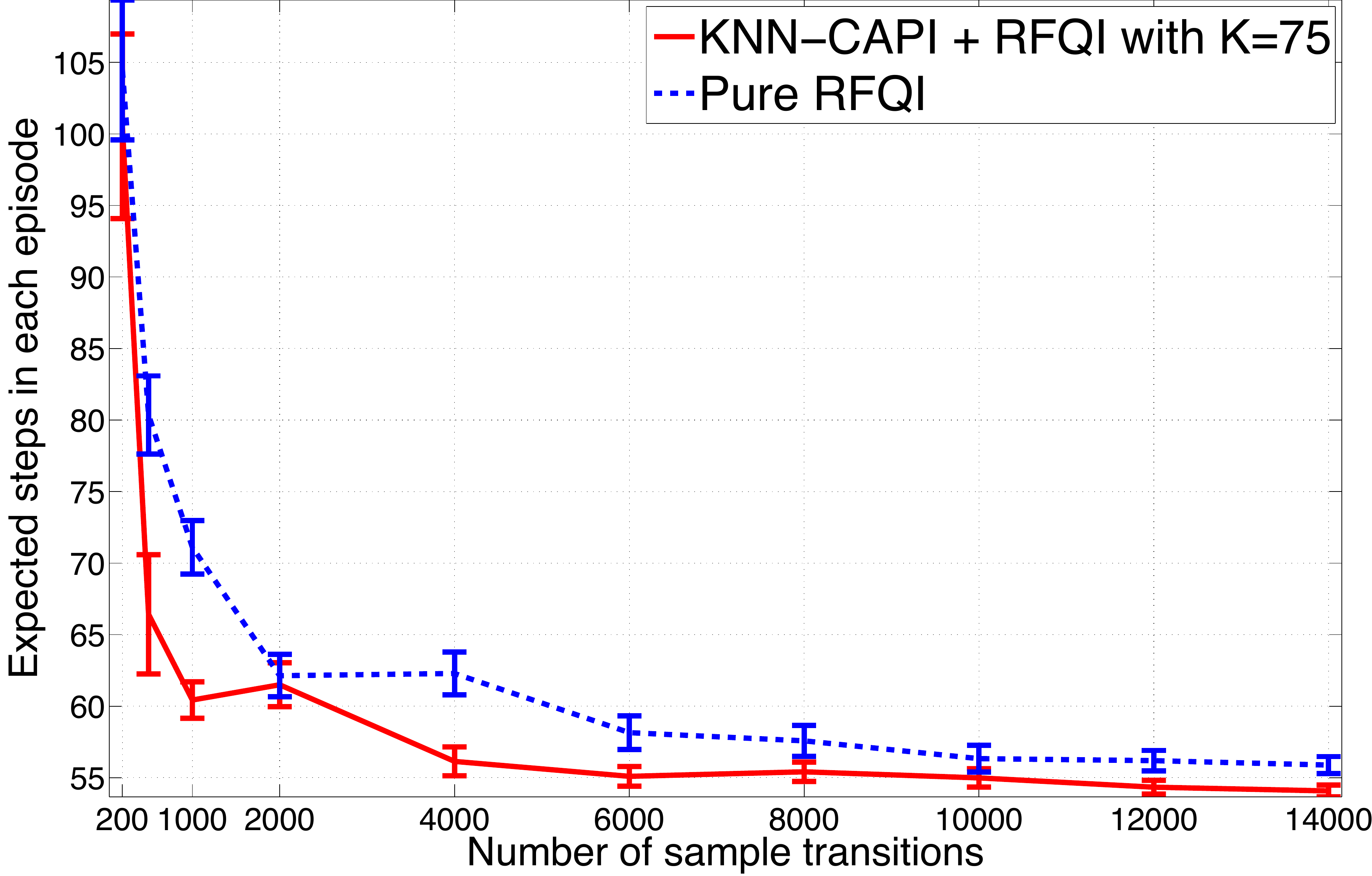}
    }
  \subfigure[Return]{ 
   \includegraphics[width=0.45 \linewidth]{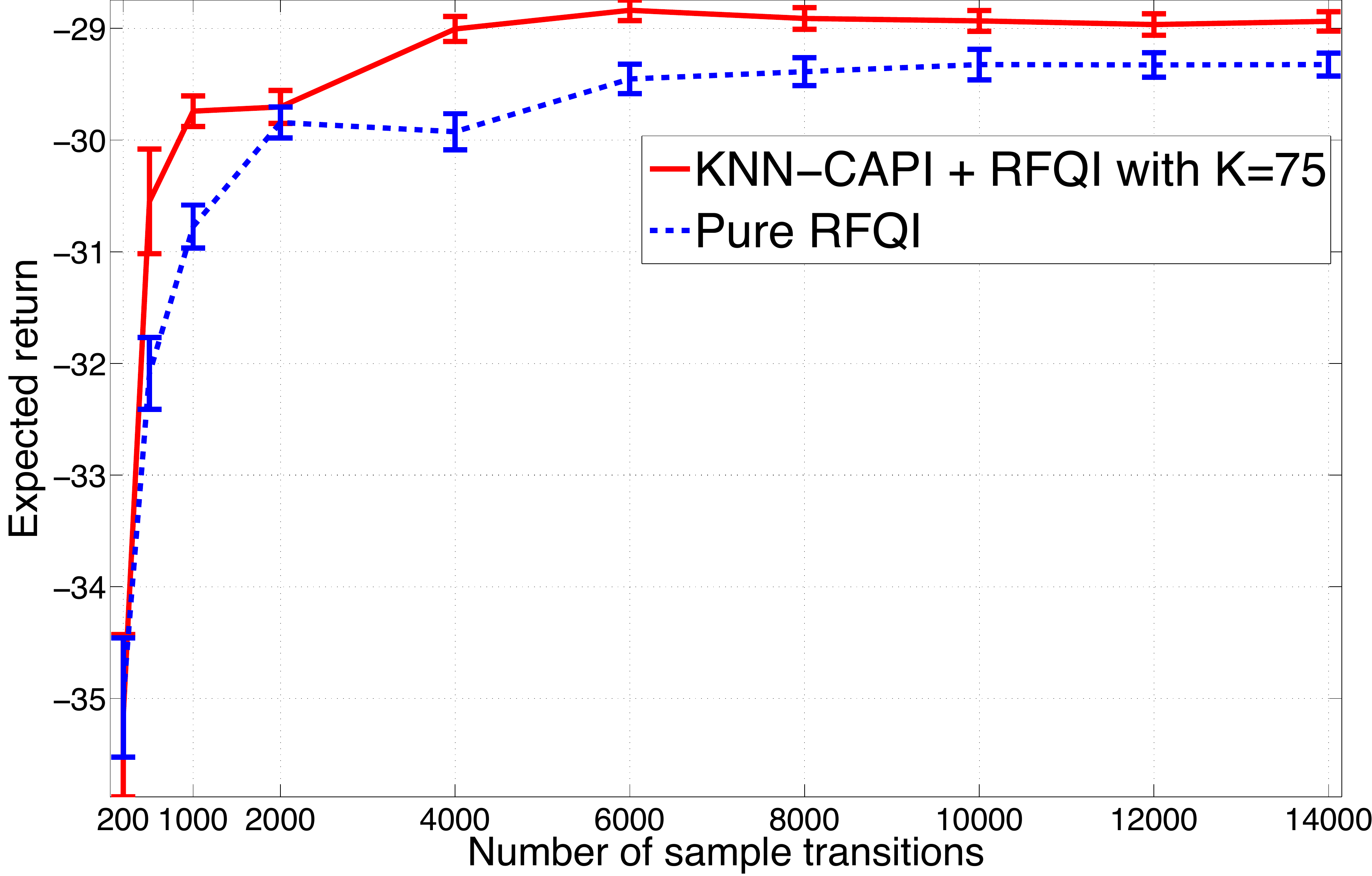}
    }
 \caption{(Mountain-Car) Comparing the expected (a) number of steps to goal and (b) return in each episode for KNN-CAPI vs. Regularized Fitted Q-Iteration as a function of sample size. The error bars show one standard error over 50 runs.}
 \label{fig:CAPI-MountainCar-KNNCAPIvsRFQI-Main}
\end{figure*}

We compare an instantiation of CAPI with a pure value-based approach on the Mountain-Car task~\citep{Sutton98}, which has a 2-dimensional state space.
The choice of value-based approach is the nonparametric kernelized Regularized Fitted Q-Iteration (RFQI) algorithm~\citep{FarahmandACC09}.
For CAPI, we have to choose the policy evaluation method PolicyEval and policy space $\Pi$ (cf. Algorithm~\ref{alg:CAPI}).
For a given policy $\pi_{k}$, we perform one iteration of RFQI (used for policy evaluation only, that is, there is no policy improvement) to estimate $\hat{Q}^{\pi_{k}}$. 
This is similar to how CAPI was implemented in the 1D Chain Walk example in Section~\ref{sec:CAPI-Experiments-1DChainWalk}, except that here we deal with a continuous state space, so we use an approximate value iteration algorithm RFQI instead of the exact value iteration.

To derive $\pi_{k+1}$ from $\hat{Q}^{\pi_{k}}$, we use action-gap-weighted KNN-CAPI formulation as follows.
At any state $x$, KNN-CAPI chooses an action that minimizes the CAPI's empirical loss~\eqref{eq:CAPI-emp-loss} in the $\kappa$-neighbourhood of $x$.
More concretely, suppose that the state space $\XX$ is endowed with a norm $\norm{\cdot}$, e.g., the $l_2$-norm for $\XX \subset \Real^d$.
For some positive integer $\kappa$, let $\mathrm{N}_\kappa(x)\subseteq \Dn^{(k)}$ be the set of $\kappa$-closest points to $x$ in $\Dn^{(k)}$  (w.r.t. the norm of $\XX$), breaking ties deterministically.
The KNN-based CAPI policy $\pi_{k+1}(x)$ is then defined as:
\begin{align*}
	 & \pi_{k+1}(x) \leftarrow
	 \argmin_{a \in \AA}
	 \sum_{X_i \in \mathrm{N}_\kappa(x)}	 
	 \gap_{\hat{Q}^{\pi_k }}(X_i) \One{a \neq \pihat(X_i;\hat{Q}^{\pi_{k} })  } 
	 \\ & 
	 \equiv
	 \argmin_{a \in \AA} \sum_{X_i \in \mathrm{N}_\kappa(x)}
	\hat{Q}^{\pi_{k}}(X_i,\pihat(X_i;\hat{Q}^{\pi_{k}} )  ) - \hat{Q}^{\pi_{k}}(X_i,a)
	\\ &
	\equiv
	\argmax_{a \in \AA} \sum_{X_i \in \mathrm{N}_\kappa(x)} 
	\hat{Q}^{\pi_{k}}(X_i,a),
\end{align*}
Similar to Tree-CAPI, this is a very simple rule: pick the action that maximizes the action-value at the data points in the $\kappa$-NN of the query point $x$.
%
%
One could also assign different weights to each point in $\mathrm{N}_\kappa(x)$ as a function of distance to $x$, or more generally, use any local averaging estimator \cite{DevroyeGyorfiLugosi96,Gyorfi02} without much change in the formulation.
The derivation of KNN-CAPI  for $\actionnum > 2$ leads to the same rule.

We implemented the dynamics and the reward function of Mountain-Car task the same as Example 8.2 of~\citet{Sutton98}, and we set the discount factor to $\gamma = 0.98$.
The initial state was chosen uniformly random and the data was collected by a uniformly random policy. 
For data collection, we used trajectories with the length of at most 100 steps (or if the episode is terminated by reaching the goal).
To evaluate the policy, we let the agent go for at most 200 steps. If it did not reach the goal by then, 200 was reported as the length of the episode.

In our implementation of RFQI~\citep{FarahmandACC09}, we used a Gaussian kernel $\kfun(x_1,x_2) = \exp \left(-\frac{\norm{x_1 - x_2}^2}{2 \sigma^2} \right)$ with $\sigma^2 = 10^{-2}$.
The regularization coefficient was chosen as $\lambda = \frac{0.01}{n}$, in which $n$ is the number of samples.
We also used the sparsification method of~\citet{EngelMannorMeir2004} to reduce the computational complexity.
The parameters were selected after some trial-and-error, but were not systematically optimized. We use the same parameters of RFQI when it is used as the PolicyEval of KNN-CAPI.
In all experiments, the number of runs was set to 50. The shown error bars are one standard error around the sample average.

Figure~\ref{fig:CAPI-MountainCar-KNNCAPIvsRFQI-Main} shows the expected number of steps to reach the goal as well as the expected return per episode for both RFQI and KNN-CAPI with $K = 75$. 
The value of $K$ is selected based on another experiment that we will shortly present.
%
%
Even though both approaches are quite sample efficient as they learn a reasonable policy (i.e., a one that takes less than 60 steps to reach the goal) in a matter of a few thousand of samples or even less, KNN-CAPI outperforms RFQI, especially in the small-sample regime.
This is because CAPI benefits from the regularities of the policy space while RFQI, or any other purely value-based approach, is oblivious to it.

\subsubsection{Effect of $K$ in KNN-CAPI}
The value of $K$ in KNN-CAPI implicitly determines the underlying policy space $\Pi$. Thus it is interesting to see the effect of $K$ on the performance of the algorithm.
Figure~\ref{fig:MC-KNNCAPI-VaryingK-Steps} depicts the expected number of steps in each episode as well as the expected return as a function of $K$.
It shows the effect of $K$ at various sample size regimes. We see that when the number of samples is small, the choice of $K$ makes a big difference, but even in large-sample regime it has a noticeable effect.
The existence of an optimum shows that in order to benefit from the regularity of policy, the policy space should be chosen properly and problem-dependently.
This is a model selection problem, which is beyond the scope of this paper (cf. \citet{FarahmandSzepesvariMLJ11} for the discussion of the model selection for sequential decision-making problems).


\begin{figure*}[tb]
\centering
  \subfigure[Steps to goal]{ 
   \includegraphics[width=0.45 \linewidth]{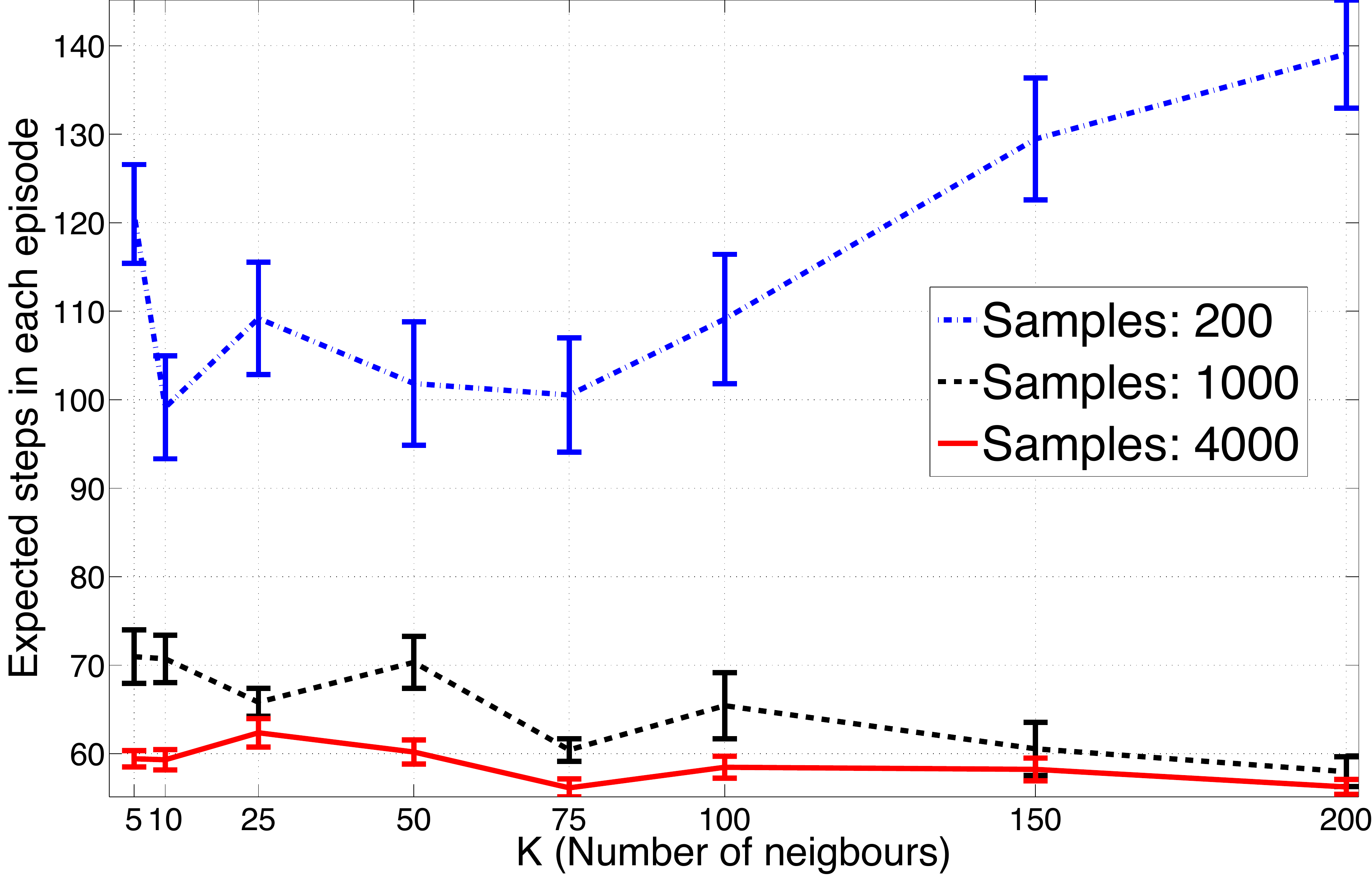}
    }
  \subfigure[Return]{ 
   \includegraphics[width=0.45 \linewidth]{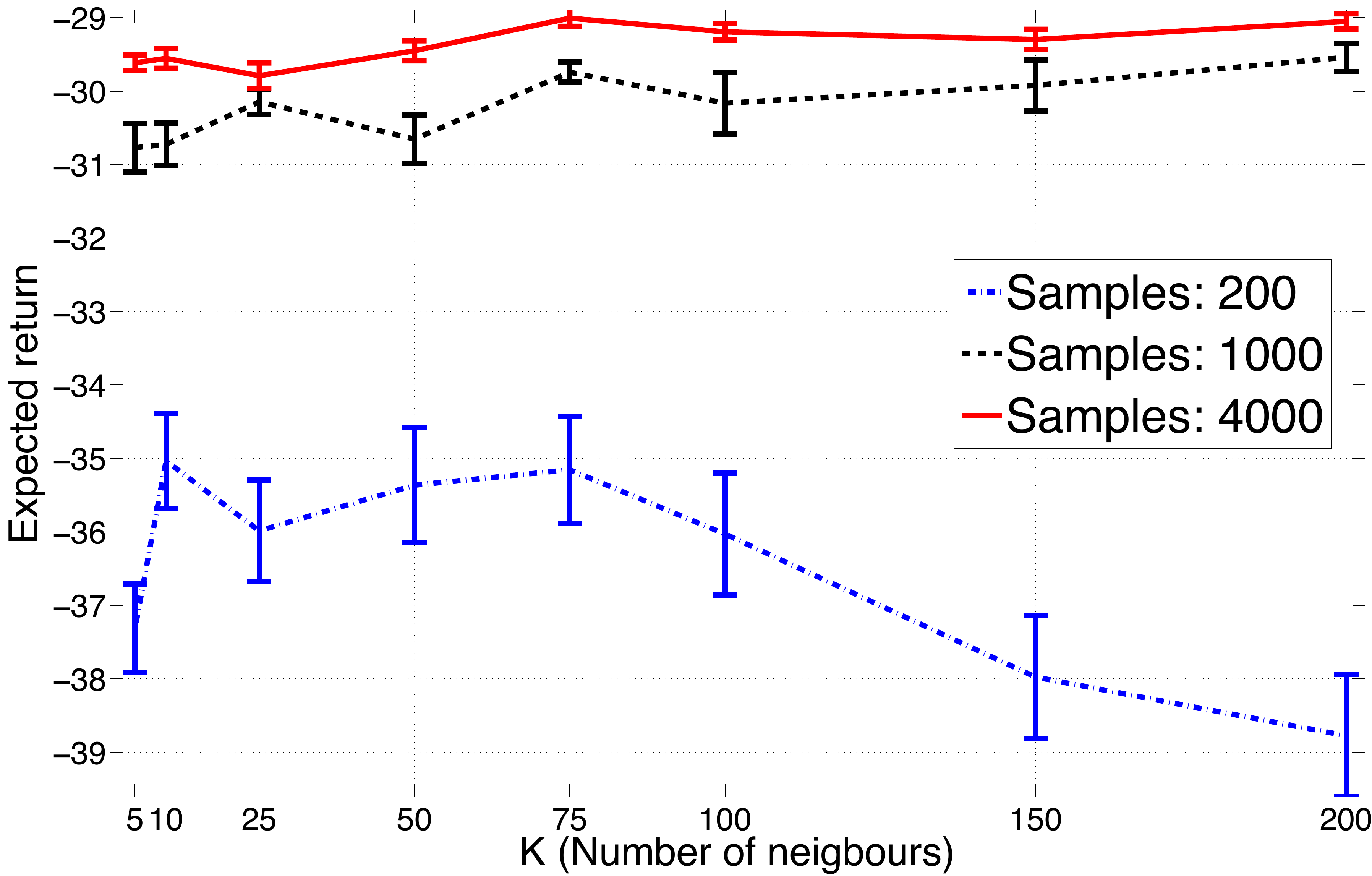}
    }
 \caption{(Mountain-Car) The expected (a) number of steps to goal and (b) return in each episode as a function of $K$ in KNN-CAPI. The error bars show one standard error over 50 runs.}
 \label{fig:MC-KNNCAPI-VaryingK-Steps}
\end{figure*}

\subsection{Pole Balancing}

\begin{figure*}[tb] 
\centering
\subfigure[Number of steps]{ 
\centering
   \includegraphics[width= 0.3\linewidth]{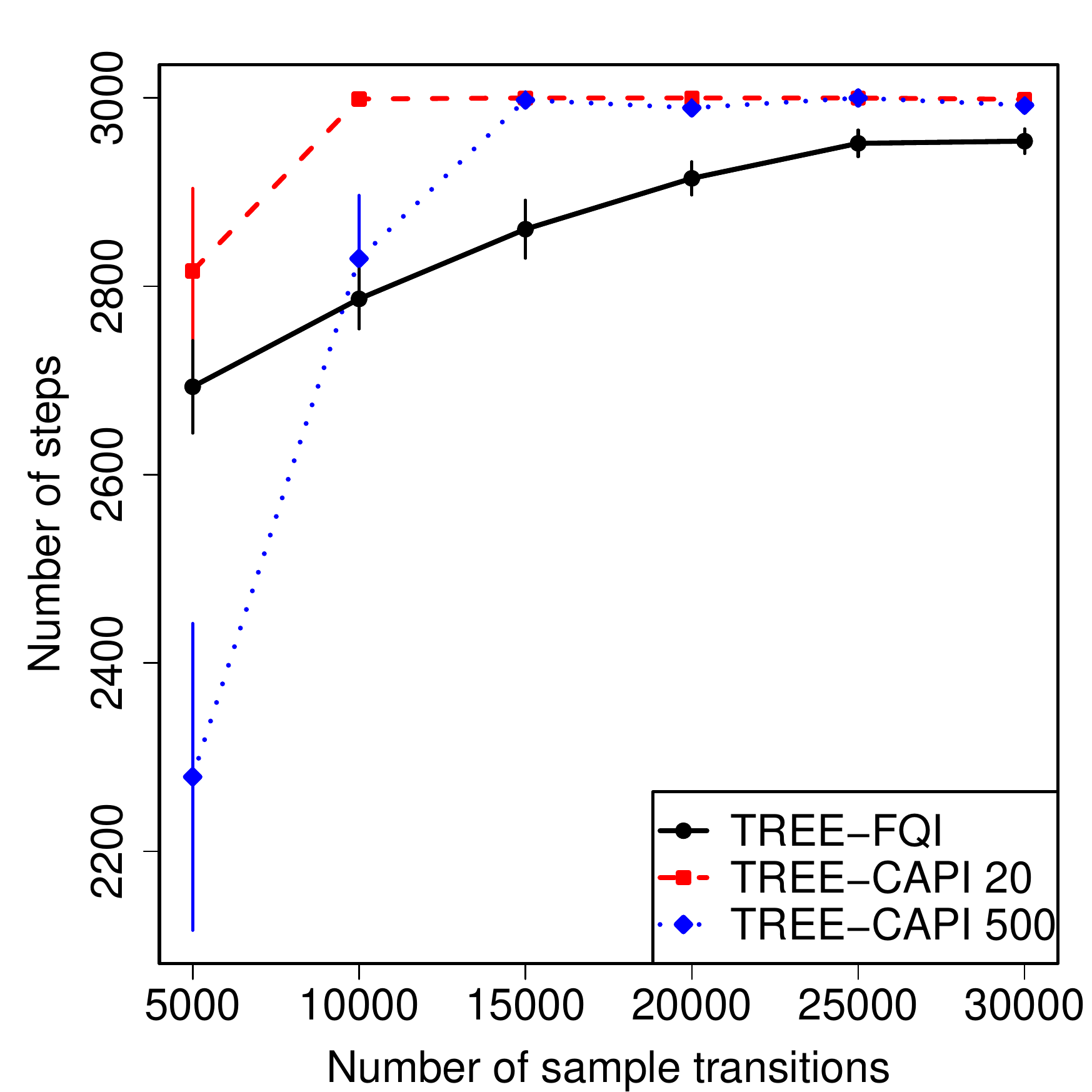}
    }
\subfigure[Return]
   {
   \centering
   \includegraphics[width= 0.3\linewidth]{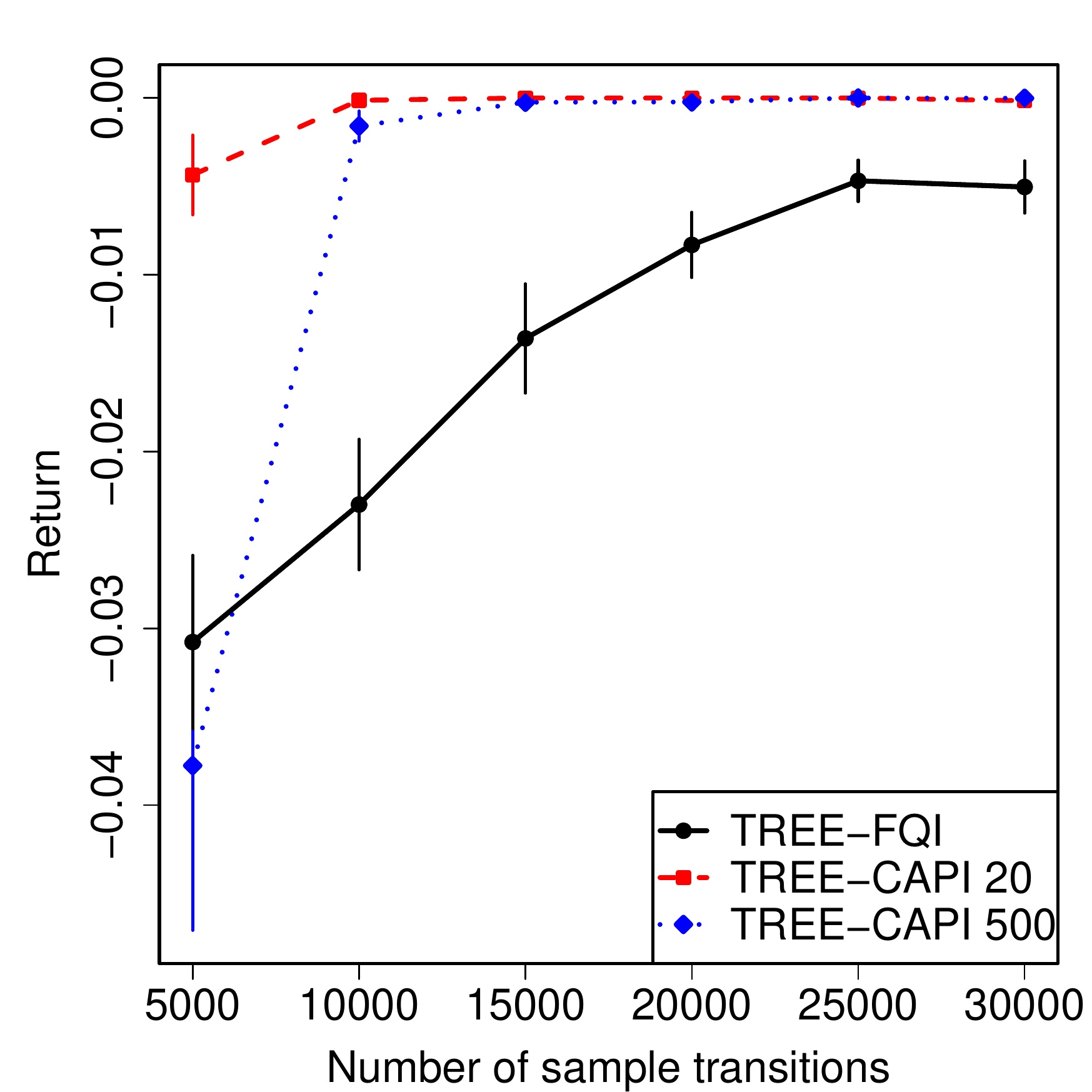}
   }	
   \subfigure[Time]
   {
   \centering
   \includegraphics[width= 0.3\linewidth]{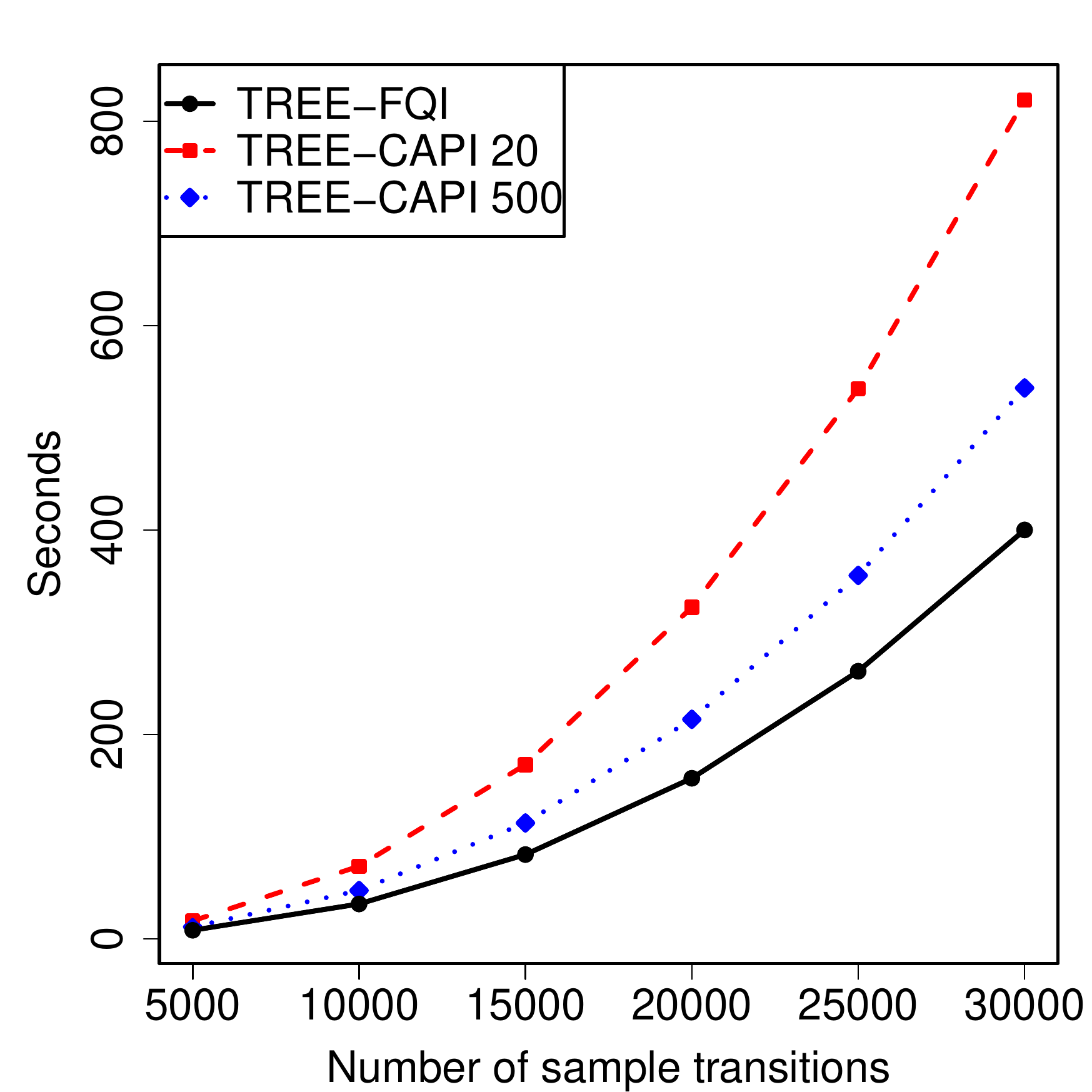}
   }

\subfigure[Number of steps]{ 
\centering
   \includegraphics[width= 0.3\linewidth]{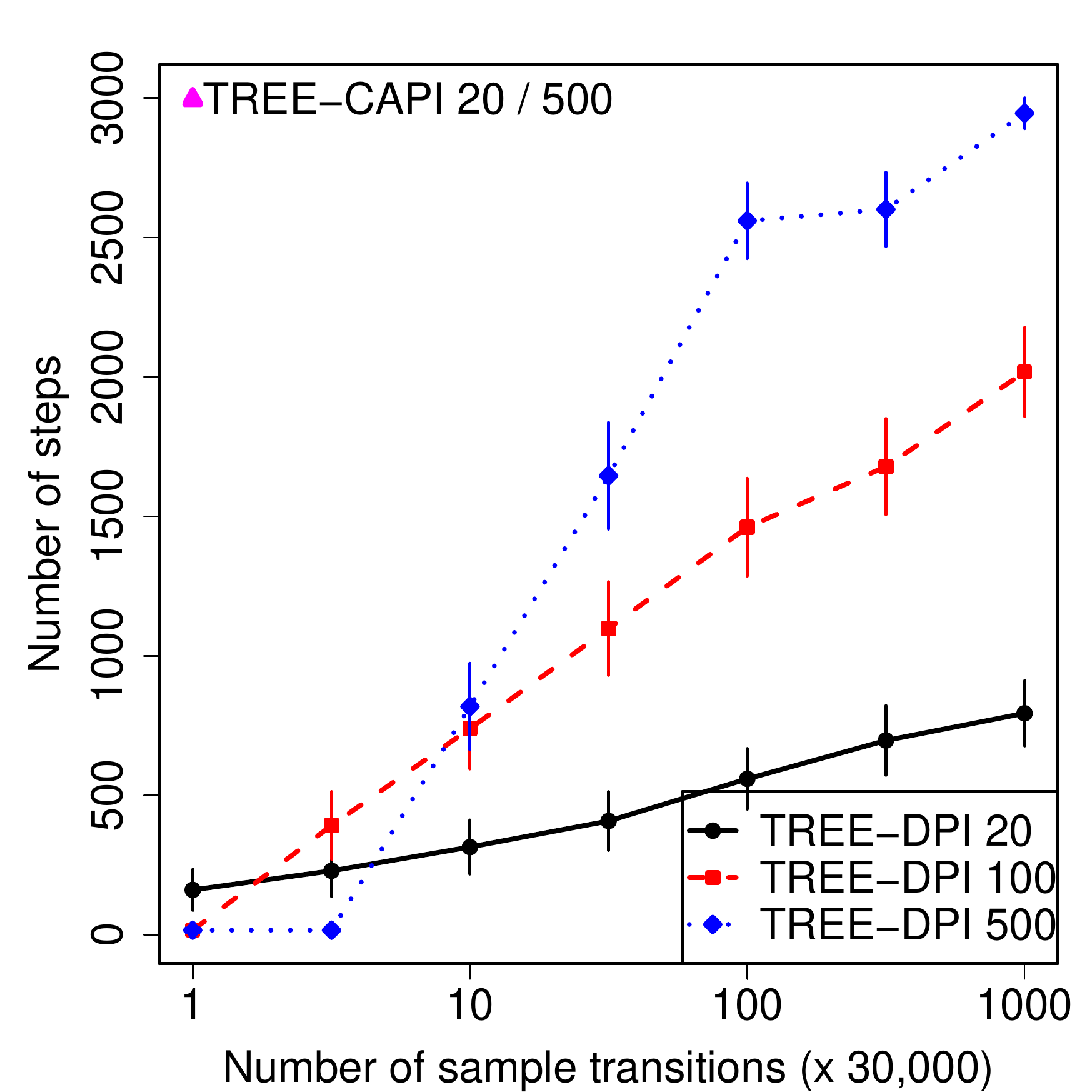}
    }
\subfigure[Return]
   {
   \centering
   \includegraphics[width= 0.3\linewidth]{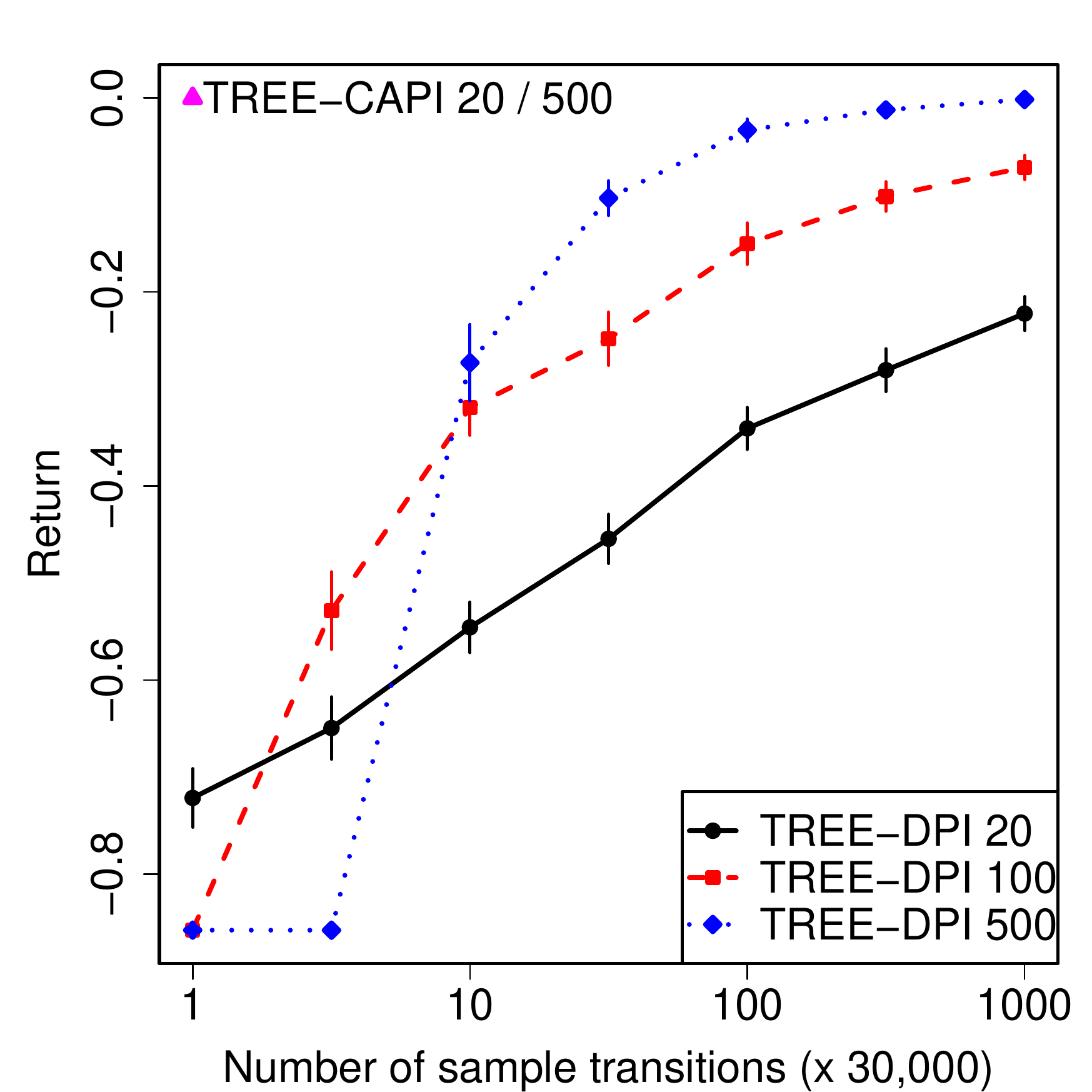}
   }	
   \subfigure[Time]
   {
   \centering
   \includegraphics[width= 0.3\linewidth]{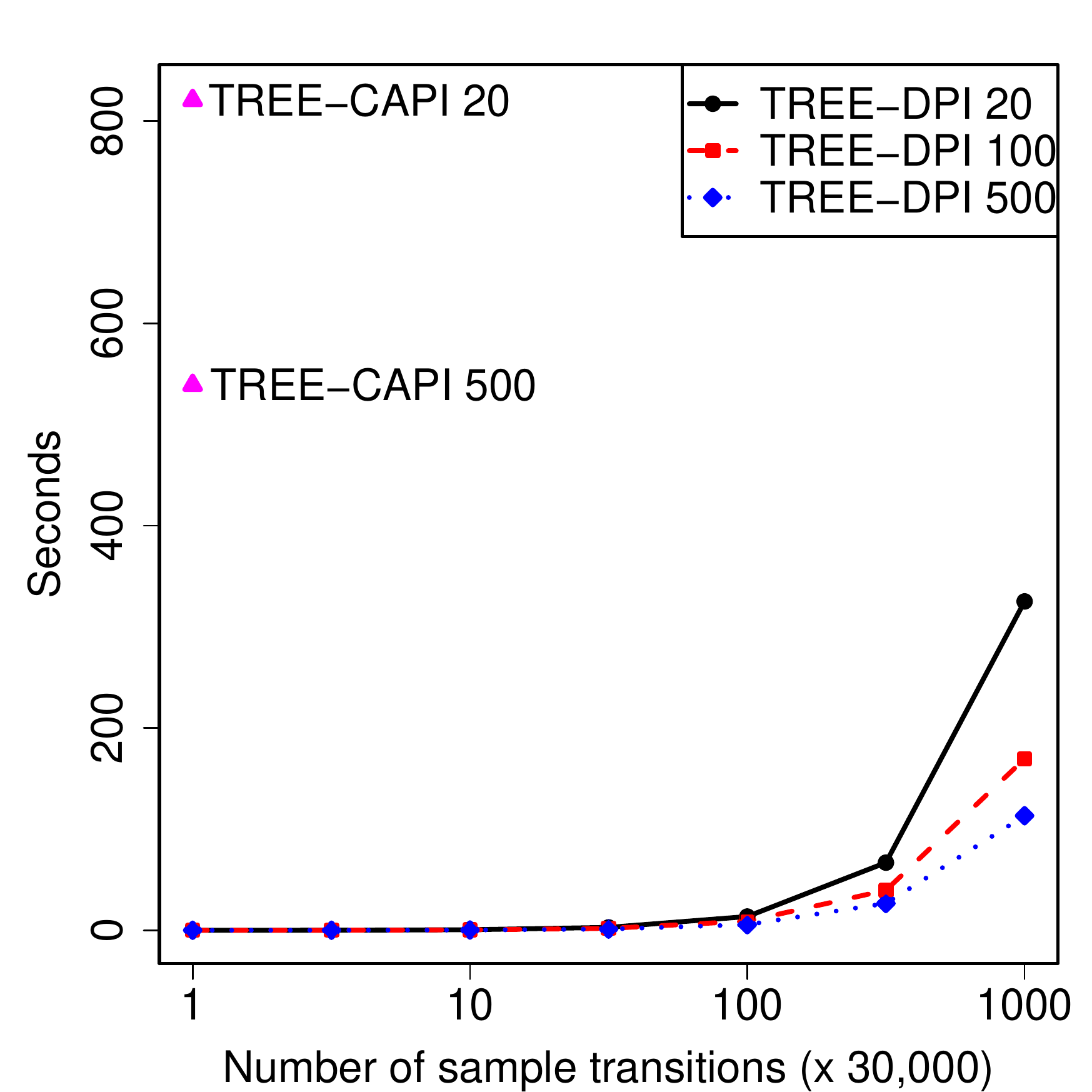}
   }
   
 \caption{(Pole Balancing) Comparing the expected number of (a) number of steps, (b) return in each episode, and (c) computation time for Tree-based CAPI vs. Tree-based Fitted Q-Iteration as a function of sample size.
Graphs (d), (e), and (f) compare Tree-based DPI (rollouts for PolicyEval) with Tree-based CAPI (Tree-FQI for PolicyEval). The error bars show one standard error over 50 runs.}
\label{fig:CAPI-PoleBalancing-TreeCAPIvsTreeFQI-Main}
\end{figure*}

We now turn to the pole-balancing problem, which is particularly interesting in the context of CAPI as it is possible to achieve good performance in this task with relatively simple policies~\cite{Wieland1991}.

For the pure value-based approach, we use Extra Trees Fitted Q-Iteration~\citep{Ernst05} (denoted by Tree-FQI), a state-of-the-art approximate value iteration algorithm, with the choice of 30 trees and
the minimum number of points required to split a node as $\mnec=20$.
We compare this value-based method with Tree-CAPI that uses the same Extra Tree Fitted Q-Iteration algorithm for policy evaluation (which is used, as before, only for policy evaluation, without improvement) and represents policies by an ensemble of $30$ trees.
The Extra Trees algorithm was adopted to build the trees, but in this case using the estimated action-gap-weighted loss function. 
We chose the minimum number of points required to split a node from the set of $\mnea \in \{20, 500 \}$; note that this number controls the complexity of the policy space.
We also compare performance with an instantiation of the DPI algorithm, which we call Tree-DPI.  It uses rollouts for policy evaluation (as any DPI algorithm) and Extra Trees to represent the policy (with $\mnea \in \{20, 100, 500 \})$.
The rollouts are single trajectories of length at most $50$. We also run Tree-DPI for $5$ iterations. The number of trajectories are chosen such that Tree-DPI uses the same amount of data as Tree-CAPI. Hence the difference between Tree-DPI and Tree-CAPI is only in the policy evaluation step.  In particular, Tree-DPI does not generalize the data using a value function, so it can only exploit policy regularities.

Figure~\ref{fig:CAPI-PoleBalancing-TreeCAPIvsTreeFQI-Main}
shows the expected number of steps balancing the pole (note that we stop the simulation after $3000$ successful steps). We also show the expected return per episode for all algorithms (averaged over $50$ independent runs) and the running time. Tree-CAPI clearly outperforms Tree-FQI (top row), especially when the number of samples is small and $\mnea$ is not very large.
Note that for the sample sizes larger than $15000$, Tree-CAPI with any choice of $\mnea$ we tried solved the task perfectly, in all experiments.
These results confirm that exploiting policy structure can lead to better and more stable results. 
The difference between the sample efficiency of both Tree-CAPI and Tree-FQI compared to Tree-DPI is dramatic: Tree-DPI-500 takes about $3\times 10^7$ samples to achieve the same performance that Tree-CAPI-20 achieved with about $10^4$ samples and Tree-FQI achieved with about $3 \times 10^4$ samples.
This comparison shows that value function regularities can also be exploited, especially when the number of samples is small.
Note that Tree-CAPI is computationally more costly than Tree-DPI. In practice, the choice of algorithm depends on the cost of collecting new samples vs. the cost of computation.  In problems when samples are expensive but computation can be done off-line, CAPI-style algorithms are a powerful choice, as illustrated in these results.

\fi

\section*{Acknowledgment}
This work is financially supported by the Natural Sciences and Engineering Research Council of Canada (NSERC).

\ifCLASSOPTIONcaptionsoff
  \newpage
\fi


\bibliography{MyBib(abbr)}

\end{document}